\def\eqref#1{equation~\ref{#1}}
\def\Eqref#1{Equation~\ref{#1}}
\def\Algref#1{Algorithm~\ref{#1}}
\def\1{\bm{1}}
\DeclareMathAlphabet{\mathsfit}{\encodingdefault}{\sfdefault}{m}{sl}
\SetMathAlphabet{\mathsfit}{bold}{\encodingdefault}{\sfdefault}{bx}{n}
\def\gK{{\mathcal{K}}}
\def\gO{{\mathcal{O}}}
\def\gR{{\mathcal{R}}}
\def\gT{{\mathcal{T}}}
\newcommand{\E}{\mathbb{E}}
\DeclareMathOperator*{\argmax}{arg\,max}
\theoremstyle{plain}
\newtheorem{theorem}{Theorem}[section]
\newtheorem{lemma}[theorem]{Lemma}
\newtheorem{corollary}[theorem]{Corollary}
\theoremstyle{definition}
\newtheorem{definition}[theorem]{Definition}
\newtheorem{assumption}[theorem]{Assumption}
\theoremstyle{remark}
\newtheorem{remark}[theorem]{Remark}
\title{Diminishing Exploration: A Minimalist Approach to Piecewise Stationary Multi-Armed Bandits}
\author{Kuan-Ta Li\textsuperscript{1}, Ping-Chun Hsieh\textsuperscript{2}, Yu-Chih Huang\textsuperscript{1} \\
\textsuperscript{1}Institute of Communications Engineering\\
\textsuperscript{2}Department of Computer Science\\
National Yang Ming Chiao Tung University, Hsinchu, Taiwan\\
\texttt{\{oliver58972.ee12,pinghsieh,jerryhuang\}@nycu.edu.tw} \\
}
\begin{document}

\maketitle

\begin{abstract}
    The piecewise-stationary bandit problem is an important variant of the multi-armed bandit problem that further considers abrupt changes in the reward distributions. The main theme of the problem is the trade-off between exploration for detecting environment changes and exploitation of traditional bandit algorithms. While this problem has been extensively investigated, existing works either assume knowledge about the number of change points $M$ or require extremely high computational complexity. In this work, we revisit the piecewise-stationary bandit problem from a minimalist perspective. We propose a novel and generic exploration mechanism, called diminishing exploration, which eliminates the need for knowledge about $M$ and can be used in conjunction with an existing change detection-based algorithm to achieve near-optimal regret scaling. Simulation results show that despite oblivious of $M$, equipping existing algorithms with the proposed diminishing exploration generally achieves better empirical regret than the traditional uniform exploration.
\end{abstract}
\section{Introduction}
\label{sec:intro}
The multi-armed bandit (MAB) problem, a classic formulation of online decision making, involves a decision-maker facing a set of arms with unknown reward distributions, and the challenge is to determine a learning strategy that maximizes the cumulative reward. 
MAB encapsulates the fundamental trade-off between exploiting the current known best arm for immediate reward and exploring other arms for potentially discovering better ones. 
Given the prevalence of such exploration-exploitation dilemma in practice, MAB has served as the abstraction of various real-world sequential decision making problems, such as recommender systems \citep{lu2010contextual}, communication networks \citep{gupta2019link,hashemi2018efficient}, and clinical trials \citep{aziz2021multi}.
In the MAB literature, there are two popular frameworks, namely the stochastic bandits and the adversarial bandits \citep{lattimore2020bandit}: (i) In a standard stochastic bandit model \citep{lai1985asymptotically,auer2002finite}, the rewards of each arm are drawn independently from its underlying reward distribution, which is assumed stationary (i.e., remains fixed throughout the learning process). (ii) In an adversarial model, the reward distribution of each arm is determined by an adversary and could change abruptly after each time step.
To extend and unify the above two frameworks, the \textit{piecewise-stationary bandit} \citep{yu2009piecewise} incorporates \textit{change points} into the bandit model, where the reward distribution of each arm could vary abruptly and arbitrarily at each (unknown) change point and remain fixed between two successive change points.
Accordingly, the piecewise-stationary bandit framework serves as a more realistic setting for a broad class of applications which are neither fully stationary nor fully adversarial, such as recommender systems \citep{xu2020contextual} and dynamic pricing systems \citep{yu2009piecewise}.



Piecewise-stationary bandit has been extensively studied from various perspectives, including the passive methods, e.g., forgetting via discounting \citep{kocsis2006discounted} or a sliding window \citep{garivier2011upper}, and the active methods that leverage change-point detectors, e.g., \citep{liu2018change,cao2019nearly}. 
A more comprehensive survey is deferred to Section \ref{sec:intro:related}.
Despite the rich literature, the existing approaches suffer from the following issues: (i) \textit{Required knowledge of the number of change points}: To adapt exploration to the change frequency, most of the existing works require some tuning based on the knowledge about the number of change points (denoted by $M$ throughout this paper), which could be rather difficult to obtain or estimate in practice.
{(ii) \textit{High computational or algorithmic complexity}: 
AdSwitch \citep{auer2019adaptively} and its enhanced versions \citep{suk2022tracking,abbasi2023new} have been proposed to achieve nearly optimal regret without knowing the number of changes by adopting an elimination approach. However, these approaches are computationally costly as they either relies on a large number of calls of detection mechanism \citep{auer2019adaptively}. On the other hand, MASTER \citep{pmlr-v134-wei21b} serves as a generic black-box approach utilizes multiple hierarchical instances to tackle the issue of unknown number of changes and achieve optimal regret guarantees. This additional algorithmic complexity can lead to high overhead and incur high regret, especially for a small number of segments, and this result is presented in our experiments. These motivate the need for a minimalist design for piecewise-stationary MAB.

{\color{blue}
    \begin{table*}[t]
    \caption{A summary of the regret bounds of various algorithms (R.B.: Regret Bound, S.K.: Segment Knowledge, (DE): Diminishing exploration version, (DE)$^{+}$: Diminishing exploration extension version, P: Passive method, A: Active method, E: Elimination approach, M: Multiple instances). The notation $S$ is the total number of times the optimal arm switches to another.} 
    \label{tab:1}
    \begin{center}
    \begin{small}
    \begin{sc}
    \resizebox{\linewidth}{!}{
    \begin{tabular}{l|lcllc}
    \toprule
    Alg  & type & S.K. free & Complexity & R.B. $\tilde{\mathcal{O}}\left(\cdot\right)$ &Reference \\
    \midrule
    D-UCB               & P    &\colorbox{white}{\XSolidBrush}     &\colorbox{white}{$\mathcal{O}\left(KT\right)$} & \colorbox{white}{$\sqrt{MT}$}     & \citep{kocsis2006discounted}\\
    SW-UCB              & P    &\colorbox{white}{\XSolidBrush}     &\colorbox{white}{$\mathcal{O}\left(KT\right)$} & \colorbox{white}{$\sqrt{MT}$}     & \citep{garivier2011upper}\\
    D-TS                & P    &\colorbox{white}{\XSolidBrush}     &\colorbox{white}{$\mathcal{O}\left(KT\right)$} & \colorbox{white}{$\sqrt{MT}$}     & \citep{qi2023discounted}\\
    \midrule
    AdSwitch            & E    & \colorbox{white}{\Checkmark}      &\colorbox{white}{$\mathcal{O}\left(KT^{4}\right)$} & \colorbox{white}{$\sqrt{MT}$} & \citep{auer2019adaptively}\\
    META         & E    & \colorbox{white}{\Checkmark}      &\colorbox{white}{$\mathcal{O}\left(KT^2\right)$}                             & \colorbox{white}{$\sqrt{ST}$} & \citep{suk2023tracking}\\
    ArmSwitch     & E    & \colorbox{white}{\Checkmark}      &\colorbox{white}{$\mathcal{O}\left(K^2T^2\right)$}            & \colorbox{white}{$\sqrt{ST}$} & \citep{abbasi2023new}\\    
    \midrule
    Master        &M & \colorbox{white}{\Checkmark}      &\colorbox{white}{$\mathcal{O}\left(KT\right)$} &\colorbox{white}{$\min\left\{\sqrt{MT},\Delta^{1/3}T^{2/3}+\sqrt{T}\right\}$} &\citep{pmlr-v134-wei21b}\\
    \midrule
    CUSUM-UCB           & A    & \colorbox{white}{\XSolidBrush}   &\colorbox{white}{$\mathcal{O}\left(KT^{2}\right)$}   & \colorbox{white}{$\sqrt{MT}$} & \citep{liu2018change}  \\
    M-UCB               & A    & \colorbox{white}{\XSolidBrush}   &\colorbox{white}{$\mathcal{O}\left(KT\right)$}       & \colorbox{white}{$\sqrt{MT}$} & \citep{cao2019nearly} \\
    GLR-klUCB           & A    & \colorbox{white}{\Checkmark}     &\colorbox{white}{$\mathcal{O}\left(KT^{2}\right)$}   & \colorbox{white}{$\sqrt{MT}$} &\citep{besson2022efficient}  \\
    \textbf{M-UCB {(DE)}}        & A    & \colorbox{yellow}{\Checkmark}     &\colorbox{yellow}{$\mathcal{O}\left(KT\right)$}       & \colorbox{yellow}{$\sqrt{MT}$} &Ours\\
    GLR-UCB {(DE)}      & A    & \colorbox{white}{\Checkmark}     &\colorbox{white}{$\mathcal{O}\left(KT^{2}\right)$}   & \colorbox{white}{$\sqrt{MT}$} &Ours\\
    \textbf{M-UCB {(DE)$^{+}$}}  & A    & \colorbox{yellow}{\Checkmark}     &\colorbox{yellow}{$\mathcal{O}\left(KT\right)$}       & \colorbox{yellow}{$\sqrt{ST}$} &Ours\\
    GLR-UCB {(DE)$^{+}$}& A    & \colorbox{white}{\Checkmark}     &\colorbox{white}{$\mathcal{O}\left(KT^{2}\right)$}   & \colorbox{white}{$\sqrt{ST}$} &Ours\\

    \bottomrule
    \end{tabular}
    }
    \end{sc}
    \end{small}
    \end{center}
    \vskip -0.1in
    \end{table*}
}

\begin{wrapfigure}[15]{r}{0.33\textwidth}
\hspace{-0.5cm}
\begin{subfigure}{0.5\textwidth}
    \vspace{-2cm}
    \includegraphics[width=0.8\textwidth]{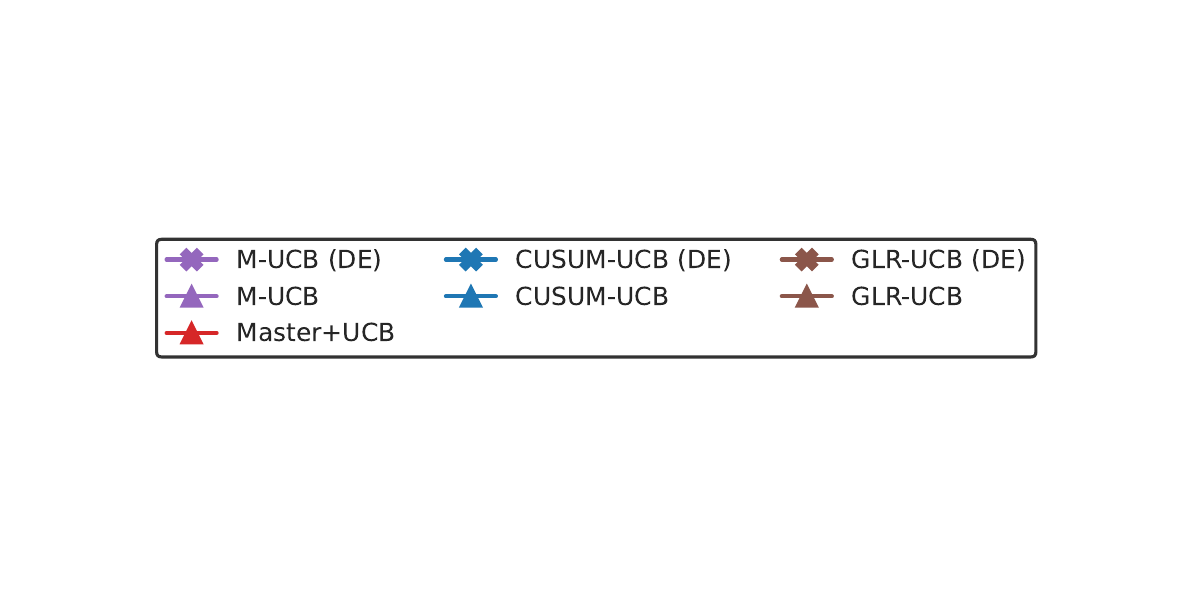}
\end{subfigure}
\begin{subfigure}{0.33\textwidth}
    \vspace{-1.5cm}
    \includegraphics[width=\textwidth]{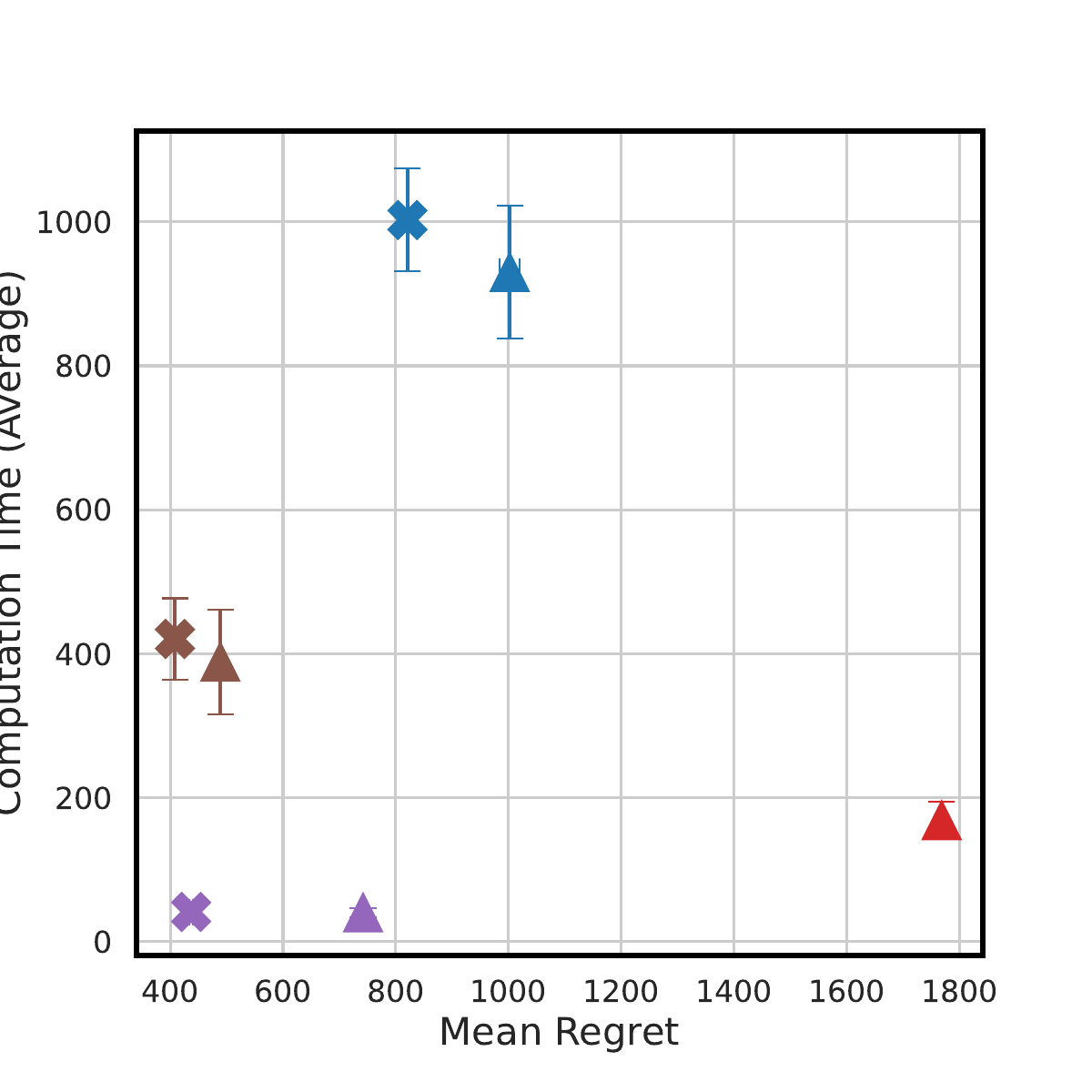}
\end{subfigure}
\caption{Regret and computation times.}
\label{fig:Computation time and regret}
\end{wrapfigure}

In this paper, we answer the above question through the lens of \textit{diminishing exploration}.
Our key insight is that one could achieve a proper trade-off between the detection delay and the regret incurred by exploration if the amount of exploration is configured to decrease with the \textit{elapsed time since the latest detection}, even without the knowledge of $M$.
Specifically, the main contribution of this work is to address the piecewise stationary multi-armed bandit problem from a minimalist perspective, reflected in several key aspects:\\ \textbf{Conceptual Simplicity of the Algorithm}: The proposed algorithm is conceptually very simple, which only involves equipping an active method with a novel diminishing exploration mechanism. This allows it to be flexibly used alongside {\it any} active method, contributing to a clear design logic for the DE+CD+Base algorithm. \\\textbf{Minimal Knowledge Requirement}: The algorithm operates with minimal knowledge of the environment, requiring no prior information about the number of change points $M$, and still can achieve a nearly-optimal regret bound of $\tilde{\mathcal{O}}(\sqrt{MKT})$ due to its ability to automatically adapt to the environment. This balance of minimal assumption and strong performance highlights the algorithm’s ability to provide excellent regret results numerically.\\ \textbf{Low Complexity}: The algorithm maintains one of the lowest possible complexities, designed without adding extra computational burden. The exploration mechanism is efficiently scheduled, requiring only the determination of when to initiate each exploration phase.

To support our statements, we provide evidence in Figure~\ref{fig:Computation time and regret}, where the mean regret and computation time across different algorithms are compared. It is shown that the proposed diminishing exploration together with M-UCB can achieve the best performance in terms of both regret and complexity. This is an initial observation, and we will discuss it in more detail in Section~\ref{sec:sim}.


This paper is summarized as follows: (i) We revisit piecewise-stationary bandit problems without the knowledge of the number of changes through the lens of diminishing exploration, which is parameter-free, computationally efficient, and compatible with various change detection methods and bandit algorithms. (ii) We provide a general form that allows any change detector to be combined with diminishing exploration and formally show that M-UCB and GLR-UCB equipped with the proposed diminishing exploration scheme under a properly chosen scheduler enjoy $\tilde{\gO}(\sqrt{MKT})$ regret bound. Therefore, the proposed algorithm is nearly optimal in terms of dynamic regret without knowing $M$. (iii) Through extensive simulations, we corroborate the regret performance of the proposed algorithm and show that it outperforms the existing benchmark methods in empirical regret. 

A summary of the algorithms, which will be reviewed in what follows, and their performance, along with the one proposed in this work, is provided in Table~\ref{tab:1}. In this table, we highlight the performance of the proposed M-UCB (DE) as evidence of our claim of being an minimalist approach. As shown in the table, it is a nearly optimal algorithm with low computational complexity. In addition, it requires the least knowledge about the environment, offering versatility and ease of extension to M-UCB (DE)$^{+}$. Last but not least, the performance of GLR-UCB (DE) and its extension GLR-UCB (DE)$^{+}$ are also provided as an example to demonstrate that the proposed DE can be used in conjunction with active methods, other than M-UCB.

\subsection{Related Work}
\label{sec:intro:related}

\textbf{{Piecewise-Stationary Bandits With Knowledge of Number of Changes}}. The existing bandit algorithms for the piecewise-stationary setting could be largely divided into two categories: (i) \textit{Passive methods}: The forgetting mechanism is one widely adopted technique to tackle piecewise stationarity without explicitly detecting the change points. For example, Discounted UCB \citep{kocsis2006discounted} and Sliding-Window UCB \citep{garivier2011upper} are two important variants of UCB-type algorithms with respective forgetting mechanism, and they both have been shown to achieve ${\gO}(K\sqrt{MT}\log T)$ dynamic regret. Moreover, \citep{raj2017taming} propose Discounted Thompson Sampling (DTS), which adapts the discounting technique to the Bayesian setup and enjoys good empirical performance despite the lack of any theoretical guarantee. {Subsequently, \citep{qi2023discounted} provides valuable insights into DTS with theoretical guarantees, demonstrating that the DTS method can achieve $\mathcal{O}(K\sqrt{MT}\log^{2}{T})$ regret.} 
By adapting the methods originally designed for adversarial bandits, RExp \citep{besbes2014stochastic} offers another passive strategy by augmenting the classic Exp3 algorithm \citep{auer2002nonstochastic} with restarts and achieve $\gO((K\log K V_T)^{1/3}T^{2/3})$ regret, where $V_T$ denotes the total variation budget up to $T$.
(ii) \textit{Active methods}: MAB algorithms augmented with a change-point detector have been explored quite extensively. 
One example is the Windowed-Mean Shift Detection (WMD) algorithm \citep{yu2009piecewise}, which offers a generic framework of combining change detectors and standard MAB methods.
That said, WMD is designed specifically for the setting with additional side information about the rewards of the unplayed arms and hence is not applicable to the standard piecewise-stationary setting.
\citet{allesiardo2015exp3} propose Exp3.R, which augments Exp3 with a change detector that resets Exp3 and thereby achieves $\gO(NK\sqrt{T\log T})$ with $N$ denoting the number of changes of the best arm ($N=M$ in the worst case). 
More recently, \citet{liu2018change} propose change-detection based UCB (CD-UCB), which combines UCB method with off-the-shelf change detectors, such as the classic cumulative sum (CUSUM) procedure and Page-Hinkley test. 
\citet{cao2019nearly} propose M-UCB, which augments UCB with a simple change detector based on the estimated mean rewards.
Both CD-UCB and M-UCB could achieve the ${\gO}(\sqrt{KMT \log T})$ regret bound.
On the other hand, similar ideas have also been studied from a Bayesian perspective \citep{mellor2013thompson,alami2017memory}.
However, all the methods described above rely on the assumption that $M$ is known.
Moreover, piecewise-stationary bandit has also been studied in the constrained setting \citep{mukherjee2022safety} and the contextual bandit setting \citep{chen2019new,zhao2020simple}.

\textbf{{Piecewise-Stationary Bandits Without Knowledge of Number of Changes}}. Among the existing works, AdSwitch \citep{auer2019adaptively} and GLR-klUCB \citep{besson2022efficient} are the most relevant to ours as they also obviate the need for the knowledge of $M$. Specifically, AdSwitch achieves ${\gO}(\sqrt{KMT \log T})$ regret via an elimination approach, but at the expense of a high computational complexity incurred by a more sophisticated detection scheme. 
On the other hand, GLR-klUCB offers a more efficient detection framework but could achieve an order-optimal regret bound only in easy problem instances. 
In contrast, our diminishing exploration is meant to acheive optimal regret without the knowledge of $M$ nor strong assumptions on the segment length. 

Recent studies on enhancing the practical effectiveness of change detection-based algorithms have considered addressing significant changes without having a complete restart for every detected change. When the reward distributions evolve while the optimal arm remains stable, a full restart is deemed too conservative. 
For instance, \citep{manegueu2021generalized} proposed a change point algorithm based on empirical gaps between arms. \citep{suk2022tracking} expanded on this by quantifying significant shifts at each step, avoiding reliance on non-stationarity knowledge. They employ a sophisticated method to regularly re-explore suboptimal arms, ensuring optimal guarantees for both piecewise-stationary and variation budget assumptions. \citep{abbasi2023new} also achieved comparable guarantees in scenarios with abrupt changes, albeit with slightly diminished results. \citep{pmlr-v134-wei21b} takes a multi-scale approach and maintains multiple competing instances of the base algorithm, which achieves a good theoretical guarantee.



\section{Problem Formulation}
\label{sec:prelim}

\textbf{Piecewise-Stationary Bandit Environment.} A piecewise-stationary bandit can be described using the tuple 
$\left(\gK,\gT,\left\{f_{k,t}\right\}_{k\in\gK,t\in\gT}\right)$, where $\gK$ represents a set of $K$ arms, $\gT$ represents a set of $T$ time steps, and $f_{k,t}$ represents the reward distribution of arm $k\in\gK$ in time $t\in\gT$. Denote by $X_{k,t}$ the reward provided by the environment at the $t$-th time step if the learner selects the $k$-th arm. This reward is drawn from $f_{k,t}$ independently of the rewards obtained in other time steps $t'\in\gT$. 
At time step $t\in\gT$, the learner selects $A_{t}$, one of the $K$ arms as the action at this time step, and sees a reward $X_{A_{t}, t}$. 

Unlike the stochastic bandit environment, the piecewise-stationary bandit environment has several unknown change points, at which the reward distribution will change. 
Let us define $M$ as the total number of segments in this piecewise-stationary bandit environment. Mathematically, $M$ can be represented as 
\begin{equation}
    M:=1+\sum^{T-1}_{t=1}\1_{\left\{f_{k,t}\neq f_{k,t+1} \mathrm{\ for\ any\ } k \in\gK\right\}},
\end{equation}
where the indicator function $\1_{\left\{f_{k,t}\neq f_{k,t+1} \text{ for any } k \in \gK\right\}}$ represents the occurrence of a change. We denote the time of the $i$-th change point as $\nu_{i}$, for all $i\in \{1,\cdots, M-1\}$ and let $\nu_{0}=0$ and $\nu_M=T$. Moreover, we define $s_{i}:=\nu_{i}-\nu_{i-1}$ as the segment length of each segment $i$.
Within the $i$-th segment, for each $k\in\gK$, the reward distribution $f_{k,t}$ are the same for all $t\in[\nu_{i-1}+1,\nu_i]$; therefore from this point onward, we slightly abuse the notation to simply use $f^{(i)}_{k}$ and $\mu^{(i)}_{k}$ to denote the reward distribution and the corresponding expected value, respectively, for arm $k$ in the $i$-th segment.


\textbf{Regret Minimization.} Similar to \cite{liu2018change,cao2019nearly}, we adopt the \textit{dynamic regret} as the performance 
metric:
\begin{equation}
    \gR\left(T\right):=\sum^{T}_{t=1}\max_{k\in\gK}\E\left[X_{k,t}\right] - \E\left[\sum^{T}_{t=1}X_{A_{t},t}\right].
\end{equation}
In the context of piecewise-stationary bandit, our objective, like in other bandit problems, is to minimize regret. 
Bandit algorithms, such as UCB, are known for solving the tension between exploration and exploitation in stationary bandit problems. The main challenge in piecewise-stationary bandit lies again in the trade-off between exploration and exploitation; however, in the other sense. To illustrate, in each segment, after sufficiently exploring the environment, a traditional bandit algorithm will (perhaps softly) commit to the current known best arm. This commitment is difficult to break in the presence of unnoticed change. Additional exploration can be invested to identify changes for resetting the algorithm, which inevitably introduces additional regret. This gives rise to a new exploration and exploitation trade-off that investigates how much additional exploration should be conducted. Specifically, the more additional exploration, the quicker the changes are detected, leading to a quicker reset of the algorithm. The goal of this work is to solve this tension by proposing a novel exploration mechanism that can strike a perfect balance between the regret incurred by additional exploration and that due from detection delay.


\section{The Proposed Framework: Diminishing Exploration }
\label{sec:alg}
In this section, we provide the proposed algorithm in detail. We note that in most of the active 
algorithms for piecewise-stationary bandit problems, such as \cite{yu2009piecewise,liu2018change,
cao2019nearly}, there is a (periodic or stochastic) uniform exploration scheme which spends a constant fraction of time on exploration for detecting potential changes, together with a traditional algorithm that is capable of attaining near-optimal tradeoff for the traditional bandit problem. We referred to this type of algorithms as a change detection 
(CD)-based bandit algorithm. 
Our proposed algorithm is a novel and generic exploration technique, called diminishing exploration, which can be used in conjunction with a CD-based bandit algorithm.


\subsection{Diminishing Exploration}\label{subsec:DE}
\begin{wrapfigure}{R}{0.55\textwidth}
\vspace{-0.8cm}
  \begin{center}
    \includegraphics[width=0.6\textwidth]{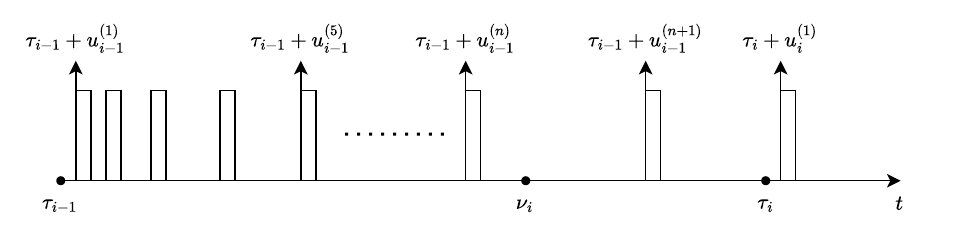}
    \vspace{-20pt}
    \caption{Diminishing exploration.}
    \label{fig:diminishing}
  \end{center}
\end{wrapfigure}
The motivation of the proposed diminishing exploration lies in the following two observations 
about the uniform exploration: 1) The uniform exploration scheme spends a constant fraction of time on exploration, which results in a regret proportional to the configured exploration rate. 2) To determine an exploration rate that achieves the optimal regret scaling, the information about the total number of segments (or change points) is required. To address the above issues, we propose the diminishing exploration scheme as follows.
Let us define $\tau_{i}$ as the $i$-th time when the algorithm alarms a change. In the proposed method, a uniform exploration round starts at $u_{i-1}^{(j)}$ for $j\in\{1, 2, \ldots\}$ with $u_{i-1}^{(1)}=\tau_{i-1}+1$. i.e., the learner chooses to pull the arm $1, 2, \ldots, K$ at time $u_{i-1}^{(j)}, u_{i-1}^{(j)}+1, \ldots, u_{i-1}^{(j)}+K-1$, respectively. The process restarts whenever a new change $\tau_{i}$ is detected. 

We aim to balance the regret resulting from exploration and that associated with the performance of change detection by dynamically adjusting the exploration rate \textit{within a segment}. Let $u_{i-1}^{(j)}$ be the start time of the $j$-th uniform exploration session between two consecutive alarms $\tau_{i}$ and $\tau_{i-1}$. 
In our approach, these sessions are designed in such a way that $u_{i-1}^{(j+1)}-u_{i-1}^{(j)}$ is greater than $u_{i-1}^{(j)}-u_{i-1}^{(j-1)}$. This means that the inter-session time within the same time segment increases with $j$, which in turn results in reduction in the exploration rate. Specifically, for the $i$-th segment, we choose $u_{i}^{(1)}=\left\lceil\left(\alpha-K/4\alpha\right)^{2}\right\rceil$ and
$
    u_{i}^{(j)} = \left\lceil u_{i}^{(j-1)}+\frac{K}{\alpha}\sqrt{u_{i}^{(j-1)}}+\frac{K^{2}}{4\alpha^{2}} \right\rceil,\quad \forall~1\leq i\leq M \textrm,~j\geq 2$, without the knowledge of $M$ and the parameter $\alpha$ will be chosen later. Clearly, we have $u_{i}^{(j-1)}+K < u_{i}^{(j)}$ for every $j\geq 2$; thus, these exploration phases will not overlap. Moreover, it is obvious that the duration between two exploration phases $u_{i}^{(j)}-u_{i}^{(j-1)}=\mathcal{O}(\sqrt{u_{i}^{(j-1)}})$ increases with time as Figure~\ref{fig:diminishing}; hence, the exploration rate decreases. Thus, we term this mechanism {\it diminishing exploration}.

\subsection{Integrating Off-the-Shelf Change Detectors With Diminishing Exploration}
\label{subsec:CD}

The proposed algorithm is given in Algorithm~\ref{alg:main_alg}, which adopts the proposed diminishing exploration (lines 3-8) and executes traditional UCB (lines 9-12) otherwise. Moreover, the algorithm enters the change detection subroutine (lines 17-19) whenever accumulating sufficient observations for an arm (line 18).

Before concluding this section, we reemphasize that although we selected as an example to employ the change detection algorithm in Algorithm~\ref{alg:CD_alg}~\citep{cao2019nearly} and~\ref{alg:glrCD_alg}~\citep{besson2022efficient} in Algorithm~\ref{alg:main_alg} together with the proposed diminishing exploration, the diminishing exploration technique can, in fact, be used in conjunction with any CD-based algorithm.

\begin{wrapfigure}{R}{0.55\textwidth}
\begin{minipage}{0.55\textwidth}
\vspace{-2cm}
\begin{algorithm}[H]
    \caption{CD-UCB with diminishing exploration}\label{alg:main_alg}
    \begin{algorithmic}[1]
        \REQUIRE Positive integer $T,K$ and parameter $\alpha$ 
        \STATE Initialize $\tau\gets 0$, $u\gets \left\lceil\left(\alpha-K/4\alpha\right)^{2}\right\rceil$ and $n_{k}\gets 0\ \forall k \in \mathcal{K}$;
        \FOR{$t = 1,2,\ldots,T$}
            \IF{$u\leq t-\tau <u+K$} 
                 \STATE $A_{t}\gets \left(t-\tau\right)-u+1$ 
            \ELSE
                \IF{$t-\tau = u+K$} 
                    \STATE $u \gets \left\lceil u+\frac{K}{\alpha}\sqrt{u}+\frac{K^{2}}{4\alpha^{2}} \right\rceil $ 
                \ENDIF
                \FOR{$k=1,\ldots,K$}
                    \STATE $\textrm{UCB}_{k}\gets \frac{1}{n_{k}}\sum^{n_{k}}_{n=1}Z_{k,n}+\sqrt{\frac{2\log(t-\tau)}{n_{k}}}$
                \ENDFOR
                \STATE $A_{t}\gets \argmax_{k\in\gK}\textrm{UCB}_{k}$
            \ENDIF
            \STATE Play arm $A_{t}$ and receive the reward $X_{A_{t},t}$.
            \STATE $n_{A_{t}}\gets n_{A_{t}}+1$;$Z_{A_{t},n_{A_{t}}}\gets X_{A_{t},t}$
            \IF{$n_{A_{t}}\geq w$}
                \IF{CD = True}
                    \STATE $\tau\gets t$, $u\gets 1$ and $n_{k} \gets 0\ \forall k \in \mathcal{K}$
                \ENDIF
            \ENDIF
        \ENDFOR
    \end{algorithmic}
\end{algorithm}
\end{minipage}
\end{wrapfigure}

\section{Regret Analysis}
\label{sec:analysis}

In this section, to show the effectiveness of the proposed diminishing exploration, we define two sets of events to capture the behavior of the change point detection algorithm: The false 
alarm events are defined as $F_{i}:=\left\{\tau_{i}<\nu_{i}\right\},~\forall~1\leq i\leq M-1$, and $F_{0}:=\left\{\tau_{0}=0\right\}$; the event that the detection delay of the 
$i$-th change is smaller than $h_i$ is defined as $D_{i}:=\left\{\tau_{i}\leq \nu_{i}+h_{i}\right\},~\forall~1\leq i \leq M-2$, where the choice of $h_{i}$ depends on the underlying CD algorithm. We also define $D_{0}:=\left\{\tau_{0}=0\right\}$, and  $D_{M-1}:=\left\{\tau_{M-1}\leq T\right\}$.
In our regret analysis, we also define the following two quantities $\Delta^{\left(i\right)}_{k} := \max_{\tilde{k}\in\gK}\left\{\mu^{(i)}_{\tilde{k}}\right\}-\mu^{(i)}_{k},~\forall~1\leq i\leq M,~k\in\gK$, and $\delta^{\left(i\right)}_{k} := \left\lvert \mu^{(i+1)}_{k}-\mu^{(i)}_{k} \right\rvert, \quad \forall~1\leq i\leq M-1,~k\in\gK$. Furthermore, let $\delta^{\left(i\right)} := \max_{k \in \mathcal{K}} \delta^{\left(i\right)}_{k}$.


\begin{theorem}\label{thm:regret}
The \Algref{alg:main_alg} can be combined with a CD algorithm, which achieves the expected regret upper bound as follows:
\vspace{-5mm}
\begin{multline} 
\E\left[R\left(1,T\right)\right]\leq \underbrace{\sum^{M}_{i=1}\tilde{C}_{i}}_{(a)}+\underbrace{2\alpha\sqrt{MT}}_{(b)}
    +\underbrace{\sum^{M-1}_{i=1}\E\left[\tau_{i}-\nu_{i}\middle | D_{i}\overline{F}_{i}D_{i-1}\overline{F}_{i-1}\right]}_{(c)}\\
\hspace{+4cm}+\underbrace{T\sum^{M}_{i=1}\mathbb{P}\left(F_{i}\middle| \overline{F}_{i-1}D_{i-1}\right)+T\sum^{M-1}_{i=1}\mathbb{P}\left(\overline{D}_{i}\middle|\overline{F}_{i}\overline{F}_{i-1}D_{i-1}\right)}_{(d)}, \label{eqn:regret_bound}
\end{multline}
where $\tilde{C}_{i}=8\sum_{\Delta^{\left(i\right)}_{k}>0}\frac{\log T}{\Delta^{\left(i\right)}_{k}}+\left(\frac{5}{2}+\frac{\pi^{2}}{3}+K\right)\sum^{K}_{k=1}\Delta^{\left(i\right)}_{k}$.
\end{theorem}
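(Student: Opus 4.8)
The plan is to decompose the horizon into the $M$ segments delineated by the change points $\nu_0,\dots,\nu_M$, and within each segment track three sources of regret: (i) the regret incurred by the built-in UCB phases of \Algref{alg:main_alg} on a correctly-detected segment, (ii) the regret from the diminishing-exploration rounds, and (iii) the ``catastrophic'' regret paid whenever the change detector misbehaves, i.e. when a false alarm $F_i$ fires before $\nu_i$ or when the detection delay exceeds $h_i$ (the event $\overline{D}_i$). A standard way to make this rigorous is to condition on the ``good'' history event $\overline{F}_{i-1}D_{i-1}$ that the previous segment was handled cleanly, and then peel off the contributions one segment at a time, which is exactly the structure visible on the right-hand side of \eqref{eqn:regret_bound}.

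First I would handle term $(a)$: conditioned on $\overline{F}_{i-1}D_{i-1}\overline{F}_i D_i$, the algorithm restarts a fresh UCB instance near $\nu_{i-1}$ and resets again near $\nu_i$, so on the (mostly) stationary stretch in between it behaves like vanilla UCB on a $K$-armed stationary bandit with gaps $\Delta^{(i)}_k$. Invoking the textbook UCB per-segment regret bound gives the $8\sum_{\Delta^{(i)}_k>0}\frac{\log T}{\Delta^{(i)}_k}$ leading term, and the $\left(\tfrac52+\tfrac{\pi^2}{3}+K\right)\sum_k \Delta^{(i)}_k$ additive constant absorbs the usual $\sum 1/t^2$ tail probabilities plus the at-most-$K$ extra pulls from the residual exploration alignment; summing over $i$ yields $\sum_i \tilde C_i$. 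Next, term $(b)$: this is the heart of the ``diminishing exploration'' accounting. In segment $i$, the exploration rounds start at $u_i^{(1)}, u_i^{(2)},\dots$ with the recursion $u_i^{(j)}=\lceil u_i^{(j-1)}+\frac{K}{\alpha}\sqrt{u_i^{(j-1)}}+\frac{K^2}{4\alpha^2}\rceil$. The key algebraic observation is that this recursion is engineered so that $\sqrt{u_i^{(j)}}\approx \sqrt{u_i^{(j-1)}}+\frac{K}{2\alpha}$, i.e. $\sqrt{u_i^{(j)}}$ is (up to ceilings) an arithmetic progression with common difference $K/(2\alpha)$; hence after $\ell$ exploration sessions one has reached time roughly $(\ell K/(2\alpha))^2$, so to cover a segment of length $s_i$ one needs about $\frac{2\alpha}{K}\sqrt{s_i}$ sessions, each costing at most $K$ units of regret, for a per-segment exploration regret of at most $2\alpha\sqrt{s_i}$ (plus lower-order ceiling slack). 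Summing over segments and applying Cauchy--Schwarz, $\sum_{i=1}^M 2\alpha\sqrt{s_i}\le 2\alpha\sqrt{M\sum_i s_i}=2\alpha\sqrt{MT}$, which is term $(b)$.

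The remaining terms are essentially definitional bookkeeping: term $(c)$ is the regret accrued during the detection delay window $[\nu_i,\tau_i]$ on segments that are otherwise handled correctly — on that stretch the algorithm is still committed to the pre-change best arm, so the per-step regret is at most $1$ and the total is at most $\E[\tau_i-\nu_i\mid D_i\overline{F}_iD_{i-1}\overline{F}_{i-1}]$, summed over $i$. Term $(d)$ collects the two bad-event contributions: if $F_i$ occurs (premature alarm) or $\overline{D}_i$ occurs (missed/late detection), we simply upper bound the regret on the entire remaining horizon by $T$ times the probability of that event, conditioned on the good history so far; the union-bound-style telescoping over $i$ of the conditional probabilities $\mathbb{P}(F_i\mid \overline{F}_{i-1}D_{i-1})$ and $\mathbb{P}(\overline{D}_i\mid \overline{F}_i\overline{F}_{i-1}D_{i-1})$ produces exactly the two sums in $(d)$. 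Assembling $(a)$–$(d)$ gives \eqref{eqn:regret_bound}.

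The main obstacle I anticipate is making term $(b)$ fully rigorous in the presence of the ceiling operators and of segment boundaries that do not align with exploration-session boundaries: one must show that the $\lceil\cdot\rceil$'s only inflate $\sqrt{u_i^{(j)}}$ by a bounded amount per step (so the arithmetic-progression approximation for $\sqrt{u_i^{(j)}}$ is valid with an $O(j)$ error that is dominated), and that when a true change occurs mid-segment-of-the-scheduler the count of exploration sessions actually executed before the (correct) restart is still $O(\sqrt{s_i}/\,(K/\alpha))$. A secondary subtlety is that $(b)$ as written has no dependence on $K$ inside the square root even though each session costs $K$ steps — this is only consistent because the inter-session gap also scales like $K/\alpha\cdot\sqrt{u}$, so the number of sessions carries a compensating $1/K$; verifying this cancellation carefully, and checking it still holds after the ceilings, is where the real work lies. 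The choice $u_i^{(1)}=\lceil(\alpha-K/4\alpha)^2\rceil$ is precisely what is needed so that the recursion, started from there, keeps $\sqrt{u_i^{(j)}}$ on the lattice $\{\,j\cdot K/(2\alpha) + \text{const}\,\}$ from the very first step, which closes the induction cleanly.
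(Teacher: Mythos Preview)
Your proposal is correct and follows essentially the same route as the paper: the paper proves term $(b)$ via a lemma showing $N_{\mathrm{DE},k}(\tau_{i-1},\nu_i)\le \frac{2\alpha}{K}\sqrt{\nu_i-\tau_{i-1}}+\frac{3}{2}$ using exactly your arithmetic-progression observation on $\sqrt{u_i^{(j)}}$, then bundles this with the standard UCB per-segment bound into a single ``stationary segment'' lemma, and finally telescopes over $i$ with the same conditional good-event peeling and Cauchy--Schwarz you describe. Your main worry about the ceilings is lighter than you fear: since $\lceil x\rceil\ge x$, dropping the ceilings gives $u_i^{(j)}\ge\bigl(\sqrt{u_i^{(j-1)}}+\tfrac{K}{2\alpha}\bigr)^2$ exactly, so $\sqrt{u_i^{(j)}}\ge \sqrt{u_i^{(1)}}+(j-1)\tfrac{K}{2\alpha}$ with no error term, which is the only direction needed to upper bound the number of exploration sessions.
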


To elaborate, let us look into each term in \eqref{eqn:regret_bound}. As shown in Lemma~\ref{lemma:regret_stat}, term (a) bounds the regret of the UCB algorithm in each stationary segment, given that the CD algorithm successfully detected the previous change.
Term (b) bounds the regret incurred from the diminishing exploration, as shown in Lemma~\ref{lemma:de_regret}. 
The other two terms bound the regrets incurred in the phase of change detection, whose quantities would depend on the underlying CD algorithm. Specifically, term (c) corresponds to the regret associated with the detection delay while term (d) addresses the regret from unsuccessful detection and false alarm. For a more detailed proof, see Appendix~\ref{app:pf_detail}.


\subsection{\bf Integration with change detectors}\label{sec:analysis:Integration CD}
In this section, we will integrate change detectors from M-UCB and GLR-UCB into the framework of diminishing exploration. Through theoretical analysis, we will demonstrate that diminishing exploration can be extended to other change detectors and achieve a nearly optimal regret bound. All the proofs are deferred to Appendix~\ref{app:pf_detail}.

\vspace{-5pt}
{\bf Integration with the change detector of M-UCB.} 
In M-UCB, change detectors are triggered when the sample count of an arm reaches a window size 
$w$. The change detector divides the samples in the window into two halves and compares the difference between the two halves' summations. If the result exceeds a threshold $b$, an alarm is raised. We define $h_{0}:=0$ and choose $h_{i} = \left\lceil w\left(K/2\alpha+1\right)\sqrt{s_{i}+1}+w^{2}/4\left(K/2\alpha+1\right)^{2} \right\rceil$ and make the following assumption:
\vspace{-4pt}
\begin{assumption}\label{ass:minimum_gap}
    The algorithm knows a lower bound $\delta>0$ such that $\delta\leq\min_{i}\max_{k\in\mathcal{K}}\delta^{\left(i\right)}_{k}$.
\end{assumption}
\vspace{-3pt}
Note that Assumption~\ref{ass:minimum_gap} is Assumption 1(b) of \cite{cao2019nearly}, which is required for the M-UCB detector to determine good $w$ and $b$ in regret analysis. It is worth noting that almost all schemes that actively detect changes share similar assumptions; however, different algorithms may impose distinct sets of assumptions. This assumption is mild since $\delta$ may be statistically derived from historical information. Furthermore, even if the lower bound does not hold, and we occasionally encounter changes with expected reward gaps smaller than the assumed $\delta$, such changes may be perceived as too minor to result in significant regret. In Section~\ref{sec:sim}, this fact will be verified through simulation.

With this assumption, we analyze the regret of Algorithm~\ref{alg:CD_alg} with $w$ and $b$ given by
\begin{align}
    w&=\left(4/\delta^{2}\right)\cdot \left[\sqrt{\log\left(2KT^{2}\right)}+\sqrt{\log\left(2T\right)}\right]^{2},\label{eqn:w_fix}\\
    b&=\left[w\log\left(2KT^{2}/2\right)\right]^{1/2}.\label{eqn:b_fix}
\end{align}
\begin{assumption}\label{asm:seg_length}
    $s_{i}= \Omega\left(\left(\log{KT}+\sqrt{K\log{KT}}\right)\sqrt{s_{i-1}}\right)$. 
\end{assumption}
\vspace{-5pt}
In particular, if $s_i = \Theta\left(\left(\log{KT}+\sqrt{K\log{KT}}\right)^{2(1+\epsilon)}\right)$ for every $i$, Assumption~\ref{asm:seg_length} holds.
This assumption essentially posits that the changes are not overly dense, a condition that holds in many practical scenarios. Simple math would then show that given $D_{i-1}$ is true, with this assumption and the proposed diminishing exploration, each arm will have at least $w/2$ observations before and after a change point. Again, we note that similar assumptions are imposed in other algorithms that actively detect changes, while different algorithms may impose different assumptions. This assumption is necessary with our proof technique, which requires every change to be successfully detected with high probability. In our simulations in Section~\ref{sec:sim}, we will demonstrate that when this assumption is violated, all the considered active methods will experience similar performance degradation due to the overly dense changes and our algorithm is not particularly vulnerable. In fact, in our simulation results in Section~\ref{sec:sim}, we show that our algorithms significantly outperform existing active methods, even when this assumption is violated.

\begin{corollary}[Regret bound of M-UCB]\label{cor:regret_MUCB}
    Algorithm~\ref{alg:main_alg} integrated with Algorithm ~\ref{alg:CD_alg} with the parameters in (\ref{eqn:w_fix}) and (\ref{eqn:b_fix}) achieves the expected regret upper bound of $\mathcal{O}(\sqrt{KMT\log{T}})$.
\end{corollary}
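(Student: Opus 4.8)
The plan is to specialize the generic regret decomposition of Theorem~\ref{thm:regret} to the M-UCB change detector and then bound each of the four terms (a)--(d) under Assumptions~\ref{ass:minimum_gap} and~\ref{asm:seg_length}, with $w,b$ set as in \eqref{eqn:w_fix}--\eqref{eqn:b_fix} and the exploration parameter $\alpha$ chosen at the end to balance terms. First I would observe that term (a), $\sum_{i=1}^M \tilde C_i$, is already $\mathcal{O}(\sum_i \sum_{\Delta_k^{(i)}>0}\log T/\Delta_k^{(i)} + MK^2)$; combined with the fact that there are $M$ segments, this contributes at most $\mathcal{O}(MK\log T)$ after replacing per-segment gap-dependent sums by the coarse bound $\sum_k \log T/\Delta_k^{(i)} = \mathcal{O}(K\sqrt{T\log T})$ only when needed — more carefully, one trades off the gap-dependent and gap-free regret so that term (a) is $\mathcal{O}(\sqrt{KMT\log T})$ in the worst case. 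Term (b) is $2\alpha\sqrt{MT}$ by construction, which is already of the target order once $\alpha = \Theta(\sqrt{K\log T})$.

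Next I would handle the detection-related terms (c) and (d), which is where the M-UCB-specific analysis enters. For term (d): the false-alarm probability $\mathbb{P}(F_i\mid \overline F_{i-1}D_{i-1})$ and the missed-detection probability $\mathbb{P}(\overline D_i\mid \overline F_i\overline F_{i-1}D_{i-1})$ must each be shown to be $\mathcal{O}(1/T)$ so that $T$ times their sum over $O(M)$ segments is $\mathcal{O}(M)$. The false-alarm bound follows from a Hoeffding/union bound over the at most $T$ windows, using the choice of $b$ in \eqref{eqn:b_fix}; this is essentially Lemma~2 of \cite{cao2019nearly} but I must re-verify it holds given that diminishing exploration changes \emph{when} windows fill up — the key point is that exploration only ever \emph{adds} forced pulls, so windows still consist of i.i.d.\ samples from a single segment whenever $\overline F_{i-1}D_{i-1}$ holds. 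The missed-detection bound requires that, conditioned on $D_{i-1}$ and under Assumption~\ref{asm:seg_length}, each arm accrues at least $w/2$ samples both before and after $\nu_i$ within a window that straddles the change; the diminishing exploration schedule $u^{(j)}-u^{(j-1)} = \mathcal{O}(\sqrt{u^{(j-1)}})$ guarantees $\Omega(\sqrt{s_i}/\ldots)$ exploration rounds in segment $i$, and Assumption~\ref{asm:seg_length} is exactly what makes $\sqrt{s_i}$ large enough relative to $w \sim \log(KT)/\delta^2$. Then the M-UCB detector fires with probability $1-\mathcal{O}(1/T)$ by another Hoeffding bound using the mean-gap $\delta$ from Assumption~\ref{ass:minimum_gap} and the choice of $w$ in \eqref{eqn:w_fix}.

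For term (c), the detection-delay term, I would show $\E[\tau_i-\nu_i\mid D_i\overline F_iD_{i-1}\overline F_{i-1}] \le h_i$ by definition of $D_i$, and then bound $\sum_{i=1}^{M-1} h_i$. With $h_i = \lceil w(K/2\alpha+1)\sqrt{s_i+1} + w^2/4(K/2\alpha+1)^2\rceil$, using $\sum_i \sqrt{s_i} \le \sqrt{M\sum_i s_i} = \sqrt{MT}$ by Cauchy--Schwarz, the dominant piece is $w(K/2\alpha+1)\sqrt{MT}$; since $w = \Theta(\log(KT)/\delta^2)$ is a constant (in $T$) and $\alpha = \Theta(\sqrt{K\log T})$, this is $\mathcal{O}(\sqrt{K}\cdot\sqrt{\log T/K}\cdot\sqrt{MT}) = \mathcal{O}(\sqrt{MT\log T})$, absorbing constants depending on $\delta, K$ into the $\mathcal{O}$. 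Collecting: (a) is $\mathcal{O}(\sqrt{KMT\log T})$, (b) is $\mathcal{O}(\sqrt{KMT\log T})$ with $\alpha\asymp\sqrt{K\log T}$, (c) is $\mathcal{O}(\sqrt{KMT\log T})$, and (d) is $\mathcal{O}(M) = \mathcal{O}(\sqrt{KMT\log T})$, yielding the claim.

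I expect the main obstacle to be the missed-detection bound inside term (d): one must carefully track, under diminishing exploration rather than fixed-rate exploration, that the \emph{earliest} window after $\nu_i$ which contains $w$ samples of the relevant arm actually straddles the change point with $\ge w/2$ pre-change and $\ge w/2$ post-change samples — this is the role of the "simple math" remark after Assumption~\ref{asm:seg_length} and of the specific constants in $u_i^{(1)}$ and the recursion for $u_i^{(j)}$. The delicate part is that the exploration rate is only $\Theta(1/\sqrt{t})$ deep into a segment, so the argument needs $s_i$ large enough (Assumption~\ref{asm:seg_length}) that $\Omega(\sqrt{s_i})$ exploration rounds have accumulated, and one must confirm the recursion's square-root growth does not let gaps between exploration rounds outpace $w$; once this is established, the rest is standard concentration plus the Cauchy--Schwarz bookkeeping above.
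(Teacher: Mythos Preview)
Your proposal is correct and matches the paper's approach: instantiate Theorem~\ref{thm:regret}, bound the false-alarm and missed-detection probabilities each by $\mathcal{O}(1/T)$ via Hoeffding/McDiarmid plus union bound (the paper's Lemmas~\ref{lemma:prob_fa} and~\ref{lemma:prob_delay}), bound the expected delay trivially by $h_i$ (Lemma~\ref{lemma:exp_delay}), sum the $h_i$ via Cauchy--Schwarz on $\sum_i\sqrt{s_i}$, and set $\alpha \asymp \sqrt{K\log(KT)}$. One minor slip to fix: $w = \Theta(\log(KT)/\delta^2)$ is logarithmic in $T$, not ``a constant (in $T$),'' so carry that factor through the term-(c) arithmetic---the final order is unaffected.
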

\vspace{-5pt}
{\bf Integration with the change detector of GLR-UCB.} In GLR-UCB, the Generalized Likelihood Ratio (GLR) test is employed on the samples to detect changes. i.e., an alarm is raised whenever the GLR statistic exceeds a threshold $\beta$ given by
\begin{equation}
    \beta=2\mathcal{J}\left(\frac{\log{(3T^2)}}{2}\right)+6\log{(1+\log{T})}, \label{eqn:beta}
\end{equation}
where the function $\mathcal{J}$ is defined in Appendix~\ref{app:pf_detail:not_extend:GLRUCB}. Following \citep{besson2022efficient}, we define $h_0:=0$ and choose $h_{i} =\left(\alpha,\epsilon\right) :=\left\lceil 2\left(\frac{4}{\left(\delta^{\left(i\right)}\right)^{2}}\beta+2\right)\left(\frac{K}{2\alpha}+1\right)\sqrt{s_{i}+1}+\left(\frac{4}{\left(\delta^{\left(i\right)}\right)^{2}}\beta+2\right)^{2}\left(\frac{K}{2\alpha}+1\right)^{2} \right\rceil$ and make the following:
    
\begin{assumption}\label{ass:glr_delay_context}
    $\nu_{i}-\nu_{i-1}\geq 2\max\left\{h_{i}, h_{i-1}\right\}$ for all $i\in\left\{1,\ldots,M\right\}$.
\end{assumption}
\vspace{-5pt}
Under this assumption, we prove the following:
\begin{corollary}[Regret bound of GLR-UCB]\label{cor:regret_glrUCB}
    Algorithm~\ref{alg:main_alg} integrated with Algorithm~\ref{alg:glrCD_alg} with $\beta$ in (\ref{eqn:beta}) achieves the expected regret upper bound as $\mathcal{O}(\sqrt{KMT\log{T}})$.
\end{corollary}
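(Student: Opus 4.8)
The plan is to instantiate the general regret decomposition of Theorem~\ref{thm:regret} with the GLR-UCB change detector of Algorithm~\ref{alg:glrCD_alg} and bound each of the four terms (a)--(d) under Assumption~\ref{ass:glr_delay_context}. Terms (a) and (b) are immediate: by Lemma~\ref{lemma:regret_stat}, $\sum_i \tilde C_i = \mathcal{O}(K\sqrt{MT\log T})$ after bounding each $\tilde C_i$ by $\mathcal{O}(K\log T / \delta + K^2)$ and using $M \le T$ together with the standard segment-balancing argument (worst case $s_i = T/M$), while term (b) is exactly $2\alpha\sqrt{MT}$ and we will ultimately set $\alpha = \Theta(\sqrt{K\log T})$ so that it matches the target order $\mathcal{O}(\sqrt{KMT\log T})$.

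The substantive work is in terms (c) and (d), which depend on the detection behaviour of the GLR test. First I would invoke the concentration/delay guarantees for the GLR statistic established in \citep{besson2022efficient}: with the threshold $\beta$ in \eqref{eqn:beta}, the false-alarm probability on any segment is $\mathcal{O}(1/T)$ (so $T\sum_i \mathbb{P}(F_i \mid \cdots) = \mathcal{O}(M)$, a lower-order term), and conditioned on no false alarm and on $D_{i-1}$, the detection delay of the $i$-th change exceeds $h_i$ only with probability $\mathcal{O}(1/T)$, giving $T\sum_i \mathbb{P}(\overline D_i \mid \cdots) = \mathcal{O}(M)$ as well. The key point here is that the diminishing-exploration schedule guarantees, under Assumption~\ref{ass:glr_delay_context} (which forces $\nu_i - \nu_{i-1} \ge 2\max\{h_i,h_{i-1}\}$), that between a successful detection $\tau_{i-1}$ and the next change $\nu_i$ each arm accumulates enough samples for the GLR test's delay bound to be valid — this is the analogue of the ``$w/2$ observations before and after'' reasoning used for M-UCB, and it is what makes $h_i$ a legitimate high-probability delay bound rather than a vacuous one. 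Then term (c) is controlled by $\sum_{i=1}^{M-1} h_i$; substituting the chosen $h_i = \mathcal{O}\big((\beta/(\delta^{(i)})^2)(K/\alpha+1)\sqrt{s_i} + (\beta/(\delta^{(i)})^2)^2(K/\alpha+1)^2\big)$ and using $\beta = \mathcal{O}(\log T)$, $\sum_i \sqrt{s_i} \le \sqrt{MT}$ (Cauchy--Schwarz), and $\sum_i 1 = M$, one gets term (c) $= \mathcal{O}\big((K/\alpha+1)\log T \sqrt{MT}/\delta^2 + \cdots\big)$, which with $\alpha = \Theta(\sqrt{K\log T})$ collapses into $\mathcal{O}(\sqrt{KMT}\,\mathrm{polylog}(T))$, i.e.\ the desired order up to the $\tilde{\mathcal O}$ hiding of logarithmic and $1/\delta$ factors.

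The main obstacle I anticipate is verifying the conditioning carefully in term (c)'s delay bound: the delay guarantee from \citep{besson2022efficient} is stated for a ``clean'' restart where the detector sees i.i.d.\ samples from the post-change distribution, but in our algorithm the samples fed to the detector after $\tau_{i-1}$ are a mixture of exploration pulls and UCB pulls, and the restart time $\tau_{i-1}$ is itself random and may lie slightly before or after $\nu_{i-1}$. I would handle this by conditioning on the event $D_{i-1}\overline F_{i-1}$ (so $\nu_{i-1} \le \tau_{i-1} \le \nu_{i-1}+h_{i-1}$), showing that the exploration schedule still delivers $\Omega(\beta/(\delta^{(i)})^2)$ samples per arm from the $i$-th segment within $h_i$ steps of $\nu_i$ — which is exactly where Assumption~\ref{ass:glr_delay_context} enters, since it guarantees the segment is long enough to absorb both the residual delay $h_{i-1}$ and the diminishing-exploration spacing — and then applying the GLR delay bound to this sub-sample. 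Once this bookkeeping is in place, the rest is the routine arithmetic of summing the $h_i$ and optimizing $\alpha$, which I would not belabor.
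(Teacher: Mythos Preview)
Your proposal is correct and follows essentially the same route as the paper: instantiate Theorem~\ref{thm:regret}, control term~(c) by summing the $h_i$ via Cauchy--Schwarz, control term~(d) by the GLR false-alarm and miss-detection guarantees (which the paper packages as Lemma~\ref{lemma:glr_fa_md}, itself lifted from \citep{besson2022efficient}), and finish by setting $\alpha=\Theta(\sqrt{K\log T})$. One small quibble: your ``segment-balancing'' justification for term~(a) is not what makes $\sum_i\tilde C_i$ lower order---the $\tilde C_i$ do not depend on $s_i$, so the bound is simply $\mathcal{O}(MK\log T)$, which is dominated by $\sqrt{KMT\log T}$ whenever $M\log T=\mathcal{O}(T)$---but this does not affect the argument.
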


\vspace{-8pt}
{\bf Discussion.} In some literature, such as \cite{liu2018change} and \cite{cao2019nearly}, the approach involves finding an exploration rate for uniform exploration that balances regret induced by exploration and detection delay, assuming knowledge of $M$. In GLR-UCB \cite{besson2022efficient}, the exploration rate increases with the number of change detection alarms generated by the algorithm. Compared to other exploration mechanisms, the distinctive feature of {\em diminishing exploration} is its use of a variable exploration rate within the same segment. The greatest advantage of this approach lies in the fact that it does not require knowledge of $M$. 
Moreover, its complexity remains low, and it can be readily applied to other active methods.


\section{Extension to Detection of Optimal Arm Changes}
\label{sec:extension}
Let us define $S$ as the number of \textit{super-segments}, each of which is the time period between two consecutive changes of the optimal arm. Mathematically, $S$ can be represented as 
\begin{equation}
    S:=1+\sum^{T-1}_{t=1}\1_{\left\{\argmax_{k\in\gK}\mu_{k,t}\neq 
    \argmax_{k\in\gK}\mu_{k,t+1}\right\}}
\end{equation}
where the indicator function $\1_{\left\{\argmax_{k\in\gK}\mu_{k,t}\neq \argmax_{k\in\gK}\mu_{k,t+1}\right\}}$ represents the occurrence of the optimal arm changing to another one. We denote the time of the $r$-th occurrence of the optimal arm changing to another one as $\nu_{r}^{*}$, for all $r\in\left\{1,\cdots,S-1\right\}$ and let $\nu_{r}^{*}=0$ and $\nu_{S}^{*}=T$. Moreover, we define $s_{r}^{*} := \nu_{r}^{*} - \nu_{r-1}^{*}$ as the super segment length of each segment $r$, which is the duration for which the optimal arm remains the same.
The last, we define $\tau_{r}^{*}$ as the $r$-th time when the algorithm alarms an optimal arm changing to another one. We have provided Figure~\ref{fig:notation} to visually clarify the differences from Section~\ref{sec:analysis}.

\begin{wrapfigure}{r}{0.4\textwidth}
\begin{minipage}{0.4\textwidth}
\vspace{-0.8cm}
\begin{algorithm}[H]
    \caption{Skipping Mechanism}\label{alg:skip}
    \begin{algorithmic}[1]
        \REQUIRE Two positive integer $n_{k}$ and $n_{k^{*}}$, $n_{k}$ observations $X_{1},\ldots,X_{n_{k}}$ and $n_{k^{*}}$ observations $Y_{1},\ldots,Y_{n_{k^{*}}}$.
        \IF{$\sum^{n_{k}}_{\ell=1}X_{\ell}/n_{k}<\sum^{n_{k^{*}}}_{\ell=1}Y_{\ell}/n_{k^{*}}+\eta$}
        \STATE Return True
        \ELSE
        \STATE Return False
        \ENDIF
    \end{algorithmic}
\end{algorithm}
\end{minipage}
\end{wrapfigure}

Similar to Section~\ref{sec:analysis}, we also define two sets of events to capture the behavior of the change point detection algorithm in the version where we only focus on the change of the optimal arm: The false 
alarm events are defined as $F_{r}^{*}:=\left\{\tau_{r}^{*}<\nu_{r}^{*}\right\},~\forall~1\leq r\leq S-1$, and $F_{0}^{*}:=\left\{\tau_{0}^{*}=0\right\}$; the event that the detection delay of the 
$r$-th change is smaller than $h_{r}^{*}$ is defined as $D_{r}^{*}:=\left\{\tau_{r}^{*}\leq \nu_{r}^{*}+h_{r}^{*}\right\},~\forall~1\leq r \leq S-2$, where the choice of $h_{r}^{*}$ depends on the CD algorithm. We also define $D_{0}^{*}:=\left\{\tau_{0}^{*}=0\right\}$, and $D_{S-1}^{*}:=\left\{\tau_{S-1}^{*}\leq T\right\}$. 
In our regret analysis, we also define the following quantities $\Delta_{k,t}:=\max_{\tilde{k}\in\gK}\left\{\mu_{\tilde{k},t}\right\}-\mu_{k,t},~\forall~1\leq t\leq T,~k\in\gK$, and assume $\Delta_{\min}:=\min_{k\in\gK}\min_{1\leq t\leq T}\Delta_{k,t}$ is known.

\begin{remark}
    In this section, false alarms differ from Section~\ref{sec:analysis}. When the change detection algorithm declares an alarm, even if a change has occurred but the optimal arm remains the same, this situation will also be considered a false alarm.
\end{remark}
\subsection{Diminishing Exploration With a Skipping Mechanism}
\label{sec:extension:mechanism}
\vspace{-5pt}
Here, we introduce a skipping mechanism (Algorithm~\ref{alg:skip}) to ignore unnecessary alarms. The algorithm takes two sets of observations of size $n_k$ and $n_k^*$, respectively, as inputs and checks whether the sample average of the second set is larger than that of the first set by a margin of $\eta$. In Appendix~\ref{sec:skip_whole_algorithm}, we present the complete algorithm, where the two sets are samples of our algorithm before and after an alarm of change, respectively. If Algorithm~\ref{alg:skip} returns true, then this alarm is skipped; otherwise, it declares that the optimal arm has changed and resets the algorithm. Note that having a negative $\eta$ would reduce miss detection but may also increase false reset. On the other hand, having a positive $\eta$ would encourage skipping, reducing false reset but leading to higher miss detection. Besides, the optimal $\eta$ also depends heavily on the underlying CD algorithm, as some CD algorithms cause higher false alarm rates than others. In our numerical (in Appendix~\ref{app:sim}) and analytic results (Theorem~\ref{thm:regret} in Appendix~\ref{app:pf_detail:extend}), we set $n_k = \mathcal{O}(\log T)$ and $n_k^* = \mathcal{O}(\log T)$ and demonstrate the effectiveness of the proposed skipping mechanism with $\eta=0$.


\vspace{-5pt}
\subsection{Integration with change detectors}
\label{sec:extension:}
\vspace{-5pt}
In this section, we will integrate change detectors from M-UCB and GLR-klUCB into the extension of diminishing exploration similar to Section~\ref{sec:analysis}.\\
{\bf Integration with change detectors of M-UCB.} We choose the parameter $w$ as
\begin{equation}\label{eqn:w_fix2}
    w=\left(8/\min\left\{\delta,\Delta_{min}\right\}^{2}\right)\cdot \left[\sqrt{\log\left(2KT^{2}\right)}+\sqrt{\log\left(2T\right)}\right]^{2}.
\end{equation}
The selection of the remaining parameters is the same as in Section~\ref{sec:analysis}.
\vspace{-5pt}
\begin{corollary}[Regret bound of M-UCB]\label{cor:regret_bound_extend_MUCB}
    Combining Algorithm~\ref{alg:main_alg} and~\ref{alg:CD_alg} with the parameters in Equation~\ref{eqn:b_fix}, and Equation~\ref{eqn:w_fix2} achieves the expected regret upper bound as $\mathcal{O}(\sqrt{KST\log{T}})$.
\end{corollary}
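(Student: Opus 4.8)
The plan is to rerun the proof of Corollary~\ref{cor:regret_MUCB} at the granularity of \emph{super-segments}. First I would re-derive the decomposition of Theorem~\ref{thm:regret} with every occurrence of $F_i,D_i,\nu_i,s_i,\sum_{i=1}^M$ replaced by its optimal-arm counterpart $F^*_r,D^*_r,\nu^*_r,s^*_r,\sum_{r=1}^S$, obtaining $\E[R(1,T)]\le \mathrm{(a)}+\mathrm{(b)}+\mathrm{(c)}+\mathrm{(d)}$ where $\mathrm{(b)}=2\alpha\sqrt{ST}$, $\mathrm{(c)}$ is the expected detection delay summed over the $S-1$ optimal-arm changes, $\mathrm{(d)}$ is the telescoping sum of false-alarm and missed-detection probabilities over the $S$ super-segments \emph{plus} the probability that Algorithm~\ref{alg:skip} misclassifies one of the $\le M$ raised alarms, and $\mathrm{(a)}=\sum_{i=1}^M\tilde C_i$. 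The structural fact that makes this rewriting valid is that in the complete algorithm of Appendix~\ref{sec:skip_whole_algorithm} a full restart (in particular $u\gets 1$, which re-initialises the diminishing-exploration schedule) is performed only when Algorithm~\ref{alg:skip} returns false, i.e.\ only at the $S-1$ genuine optimal-arm changes; a correctly-skipped alarm leaves $u$ untouched, so the exploration schedule decays over a whole super-segment rather than restarting at every sub-segment change.

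Term $\mathrm{(b)}$ would be handled exactly as term (b) of Theorem~\ref{thm:regret}: within a block of length $s^*_r$ the recursion $u^{(j)}=\lceil u^{(j-1)}+\tfrac{K}{\alpha}\sqrt{u^{(j-1)}}+\tfrac{K^2}{4\alpha^2}\rceil$ admits at most $\approx\tfrac{2\alpha}{K}\sqrt{s^*_r}$ exploration phases of $K$ steps each, so the exploration regret over block $r$ is at most $2\alpha\sqrt{s^*_r}$, and Cauchy--Schwarz with $\sum_{r=1}^S s^*_r=T$ gives $\mathrm{(b)}\le 2\alpha\sqrt{ST}$. Term $\mathrm{(c)}$ would be handled by the M-UCB detector guarantee used in Corollary~\ref{cor:regret_MUCB}, now with delay budget $h^*_r=\tilde{\mathcal{O}}\big((\tfrac{K}{2\alpha}+1)w\sqrt{s^*_r}\big)$, so that $\mathrm{(c)}=\tilde{\mathcal{O}}\big((\tfrac{K}{\alpha}+1)w\big)\sum_r\sqrt{s^*_r}=\tilde{\mathcal{O}}\big((\tfrac{K}{\alpha}+1)w\sqrt{ST}\big)$. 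With $w=\tilde{\mathcal{O}}(1/\min\{\delta,\Delta_{\min}\}^2)$, balancing $\mathrm{(b)}\asymp\alpha\sqrt{ST}$ against $\mathrm{(c)}\asymp\tfrac{Kw}{\alpha}\sqrt{ST}$ forces $\alpha=\Theta(\sqrt{Kw})=\Theta(\sqrt{K\log T})$, which is exactly the choice producing the target $\mathcal{O}(\sqrt{KST\log T})$ once $\mathrm{(a)}=\tilde{\mathcal{O}}(MK/\min\{\delta,\Delta_{\min}\}^2)$ and the probabilities in $\mathrm{(d)}$ are shown to be lower order under Assumption~\ref{asm:seg_length}.

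The genuinely new ingredient is bounding the skip-misclassification probability, and I would do so with a concentration argument matched to the enlarged window of \eqref{eqn:w_fix2}. Fed with $n_k=n_k^*=\Theta(\log T)$ samples drawn from a single sub-segment straddling the alarm --- which the diminishing-exploration phases together with the detector's window $w$ guarantee are available --- each empirical mean appearing in Algorithm~\ref{alg:skip} is, with probability $1-\mathcal{O}(1/T^2)$, within $\tfrac12\min\{\delta,\Delta_{\min}\}$ of its true value (here $\min\{\delta,\Delta_{\min}\}$ being bounded away from $0$ is what turns $\Theta(\log T)$ samples into this resolution). Since any genuine optimal-arm change separates the outgoing and incoming optimal arms by at least $\Delta_{\min}$ and moves some arm's mean by at least $\delta$, this resolution suffices for Algorithm~\ref{alg:skip} to return false at every genuine change and true at every within-super-segment change; a union bound over the $\le M\le T$ alarms bounds the misclassification contribution to $\mathrm{(d)}$ by $\mathcal{O}(1)$. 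It then remains to check that the detection delays at the correctly-skipped within-super-segment changes are benign: during such a delay the UCB statistics still rank the unchanged optimal arm on top, so only $\tilde{\mathcal{O}}(K)$ extra suboptimal pulls occur, and the re-seeding of the UCB sub-routine performed at that detection contributes only its usual $\tilde C$-type term --- both are absorbed into $\mathrm{(a)}$.

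I expect the main obstacle to be the step underlying term $\mathrm{(c)}$: verifying that the M-UCB detector still enjoys its delay guarantee once genuine resets happen only at super-segment boundaries. Concretely, one must show that each arm accumulates at least $w/2$ \emph{fresh} samples on each side of every optimal-arm change --- which requires the diminishing-exploration phases, spaced $\Theta(\sqrt{u})$ apart, to be dense enough near that change, and is where Assumption~\ref{asm:seg_length} on the super-segment lengths is invoked --- while at the same time the stale UCB samples carried over from earlier sub-segments of the same super-segment neither prematurely nor belatedly trip the detector's threshold. Making this estimate quantitative and uniform over the random alarm times, and then feeding it into the telescoping bound for $\mathrm{(d)}$ and the delay sum for $\mathrm{(c)}$, is the crux of the argument.
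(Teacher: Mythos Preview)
Your high-level plan---re-deriving the Theorem~\ref{thm:regret} decomposition at the super-segment scale and plugging in the M-UCB detector lemmas---is the paper's route via Theorem~\ref{thm:regret_bound_extend}, but your handling of term (a) rests on a misreading of the extended algorithm. You write that at a correctly-skipped alarm ``the re-seeding of the UCB sub-routine performed at that detection contributes only its usual $\tilde C$-type term,'' and accordingly set $\mathrm{(a)}=\sum_{i=1}^M\tilde C_i$. In Algorithm~\ref{alg:main_alg_extend}, however, a skipped alarm resets \emph{nothing}: neither $u$ nor the counters $n_k$ (the reset in lines 22 and 28 executes only when Skip returns False). There are therefore only $S$ UCB episodes, one per super-segment, each running over a \emph{non-stationary} interval in which even the optimal arm's mean may drift. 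The per-segment constant $\tilde C_i$ does not apply here; the paper instead uses a worst-case super-segment quantity $\tilde C_r^*$ built from $\min_t\Delta_{k,t}$ and $\max_t\Delta_{k,t}$ over $[\nu_{r-1}^*,\nu_r^*]$. Your plan sidesteps exactly the piece of the UCB analysis that the skipping mechanism makes nontrivial.

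You also miss the M-UCB--specific shortcut that collapses the paper's proof to a single substitution. The general extended bound in Theorem~\ref{thm:regret_bound_extend} carries an extra delay $d_{I,r}$: the post-alarm time needed to accumulate enough samples for the skip decision. You propose locating $n_k=n_{k^*}=\Theta(\log T)$ fresh samples via a separate concentration step. The paper's observation is that for the M-UCB detector this cost is zero: the enlarged window $w$ of \eqref{eqn:w_fix2}, which replaces $\delta$ by $\min\{\delta,\Delta_{\min}\}$, is already large enough both to detect the change \emph{and} to decide whether to skip (Lemma~\ref{lemma:ignore_suff_samples}), so $d_{I,r}=0$. This is the whole reason \eqref{eqn:w_fix2} differs from \eqref{eqn:w_fix}; once $d_{I,r}=0$, the proof is literally to substitute Lemmas~\ref{lemma:prob_fa}, \ref{lemma:prob_delay}, \ref{lemma:exp_delay} into Theorem~\ref{thm:regret_bound_extend}.
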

\vspace{-5pt}
{\bf Integration with change detectors of GLR-UCB.} The selection of parameters is the same as in Section~\ref{sec:analysis}.
\vspace{-5pt}
\begin{corollary}[Regret bound of GLR-UCB]\label{cor:regret_bound_extend_glrUCB}
    Combining Algorithm ~\ref{alg:main_alg} and ~\ref{alg:glrCD_alg} with $\beta$ function in Equation~\ref{eqn:beta} achieves the expected regret upper bound as $\mathcal{O}(\sqrt{KST\log{T}})$.
\end{corollary}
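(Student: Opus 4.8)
The plan is to mirror the proof of Corollary~\ref{cor:regret_glrUCB}, but carried out at the granularity of \emph{super-segments} instead of segments. First I would invoke the super-segment form of Theorem~\ref{thm:regret} established in Appendix~\ref{app:pf_detail:extend}: replacing $(M,\nu_i,\tau_i,F_i,D_i,h_i)$ by $(S,\nu_r^*,\tau_r^*,F_r^*,D_r^*,h_r^*)$ in \eqref{eqn:regret_bound} bounds $\E[R(1,T)]$ by the four terms (a)--(d), where $h_r^*$ is the GLR delay of \citet{besson2022efficient} with $s_i$ replaced by $s_r^*$ and $\delta^{(i)}$ by the gap created at the $r$-th optimal-arm switch. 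Then terms (b), (c), (d) are controlled using the GLR detector's guarantees together with the diminishing-exploration schedule and the skipping test of Algorithm~\ref{alg:skip}, while term (a) collects the residual UCB regret within each super-segment. Throughout I would treat $\delta$ (Assumption~\ref{ass:minimum_gap}) and $\Delta_{\min}$ as constants and assume the super-segment analog of Assumption~\ref{ass:glr_delay_context}, i.e. $\nu_r^*-\nu_{r-1}^*\ge 2\max\{h_r^*,h_{r-1}^*\}$, so that each arm accrues enough pre- and post-switch samples.

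For term (b): since the exploration schedule restarts only at a \emph{non-skipped} alarm — that is, at the start of a super-segment — the number of exploration sessions opened inside super-segment $r$ before elapsed time $s_r^*$ is $\mathcal{O}(\alpha\sqrt{s_r^*}/K)$ (because $\sqrt{u_r^{*(j)}}=\sqrt{u_r^{*(1)}}+(j-1)K/2\alpha$), each costing $K$ pulls, so the exploration cost of super-segment $r$ is $\mathcal{O}(\alpha\sqrt{s_r^*})$; Cauchy--Schwarz gives $\sum_{r=1}^{S}\alpha\sqrt{s_r^*}\le\alpha\sqrt{S}\sqrt{\sum_r s_r^*}=\alpha\sqrt{ST}$, the analog of Lemma~\ref{lemma:de_regret}. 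For term (c): conditioned on $\overline{F}_{r-1}^*D_{r-1}^*$, the diminishing exploration supplies each arm with enough post-switch samples within a window of length $h_r^*$ for the GLR statistic to cross $\beta$, and the $\mathcal{O}(\log T)$-sample skipping test with $\eta=0$ then refuses to skip (the new optimal arm is genuinely better by at least $\Delta_{\min}$), so term (c)$\le\sum_r h_r^*=\mathcal{O}\!\big(\tfrac{\beta}{\delta^{2}}(\tfrac{K}{\alpha}+1)\sqrt{ST}\big)$ by Cauchy--Schwarz, up to a lower-order additive term. For term (d): the GLR false-alarm probability is $\tilde{\mathcal{O}}(1/T)$ per segment for $\beta$ as in \eqref{eqn:beta}, and a genuine false reset additionally needs the skipping test to fail on one of the at most $M-1$ intra-super-segment changes, each with probability $\tilde{\mathcal{O}}(1/T^{2})$; a missed optimal-arm switch similarly needs a GLR miss within the budget $h_r^*$ or an erroneous skip, each with probability $\tilde{\mathcal{O}}(1/T)$. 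Multiplying by $T$, term (d)$=\tilde{\mathcal{O}}(S)$, which is below $\sqrt{KST\log T}$ since $S\le T$.

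Term (a) is the one genuinely new piece. Because a skipped alarm does \emph{not} reset the UCB statistics, I cannot quote Lemma~\ref{lemma:regret_stat} verbatim; instead I would prove a restart-free UCB bound over a block on which the optimal arm is fixed: on the good event that every empirical mean stays within its confidence radius, the optimal arm's index dominates and each suboptimal arm accumulates only $\tilde{\mathcal{O}}(\log T/\Delta_{\min}^{2})$ extra pulls across super-segment $r$, so $\tilde C_r^*=\tilde{\mathcal{O}}(K\log T)$ and $\sum_r\tilde C_r^*=\tilde{\mathcal{O}}(SK\log T)$, which is $\le\sqrt{KST\log T}$ in the only nontrivial regime $SK\log T\le T$. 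Finally I would choose $\alpha=\Theta(\sqrt{K\log T})$ to balance term (b) against the $\tfrac{K}{\alpha}\sqrt{ST}$ part of term (c); summing (a)--(d) yields $\E[R(1,T)]=\mathcal{O}(\sqrt{KST\log T})$.

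\textbf{Main obstacle.} I expect the delicate part to be the joint correctness of the skipping mechanism and the un-reset internal state: one must verify that carrying stale UCB statistics (and GLR running statistics) across skipped intra-super-segment changes never misleads the algorithm into a false reset or a long commitment to a now-suboptimal arm, and that the fixed $\mathcal{O}(\log T)$-sample, $\eta=0$ test separates a true optimal-arm switch (gap $\ge\Delta_{\min}$) from a reward shift preserving the optimal arm — simultaneously over all up to $M-1$ such events, each with only an $\tilde{\mathcal{O}}(1/T)$ error budget. Pinning down the polylog constants in $n_k,n_k^*$ so that these union bounds close while keeping the exploration light enough for the $\sqrt{KST\log T}$ scaling is the crux.
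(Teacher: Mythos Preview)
Your proposal is essentially the paper's approach: invoke the super-segment version of the master decomposition (the paper states this as a standalone theorem with terms (a)--(d)), then substitute the GLR delay/false-alarm bounds and the skipping-test error bound (the paper has a separate lemma giving $2/T^{2}$ for that), and balance with $\alpha=\Theta(\sqrt{K\log T})$.

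One bookkeeping point the paper makes explicit and you fold into ``up to a lower-order additive term'': the paper separates out a quantity $d_{I,r}$ in term (c), namely the extra wall-clock time after a GLR alarm until the skipping test has its $N_I=\tfrac{4\log T}{\Delta_{\min}^2}$ fresh samples, and bounds it via the samples-to-time conversion lemma for diminishing exploration. This is \emph{not} lower order---it is another $\mathcal{O}\big(\tfrac{\log T}{\Delta_{\min}^2}(\tfrac{K}{\alpha}+1)\sqrt{s_r^*}\big)$ contribution per super-segment, so after Cauchy--Schwarz it matches the $h_r^*$ sum in scale. It still fits under $\sqrt{KST\log T}$ with the chosen $\alpha$, so your final order is unaffected, but you should account for it explicitly rather than dismiss it. Conversely, the paper's proof is terser than yours on term (a): it simply asserts the $\tilde C_r^*$ constant for UCB over a super-segment without the ``stale statistics'' discussion you flag as the main obstacle, so your care there goes beyond what the paper spells out.
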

\vspace{-5pt}
The proofs for Corollary~\ref{cor:regret_bound_extend_MUCB} and~\ref{cor:regret_bound_extend_glrUCB} are in Appendix~\ref{app:pf_detail:extend}.
\section{Simulation Results}
\label{sec:sim}
In this section, we assess the effectiveness of the proposed diminishing exploration scheme across various dimensions, encompassing regret scaling in $M$, $K$, and $T$, regrets in synthetic environments, and regrets in a real-world scenario. In addition to evaluating M-UCB \citep{cao2019nearly} with our diminishing exploration, we also examine a variant of CUSUM-UCB \citep{liu2018change} that incorporates diminishing exploration with CUSUM-UCB, further highlighting the efficacy of the proposed exploration method. We will compare our approach with M-UCB, CUSUM-UCB, GLR-UCB, Discounted-UCB, and Discounted Thompson Sampling. 
Unless stated otherwise, we report the average regrets over 100 simulation trials. Detailed configuration is provided in Appendix~\ref{app:para}.
\vspace{-2.2cm}
\begin{figure}[H]
\centering
\begin{subfigure}{0.85\textwidth}
    \includegraphics[width=\textwidth]{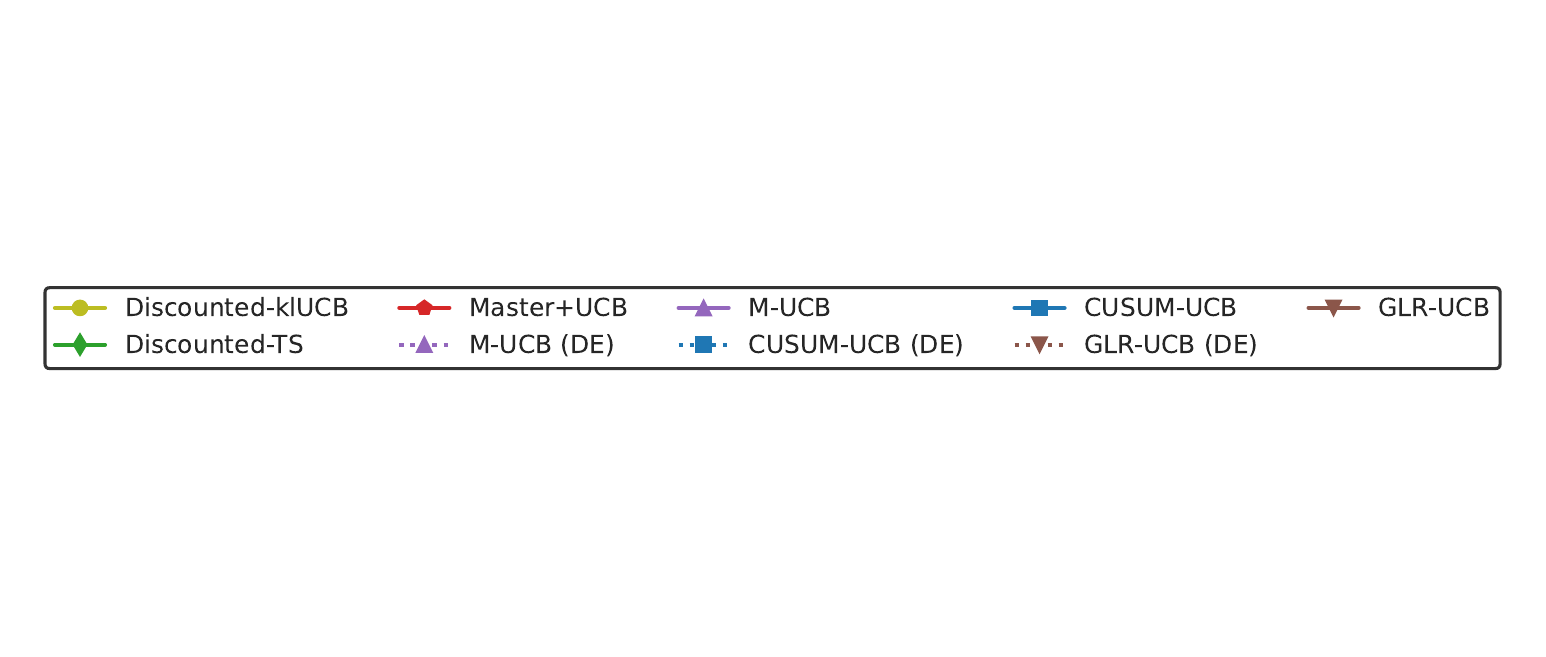}
    \vspace{-2cm}

\end{subfigure}
\begin{subfigure}{0.24\textwidth}
    \includegraphics[width=\textwidth]{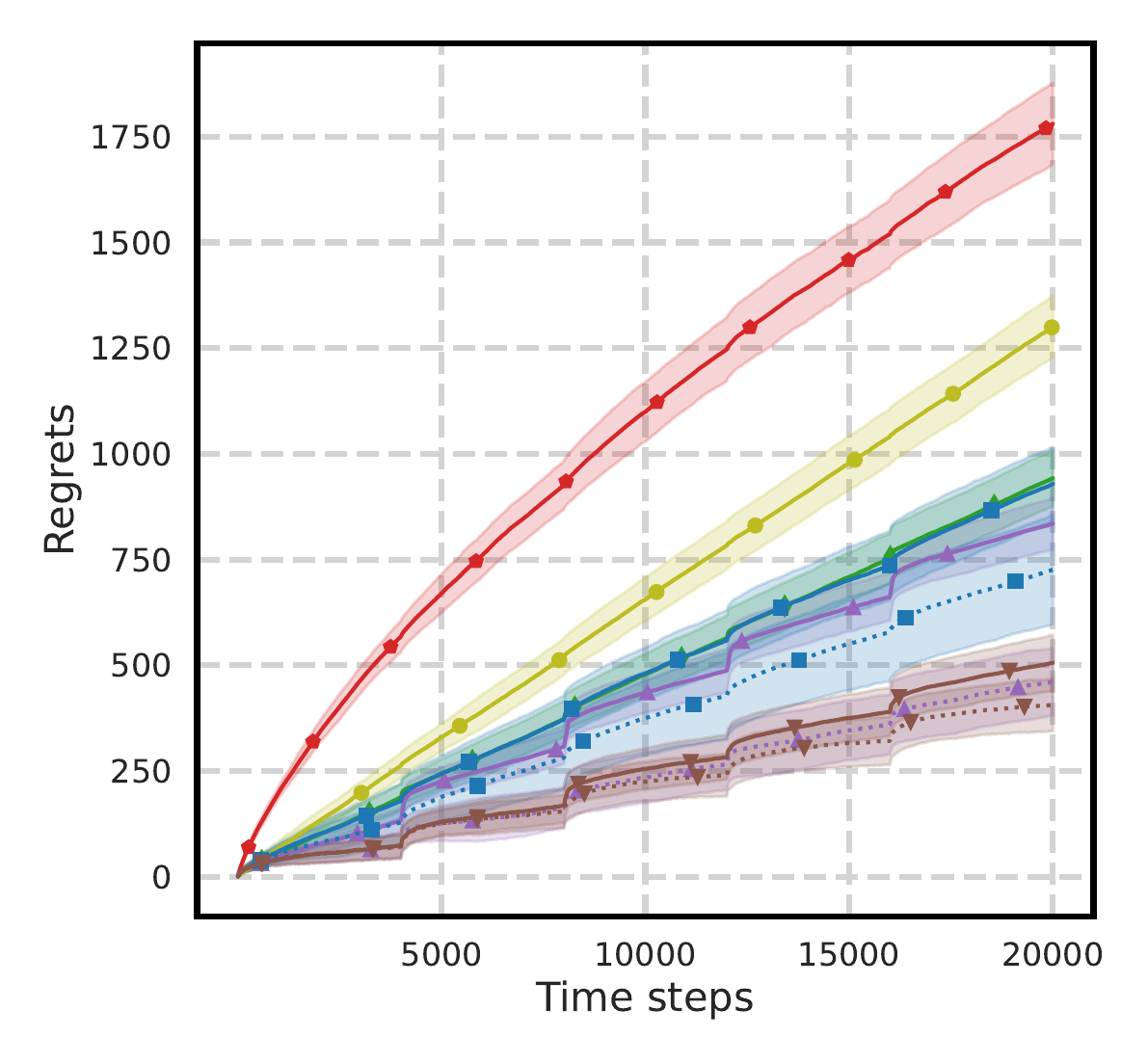}
    \vspace{-15pt}
    \caption{Scaling in $t$.}
    \label{fig:t}
\end{subfigure}
\hspace{-6pt}
\begin{subfigure}{0.24\textwidth}
    \includegraphics[width=\textwidth]{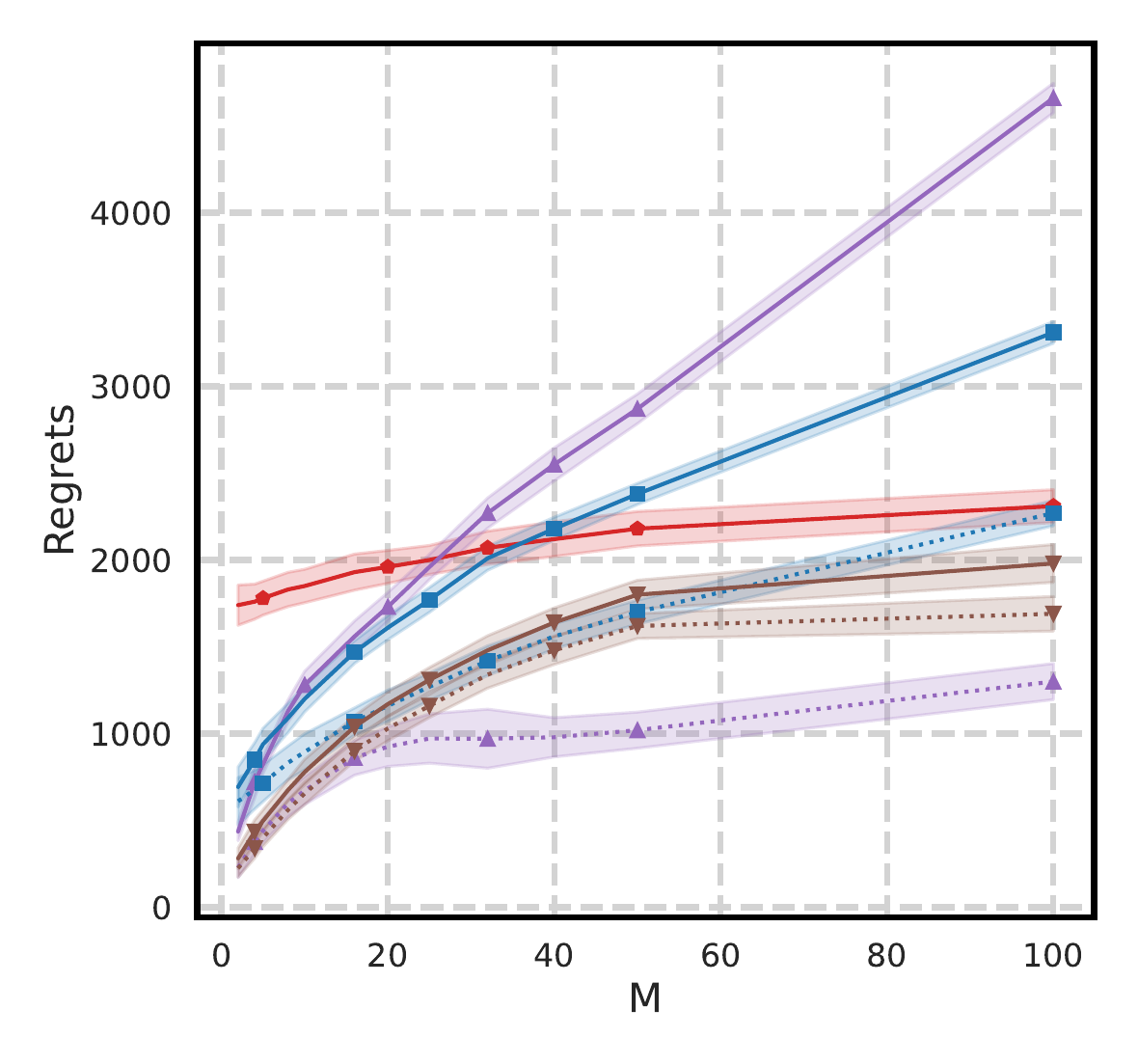}
    \vspace{-15pt}
    \caption{Scaling in $M$.}
    \label{fig:M}
\end{subfigure}
\hspace{-6pt}
\begin{subfigure}{0.24\textwidth}
    \includegraphics[width=\textwidth]{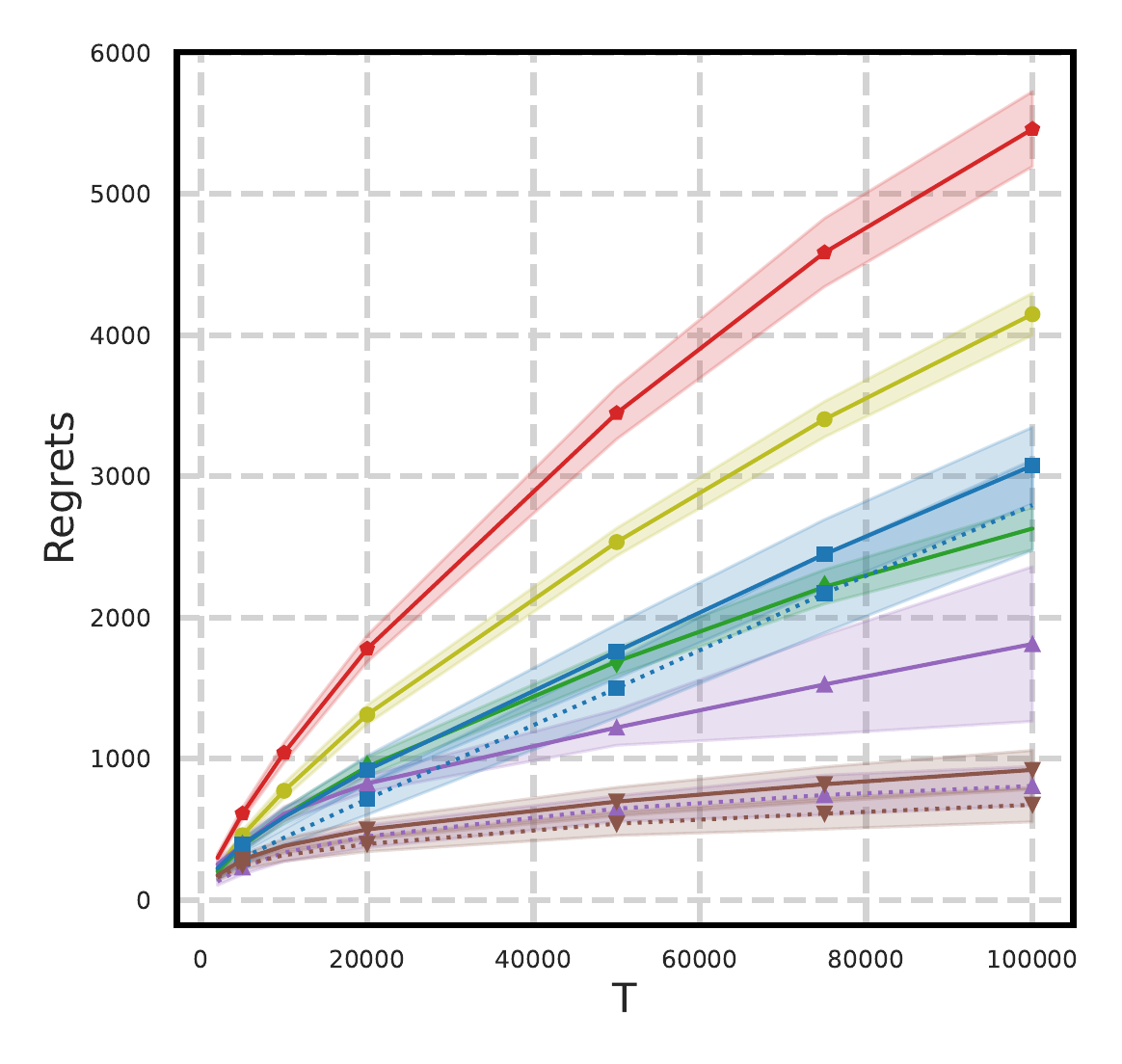}
    \vspace{-15pt}
    \caption{Scaling in $T$.}
    \label{fig:T}
\end{subfigure}
\hspace{-6pt}
\begin{subfigure}{0.24\textwidth}
    \includegraphics[width=\textwidth]{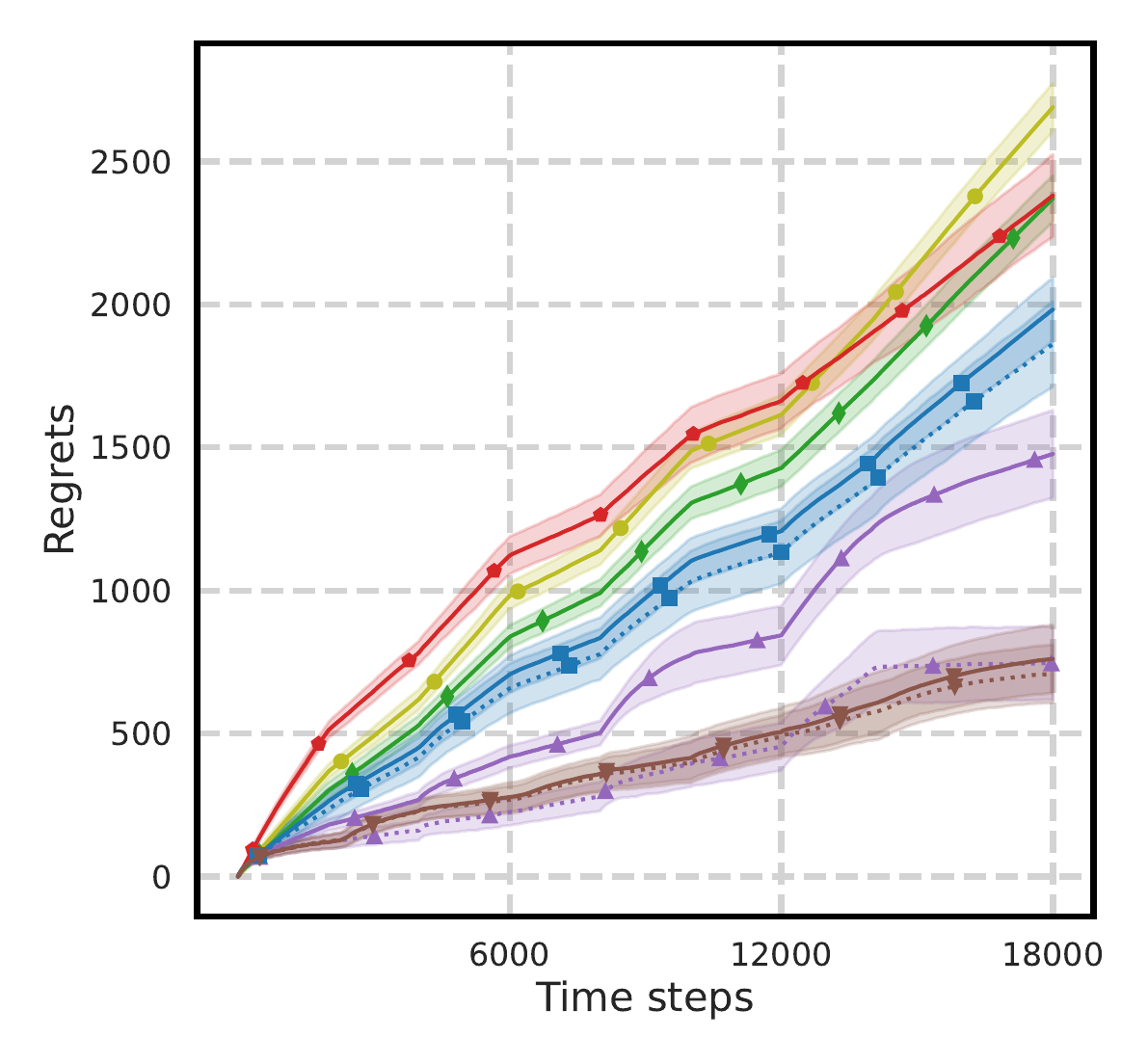}
    \vspace{-15pt}
    \caption{Yahoo Data Set.}
    \label{fig:Yahoo}
\end{subfigure}
\hspace{-10pt}
\caption{Regret in the synthetic environments and under the Yahoo data set.}
\end{figure}
\vspace{-0.5cm}
{\bf Regret in Each Time Step.} In this simulation, we consider a multi-armed bandit problem with $T = 20000$ time steps and $M=5$. 
Recall that $\mu^{(i)}_{k}$ represents the expected value for arm $k$ in the $i$-th segment. Here, we set $\mu^{(i)}_{k}=0.2,0.5,0.8$ for $i$ with $(i+k)\bmod 3=2, 0, 1$, respectively.
Figure~\ref{fig:t} shows that for both CUSUM-UCB and M-UCB, employing diminishing exploration can effectively reduce the additional regret caused by constant exploration. 
Moreover, M-UCB with the proposed diminishing exploration achieves the lowest regret.
In the figure, the change points are clearly evident by observing the breakpoints in each line. The reason for the overall steeper slope of CUSUM-UCB (both with and without diminishing exploration) is due to the heightened sensitivity of the CUSUM detector itself, resulting in more frequent false alarms. 


{\bf Regret Scaling in $M$.} Based on the settings outlined above, with the only variation being in the parameter $M$, Figure~\ref{fig:M} illustrates the dynamic regrets across various values of $M$. In this experiment, adjustments to the exploration parameter settings are required based on the size of $M$ when using a constant exploration rate. However, this is not the case for the proposed diminishing exploration. The result confirms the earlier-discussed rationale that the proposed diminishing exploration can automatically adapt to the environment, resulting in the best regret performance among the algorithms. 

{\bf Regret Scaling in $T$.} In line with the setting described above, with the only variation being the parameter $T$, we present the dynamic regrets across different values of $T$. Observations similar to those made above can again be found in Figure~\ref{fig:T}, where the proposed diminishing exploration can effectively reduce the regret.

\textbf{Regret and Execution Time.} Here, we compare the computation time and regret across different algorithms for various choices of $M$ and $T$ with other parameters same as above. Specifically, we conduct experiments under three scenarioso: one where the environment changes rapidly ($M=50$ and $T=20000$), one where the environment changes slowly ($M=5$ and $T=20000$), and one where the considered time horizon is double ($M=5$ and $T=40000$). As shown in Figure~\ref{fig:ComM50} to \ref{fig:ComBigT}, despite oblivious of $M$, our algorithm almost always achieves the lowest computation time and regret in all the scenarios. Moreover, comparing to Master+UCB, another algorithm not requiring the knowledge of $M$, our algorithm is always significantly better as shown in these figures. Figure~\ref{fig:ratio} plots the ratio of average execution time of Master+UCB to that of M-UCB with our DE for various $T$. It is shown that the growth rate is faster than $0.5 \log T$, and it appears to become even linear in $T$ as $T$ increases.

\vspace{-2.5cm}
\begin{figure}[H]
\centering
\begin{subfigure}{0.85\textwidth}
    \includegraphics[width=\textwidth]{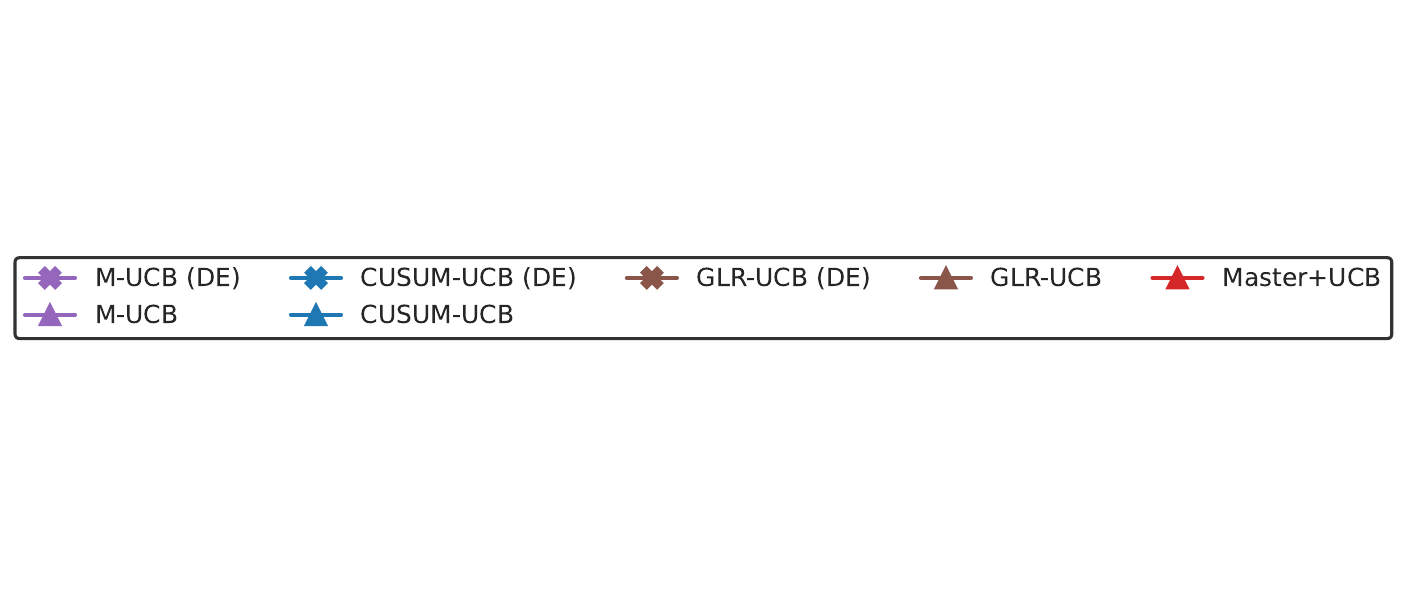}
    \vspace{-2.5cm}
\end{subfigure}
\hfill
\begin{subfigure}{0.25\textwidth}
    \includegraphics[width=\textwidth]{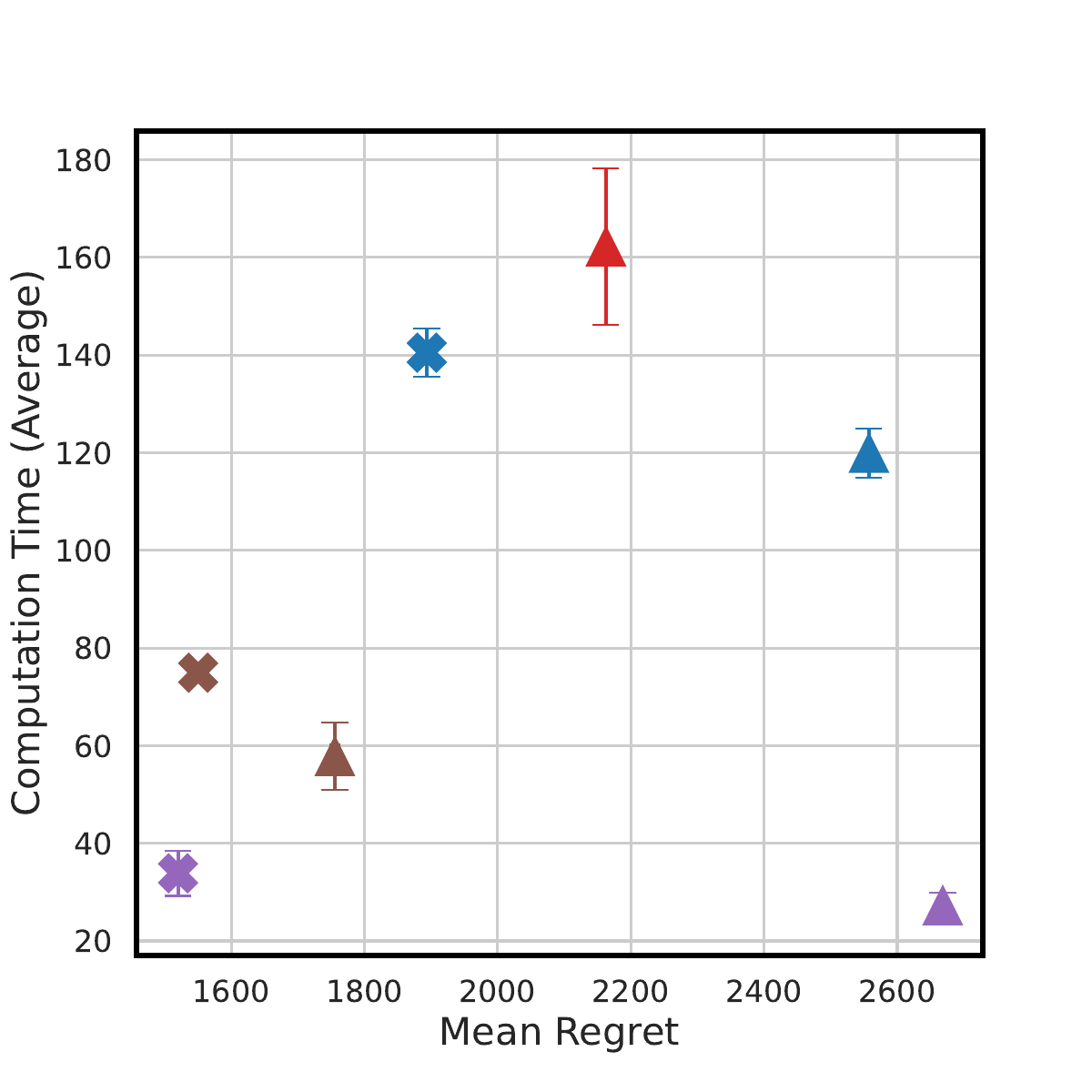}
    \vspace{-15pt}
    \caption{$M=50$, $T=20000$.}
    \label{fig:ComM50}
\end{subfigure}
\hspace{-20pt}
\hfill
\begin{subfigure}{0.25\textwidth}
    \includegraphics[width=\textwidth]{M5_Computation_plot.pdf}
    \vspace{-15pt}
    \caption{$M=5$, $T=20000$.}
    \label{fig:ComM5}
\end{subfigure}
\hspace{-20pt}
\hfill
\begin{subfigure}{0.25\textwidth}
    \includegraphics[width=\textwidth]{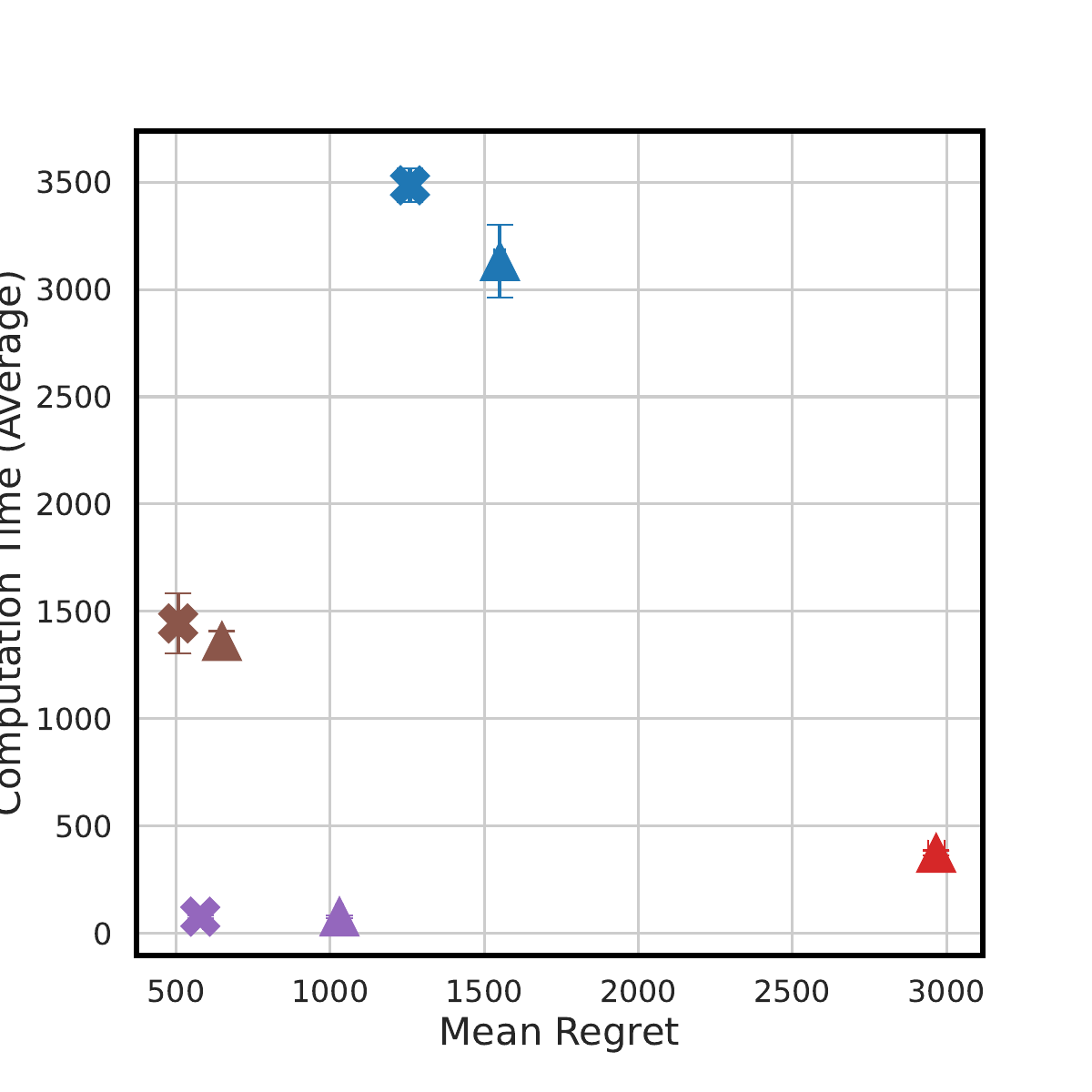}
    \vspace{-15pt}
    \caption{$M=5$, $T=40000$.}
    \label{fig:ComBigT}
\end{subfigure}
\hspace{-20pt}
\hfill
\begin{subfigure}{0.25\textwidth}
    \includegraphics[width=\textwidth]{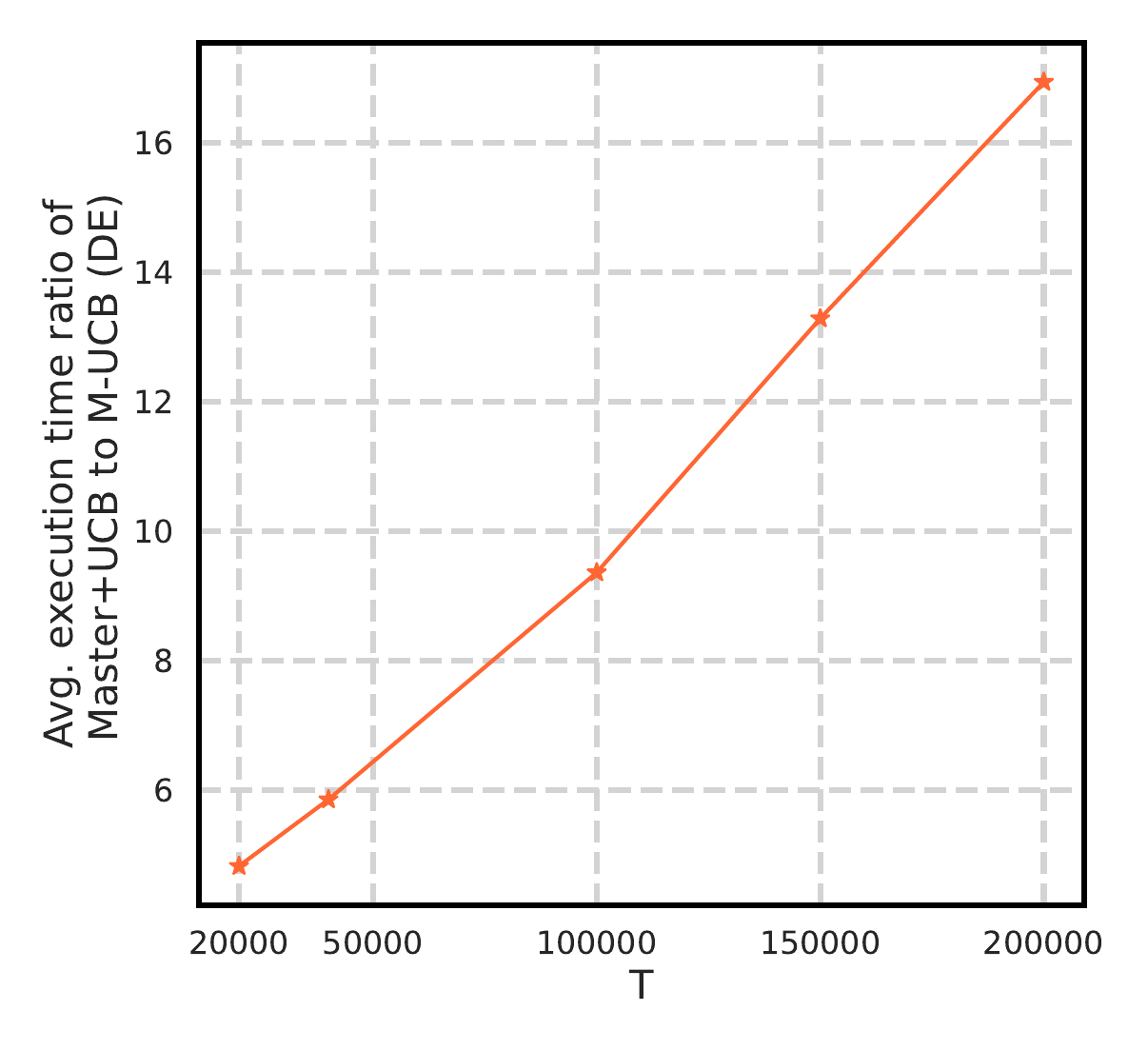}
    \vspace{-15pt}
    \caption{Ratio of avg. execution time of Master+UCB to that of M-UCB (DE).}
    \vspace{-20pt}
    \label{fig:ratio}
\end{subfigure}
\hspace{-20pt}
\caption{Regret and computation times.}
\end{figure}

\textbf{Regret in an Environment Built from a Real-World Dataset.} We further utilize the benchmark dataset publicly published by Yahoo! for evaluation. To enhance arm distinguishability in our simulation, we scale up the data by a factor of $10$. The number of segments is set to $M=9$ and the number of arms is set to $K=6$. Figure \ref{fig:Yahoo} shows the evolution of dynamic regret. Again, we see that the diminishing exploration scheme could help M-UCB, GLR-UCB and CUSUM-UCB achieve similar or better regret even without knowing $M$.

\textbf{Scenarios when Assumptions are Violated.} In Assumption~\ref{ass:minimum_gap}, we assumes the knowledge of $\delta$ to select an appropriate $w$. We emphasize that our settings in many of the above simulations actually violate this assumption. Take Figure~\ref{fig:t} for example, $w=200$ is chosen, which corresponds to $\delta\approx 0.6$ when back calculating, which is much larger than the actual $\delta=0.3$ of this scenario. Assumptions~\ref{asm:seg_length} and~\ref{ass:glr_delay_context} provide guarantees that the segment length is sufficiently long. However, in the last data point of our Figure~\ref{fig:M} ($M=100$), these assumptions are clearly violated. In this case, it becomes challenging for the active methods to promptly detect every change point. For algorithms like M-UCB and CUSUM-UCB, which require knowledge of $M$, their awareness of quick changes causes them to invest more effort into change detection, leading to a very high exploration rate. However, this does not always guarantee successful detection, resulting in very high regret. Diminishing exploration, on the other hand, continues to decrease the exploration rate regardless of $M$ when no changes are detected. This allows more resources to be invested for UCB, which might gradually adapt to the environment's changes, leading to a regret that is lower than that of uniform exploration.


Regarding the simulations of AdSwitch, ArmSwitch, and the Meta algorithm, due to the extremely long running time, we have not been able to finish the simulation for $T$ beyond 20000. We alternatively perform comparison with smaller $T$, whose results are presented in Appendix~\ref{app:sim}.

\section{Concluding Remarks}
\label{sec:conclusion}

In this paper, we revisited the piecewise-stationary bandit problem. A novel diminishing exploration mechanism, called diminishing exploration, was proposed that does not require knowledge about the number of stationary segments. When used in conjunction with the M-UCB and GLR-UCB, the proposed diminishing exploration mechanism was rigorously shown to achieve a near optimal regret. Extensive simulations were also provided to show the effectiveness of the proposed mechanism. 
Regarding the limitations, since the proposed diminishing exploration is employed together with a CD algorithm, the integrated algorithm usually inherits from the CD algorithm a constraint on the length of each segment in order to guarantee near-optimal regret performance. 

\appendix

\section{Change Detection Algorithm}\label{app:cd_alg}
{\bf Change Detector of M-UCB } \\
The following algorithm is the change detection algorithm for M-UCB \citep{cao2019nearly}
\begin{center}
    \begin{minipage}{0.8\linewidth}
        \begin{algorithm}[H]
            \caption{Change Detection of M-UCB: CD$\left(w,b,Z_{1},\ldots,Z_{w}\right)$}\label{alg:CD_alg}
            \begin{algorithmic}[1]
                \REQUIRE An even integer $w$, $w$ observations $Z_{1},\ldots,Z_{w}$, and a prescribed threshold $b > 0$
                \IF{$\left\lvert\sum^{w}_{\ell=w/2+1}Z_{\ell}-\sum^{w/2}_{\ell=1}Z_{\ell}\right\rvert>b$}
                \STATE Return True
                \ELSE
                \STATE Return False
                \ENDIF
        \end{algorithmic}
    \end{algorithm}
\end{minipage}

\end{center}

In this algorithm, one requires $w$ observations as input and check whether the difference between the sample average of the first half and that of the second half exceeds a prescribed threshold $b$ (line 1).

{\bf Change Detector of GLR-UCB }\\
The following definition is the change detection algorithm for GLR-UCB \citep{besson2022efficient}

\begin{definition}
    The Bernoulli GLR change-point detector with threshold function $\beta\left(n,\epsilon\right)$ is
    \begin{equation}
        \tau_{\delta}:=\inf\left\{n\in\mathbb{N}^{*}:\sup_{s\in\left[1,n\right]}\left[s\times \textrm{kl} \left(\hat{\mu}_{1:s},\hat{\mu}_{1:n}\right)+\left(n-s\right)\times \textrm{kl} \left(\hat{\mu}_{s+1:n},\hat{\mu}_{1:n}\right)\right]\geq \beta\left(n,\delta\right)\right\}
    \end{equation}
\end{definition}

\begin{center}
    \begin{minipage}{\linewidth}
        \begin{algorithm}[H]
            \caption{Change Detection of GLR-UCB: CD$\left(Y_{1},\ldots,Y_{n_{k}}\right)$}\label{alg:glrCD_alg}
            \begin{algorithmic}[1]
                \REQUIRE $Y_{1},\ldots,Y_{n_{k}}$, and a threshold function $\beta(n,\epsilon)$
                \IF{$\sup_{s\in\left[1,n\right]}\left[s\times \textrm{kl} \left(\sum^{s}_{\ell=1}Y_{\ell}/s,\sum^{n}_{\ell=1}Y_{\ell}/n\right)+\left(n-s\right)\times \textrm{kl} \left(\sum^{n}_{\ell=s+1}Y_{\ell}/\left(n-s\right),\sum^{n}_{\ell=1}Y_{\ell}/n\right)\right]\geq \beta\left(n,\delta\right)$}
                \STATE Return True
                \ELSE
                \STATE Return False
                \ENDIF
            \end{algorithmic}
        \end{algorithm}
    \end{minipage}

\end{center}
    
\newpage
\section{The Extended Version Algorithm}\label{sec:skip_whole_algorithm}

\begin{algorithm}[!h]
    \caption{CD-UCB with diminishing exploration}\label{alg:main_alg_extend}
    \begin{algorithmic}[1]
        \REQUIRE Positive integer $T,K$, and parameter $\alpha$, $N_{I}$ 
        \STATE Initialize $\tau\gets 0$, $u\gets \left\lceil\left(\alpha-K/4\alpha\right)^{2}\right\rceil$ and $n_{k}\gets 0\ \forall k \in \mathcal{K}$;
        \FOR{$t = 1,2,\ldots,T$}
            \IF{$u\leq t-\tau <u+K$} 
                 \STATE $A_{t}\gets \left(t-\tau\right)-u+1$ 
            \ELSE
                \IF{$t-\tau = u+K$} 
                    \STATE $u \gets \left\lceil u+\frac{K}{\alpha}\sqrt{u}+\frac{K^{2}}{4\alpha^{2}} \right\rceil $ 
                \ENDIF
                \FOR{$k=1,\ldots,K$}
                    \STATE $\textrm{UCB}_{k}\gets \frac{1}{n_{k}}\sum^{n_{k}}_{n=1}Z_{k,n}+\sqrt{\frac{2\log(t-\tau)}{n_{k}}}$
                \ENDFOR
                \STATE $A_{t}\gets \argmax_{k\in\gK}\textrm{UCB}_{k}$
            \ENDIF
            \STATE Play arm $A_{t}$ and receive the reward $X_{A_{t},t}$.
            \STATE $n_{A_{t}}\gets n_{A_{t}}+1$;$Z_{A_{t},n_{A_{t}}}\gets X_{A_{t},t}$
            \IF{$n_{A_{t}}\geq w$}
                \IF{CD = True}
                    \STATE $k^{*}\gets \argmax_{k\in\gK}\sum^{n_{k}}_{\ell=1}Z_{k,\ell}/n_{k}$
                    \IF{$A_{t}=k^{*}$}
                        \IF{$\exists k\neq k^{*}$ such that $n_{k}>N_{I}$}
                            \IF{Skip($ n_{k}, n_{A_{t}} , Z_{k,1:n_{k}},Z_{A_{t},1:n_{A_{t}}}$)=False}
                                \STATE $\tau\gets t$, $u\gets 1$ and $n_{k} \gets 0 \forall k \in \mathcal{K}$
                            \ENDIF
                        \ENDIF
                    \ELSE
                        \IF{$n_{A_{t}}\geq N_{I}$}
                            \IF{Skip($n_{A_{t}},n_{k^{*}}, Z_{A_{t},1:n_{A_{t}}},  Z_{k^{*},1:n_{k^{*}}} $)=False}
                                \STATE $\tau\gets t$, $u\gets 1$ and $n_{k} \gets 0\ \forall k \in \mathcal{K}$
                            \ENDIF
                        \ENDIF
                    \ENDIF
                \ENDIF
            \ENDIF
        \ENDFOR
    \end{algorithmic}
\end{algorithm}
\section{Proof Detail}
\label{app:pf_detail}

This appendix provides the detailed proofs of the results presented in Section~\ref{sec:analysis} and Section~\ref{sec:extension}. Each proof is carefully elaborated to ensure clarity and rigor.

\subsection{Proof of Section~\ref{sec:analysis}}\label{app:pf_detail:not_extend}
In this subsection, we present the proofs of Theorem~\ref{thm:regret} in Section~\ref{sec:analysis}. In what follows, the first lemma bounds the regret accumulated during the diminishing exploration part of the algorithm. We denote by $R_{\mathrm{DE}}\left(\tau_{i-1}, \nu_{i}\right)$ as the regret caused by the exploration part of the algorithm and by $N_{\mathrm{DE},k}\left(\tau_{i-1}, \nu_{i}\right)$ the number of times that the arm $k$ is selected in the exploration phase from the previous alarm time to the next change point.

\begin{lemma}[Diminishing exploration regret]\label{lemma:de_regret}
    If the mean values of the arms remain the same during the time interval $[\tau_{i-1}, \nu_{i})$, then we have
    \begin{equation}
        N_{\mathrm{DE},k}\left(\tau_{i-1}, \nu_{i}\right)\leq \frac{2\alpha\sqrt{\nu_{i}-\tau_{i-1}}}{K}+\frac{3}{2},
    \end{equation}
    and
    \begin{equation}
        \E\left[R_{\mathrm{DE}}\left(\tau_{i-1}, \nu_{i}\right)\right]\leq 2\alpha\sqrt{\nu_{i}-\tau_{i-1}}+\frac{3}{2}K.
    \end{equation}
\end{lemma}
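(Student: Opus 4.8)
The plan is to count the exploration phases that can start in the interval $[\tau_{i-1},\nu_i)$, bound that count, and then convert it into a bound on the number of pulls of each arm and on the accumulated regret. Recall from Section~\ref{subsec:DE} that, after the alarm at $\tau_{i-1}$, the exploration sessions start at times $u_{i-1}^{(1)} = \lceil(\alpha-K/4\alpha)^2\rceil$ and $u_{i-1}^{(j)} = \lceil u_{i-1}^{(j-1)} + \tfrac{K}{\alpha}\sqrt{u_{i-1}^{(j-1)}} + \tfrac{K^2}{4\alpha^2}\rceil$ for $j\ge2$, where the superscript/subscript indexing is relative to $\tau_{i-1}$; concretely the $j$-th session occupies the $K$ steps $t$ with $u_{i-1}^{(j)} \le t - \tau_{i-1} < u_{i-1}^{(j)} + K$. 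The key algebraic observation is that the recursion is engineered so that $\sqrt{u_{i-1}^{(j)}}$ grows essentially linearly in $j$: ignoring the ceiling, $u_{i-1}^{(j)} = \big(\sqrt{u_{i-1}^{(j-1)}} + \tfrac{K}{2\alpha}\big)^2$, hence $\sqrt{u_{i-1}^{(j)}} = \sqrt{u_{i-1}^{(j-1)}} + \tfrac{K}{2\alpha}$, and with $\sqrt{u_{i-1}^{(1)}} = \alpha - K/4\alpha$ (dropping ceilings) we get $\sqrt{u_{i-1}^{(j)}} = \alpha - K/4\alpha + (j-1)\tfrac{K}{2\alpha}$. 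First I would make this precise in the presence of the ceiling functions: since ceiling only increases values, the true $u_{i-1}^{(j)}$ is at least the ceiling-free value, so $\sqrt{u_{i-1}^{(j)}} \ge \alpha - K/4\alpha + (j-1)\tfrac{K}{2\alpha} = \alpha + (2j-3)\tfrac{K}{4\alpha}$.

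Next I would bound $N_{\mathrm{DE},k}(\tau_{i-1},\nu_i)$. Arm $k$ is pulled at most once per exploration session, so this quantity is at most the number $J$ of sessions with start time strictly before $\nu_i$, i.e.\ the largest $j$ with $u_{i-1}^{(j)} + \tau_{i-1} \le \nu_i$ (roughly — one should be slightly careful about whether a partially-completed session counts, but each arm is pulled at most once in it so the bound $J$ still holds up to rounding). From $u_{i-1}^{(J)} \le \nu_i - \tau_{i-1}$ and the lower bound $\sqrt{u_{i-1}^{(J)}} \ge \alpha + (2J-3)\tfrac{K}{4\alpha}$, we get $\alpha + (2J-3)\tfrac{K}{4\alpha} \le \sqrt{\nu_i - \tau_{i-1}}$, which rearranges to $J \le \tfrac{3}{2} + \tfrac{2\alpha}{K}\big(\sqrt{\nu_i-\tau_{i-1}} - \alpha\big) \le \tfrac{3}{2} + \tfrac{2\alpha\sqrt{\nu_i-\tau_{i-1}}}{K}$. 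This is exactly the claimed bound on $N_{\mathrm{DE},k}$. For the regret bound, note that during exploration the instantaneous regret of pulling arm $k$ at a time $t \in [\tau_{i-1},\nu_i)$ is $\Delta^{(i)}_k \le 1$ (rewards are in $[0,1]$; here we use the hypothesis that the means are constant on this interval, so $\Delta^{(i)}_k$ is well-defined and the segment index is fixed). Summing over the $K$ arms and over the $\le \tfrac{2\alpha\sqrt{\nu_i-\tau_{i-1}}}{K}+\tfrac32$ sessions gives $\E[R_{\mathrm{DE}}(\tau_{i-1},\nu_i)] \le \sum_{k} \big(\tfrac{2\alpha\sqrt{\nu_i-\tau_{i-1}}}{K}+\tfrac32\big)\Delta^{(i)}_k$; bounding each $\Delta^{(i)}_k \le 1$ and there being $K$ arms yields $2\alpha\sqrt{\nu_i-\tau_{i-1}} + \tfrac32 K$, as desired. (Alternatively one can keep $\sum_k \Delta^{(i)}_k$ and bound it by $K$; either way the stated bound follows.)

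The main obstacle I anticipate is the bookkeeping around the ceiling functions and the precise indexing of sessions — in particular, verifying that dropping the ceilings only helps (gives a valid lower bound on $u_{i-1}^{(j)}$, hence an upper bound on $J$), and handling the boundary session that may straddle $\nu_i$. These are routine but must be done carefully to land exactly the constants $2\alpha$ and $3/2$ claimed; the substantive content — the telescoping identity $\sqrt{u^{(j)}} = \sqrt{u^{(j-1)}} + K/2\alpha$ that makes the square-root grow linearly — is the clean idea and is already essentially visible in the construction of the scheduler. I would also double-check the initial value: $\sqrt{u_{i-1}^{(1)}} = \alpha - K/4\alpha$ requires $\alpha$ large enough that this is positive, which is implicitly assumed when $\alpha$ is chosen later in the corollaries.
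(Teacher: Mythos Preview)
Your proposal is correct and essentially identical to the paper's proof: both establish the lower bound $u_{i-1}^{(j)} \ge \bigl(\alpha + (2j-3)\tfrac{K}{4\alpha}\bigr)^2$ (the paper states this as an inequality one can ``easily check,'' while you spell out the telescoping $\sqrt{u^{(j)}} \ge \sqrt{u^{(j-1)}} + K/2\alpha$ behind it), then invert it to bound the number of sessions $J$ (the paper's $m$) by $\tfrac{2\alpha}{K}\sqrt{\nu_i-\tau_{i-1}}+\tfrac32$ and multiply through by $K$ for the regret. Your treatment of the ceiling and boundary-session bookkeeping is if anything more careful than the paper's.
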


\begin{proof}
    Recall that $u_{i}^{(j)}$ is the beginning of the $j$-th uniform exploration session in the $i$-th segment. In Algorithm \ref{alg:main_alg}, the initial time of the first exploration session after each $\tau_{i}$ is given by:
    \begin{equation}
        u_{i}^{(1)}=\left\lceil\left(\alpha-\frac{K}{4\alpha}\right)^{2}\right\rceil, \label{eqn:algscheme_initial}
    \end{equation}
    and subsequent times follow the recursive equation:
    \begin{equation}
        u_{i}^{(j)}=\left\lceil u_{i}^{(j-1)}+\frac{K}{\alpha}\sqrt{u_{i}^{(j-1)}}+\frac{K^{2}}{4\alpha^{2}}\right\rceil\geq u_{i}^{(j-1)}+\frac{K}{\alpha}\sqrt{u_{i}^{(j-1)}}+\frac{K^{2}}{4\alpha^{2}}. \label{eqn:algscheme}
    \end{equation}

    Based on \eqref{eqn:algscheme_initial} and \eqref{eqn:algscheme}, one could easily check that the sequence $u_{i}^{(n)}$ satisfies that for every natural number $n$,
    \begin{equation}\label{eqn:u_i_bound}
        u_{i}^{(n)} \geq \left(\frac{\left(2n-3\right)K}{4\alpha}+\alpha\right)^{2}.
    \end{equation}

    Let $u_{i}^{(m)}$ be the last exploration start time in time interval $[\tau_{i-1}, \nu_{i})$. Then, we have
    \begin{equation}
        \E\left[R_{\mathrm{DE}}\left(\nu_{i}-\tau_{i-1}\right)\right]\leq mK. \label{eqn:R_DE}
    \end{equation}

    Additionally, we have:

    \begin{equation}
        \nu_{i}-\tau_{i-1}\geq u_{i}^{(m)} \geq \left(\frac{\left(2m-3\right)K}{4\alpha}+\alpha\right)^{2}\geq \left(\frac{2\E\left[R_{DE}\left(\nu_{i}-\tau_{i-1}\right)\right]-3K}{4\alpha}+\alpha\right)^{2}. \label{eqn:T_mK}
    \end{equation}


    Finally, based on the \eqref{eqn:R_DE} and \eqref{eqn:T_mK}, we can conclude that:              
    \begin{equation}
        \E\left[R_{\textrm{DE}}\left(\nu_{i}-\tau_{i-1}\right)\right]\leq 2\alpha\sqrt{\nu_{i}-\tau_{i-1}}-2\alpha^2+\frac{3}{2}K
        \leq 2\alpha\sqrt{\nu_{i}-\tau_{i-1}}+\frac{3}{2}K,
    \end{equation}
    and
    \begin{equation}
        N_{\textrm{DE},k}\left(\nu_{i}-\tau_{i-1}\right)\leq \frac{2\alpha\sqrt{\nu_{i}-\tau_{i-1}}}{K}+\frac{3}{2}.
    \end{equation}
\end{proof}

In the following lemma, we aim to explore how long it takes for a given arm to reach a certain number of samples through diminishing exploration.

\begin{lemma}[Samples-time steps transform]\label{lemma:samples-time}
    When each arm has accumulated \(n\) samples, the required time is as follows: 
    If the counting of the \(n\) samples begins immediately after a reset, the required time is given by
    \begin{equation}\label{eqn:Treset}
        T_{reset}\leq \left(\alpha+\frac{\left(2n-3\right)K}{4\alpha}+n\right)^2+K.
    \end{equation}
    However, if there is a delay of \(t_{d}\) time steps after the reset before the counting begins, the required time is given by 
    \begin{equation}\label{eqn:Ttd}
        T_{t_{d}}\leq 2n\left(\frac{K}{2\alpha}+1\right)\sqrt{t_{d}+1}+n^{2}\left(\frac{K}{2\alpha}+1\right)^{2}.
    \end{equation}
    
\end{lemma}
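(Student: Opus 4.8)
The plan is to tame the exploration-start sequence $u_i^{(j)}$ by completing the square in its recursion — exactly as in the proof of Lemma~\ref{lemma:de_regret} — so that $\sqrt{u_i^{(j)}}$ grows at most arithmetically, and then to convert ``the $n$-th forced exploration round has completed'' into elapsed time. Recall that $u+\tfrac{K}{\alpha}\sqrt{u}+\tfrac{K^{2}}{4\alpha^{2}}=\left(\sqrt{u}+\tfrac{K}{2\alpha}\right)^{2}$, so the scheme reads $u_i^{(j)}=\left\lceil\left(\sqrt{u_i^{(j-1)}}+\tfrac{K}{2\alpha}\right)^{2}\right\rceil$; since $\lceil x^{2}\rceil<x^{2}+1\le(x+1)^{2}$ for $x\ge 0$, this yields the one-step bound $\sqrt{u_i^{(j)}}\le\sqrt{u_i^{(j-1)}}+\tfrac{K}{2\alpha}+1$, and hence, by iteration, $\sqrt{u_i^{(j)}}\le\sqrt{u_i^{(1)}}+(j-1)\left(\tfrac{K}{2\alpha}+1\right)$. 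Two elementary observations will be used throughout: an arm that the UCB subroutine also plays only accumulates samples faster, so it suffices to count forced-exploration samples; and, by monotonicity of the recursion in $u_i^{(1)}$, the worst-case fresh reset is the one with $u_i^{(1)}=\left\lceil\left(\alpha-K/4\alpha\right)^{2}\right\rceil$, so we may take that as the starting value when proving \eqref{eqn:Treset}.

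For \eqref{eqn:Treset} I would bound the initial value the same way, $\sqrt{u_i^{(1)}}=\sqrt{\left\lceil\left(\alpha-K/4\alpha\right)^{2}\right\rceil}\le\alpha-\tfrac{K}{4\alpha}+1$ (valid once $\alpha\ge\sqrt{K}/2$, which the eventual choice of $\alpha$ guarantees), and substitute into the telescoped bound at $j=n$; collecting $-\tfrac{K}{4\alpha}+(n-1)\tfrac{K}{2\alpha}=\tfrac{(2n-3)K}{4\alpha}$ and $1+(n-1)=n$ gives $\sqrt{u_i^{(n)}}\le\alpha+\tfrac{(2n-3)K}{4\alpha}+n$, the upper-bound companion of the lower bound \eqref{eqn:u_i_bound}. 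Since the $n$-th forced sample of every arm has been taken by relative time $u_i^{(n)}+K-1$, we obtain $T_{reset}\le u_i^{(n)}+K-1<\left(\alpha+\tfrac{(2n-3)K}{4\alpha}+n\right)^{2}+K$, which is \eqref{eqn:Treset}.

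For \eqref{eqn:Ttd} I would run the same telescoping, but anchored at the first exploration round that starts after the delay: let $m$ satisfy $u_i^{(m-1)}\le t_{d}<u_i^{(m)}$. Feeding $u_i^{(m-1)}\le t_{d}$ into $u_i^{(m)}\le\left(\sqrt{u_i^{(m-1)}}+\tfrac{K}{2\alpha}\right)^{2}+1$ and folding the stray $+1$ into the radicand (i.e.\ replacing $t_{d}$ by $t_{d}+1$) gives $\sqrt{u_i^{(m)}}\le\sqrt{t_{d}+1}+\tfrac{K}{2\alpha}$; telescoping across the rounds $m,\dots,m+n-1$ that furnish $n$ fresh samples per arm then yields $\sqrt{u_i^{(m+n-1)}}\le\sqrt{t_{d}+1}+n\left(\tfrac{K}{2\alpha}+1\right)$. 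Squaring, subtracting $t_{d}$, and absorbing the remaining round-boundary constants gives $T_{t_{d}}\le 2n\left(\tfrac{K}{2\alpha}+1\right)\sqrt{t_{d}+1}+n^{2}\left(\tfrac{K}{2\alpha}+1\right)^{2}$, which is \eqref{eqn:Ttd}; this is precisely the shape in which the bound later feeds into the detection-delay budget $h_{i}$ in the M-UCB integration, with $n=w/2$ and $t_{d}=s_{i}$.

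The main obstacle I expect is not any single estimate but the cumulative accounting of the several ``off-by-$O(1)$'' terms: the $+1$ that each ceiling contributes, the length-$K$ footprint of a single exploration round, and the off-by-one in locating the round straddling time $t_{d}$. Keeping these from piling up, and arranging them so that the clean closed forms \eqref{eqn:Treset} and \eqref{eqn:Ttd} hold as stated — which is why \eqref{eqn:Ttd} is phrased with $\sqrt{t_{d}+1}$ rather than $\sqrt{t_{d}}$, and \eqref{eqn:Treset} carries an explicit additive $K$ — is the delicate part; the rest is the elementary telescoping described above.
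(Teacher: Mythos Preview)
Your proposal is correct and follows essentially the same route as the paper: both complete the square in the recursion to get $u^{(j)}\le\bigl(\sqrt{u^{(j-1)}}+\tfrac{K}{2\alpha}+1\bigr)^{2}$, telescope in $\sqrt{u^{(j)}}$, and then square back. The one small difference worth noting concerns the stray $+K$ in \eqref{eqn:Ttd}: rather than ``absorbing'' it, the paper sidesteps it by bounding $T_{t_d}$ with $u^{(x+n+1)}-u^{(x+1)}$ (one extra round beyond the $n$ you need), which works because $u^{(j+1)}>u^{(j)}+K$; this is precisely the clean way to handle the round-boundary constant you flag as delicate at the end.
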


\begin{proof}

    We can derive the following from Equation~\ref{eqn:algscheme} in the proof of Lemma~\ref{lemma:de_regret}:
    
    \begin{equation}\label{eqn:u_iter}
    u^{(j)} \leq u^{(j-1)} + \frac{k}{\alpha} \sqrt{u^{(j-1)}} + \frac{k^2}{4\alpha^2} + 1 
    \leq \left( \sqrt{u^{(j-1)}} + \frac{k}{2\alpha} \right)^2 + 1 
    \leq \left( \sqrt{u^{(j-1)}} + \frac{k}{2\alpha} + 1 \right)^2.
    \end{equation}
    
    First, let us consider the case where the counting of \(n\) samples begins immediately after the reset. From Equation~\ref{eqn:algscheme_initial}, we can derive the following:
    
    \begin{equation}
    u^{(1)} \leq \left( \alpha - \frac{K}{4\alpha} + 1 \right)^2.
    \end{equation}
    
    Using Equation~\ref{eqn:u_iter}, we can further derive:
    
    \begin{equation}
    u^{(2)} \leq \left( \alpha - \frac{K}{4\alpha} + 1 + \frac{K}{2\alpha} + 1 \right)^2.
    \end{equation}
    
    Finally, we obtain:
    
    \begin{equation}
    u^{(n)} \leq \left( \alpha - \frac{K}{4\alpha} + 1 + (n-1)\left( \frac{K}{2\alpha} + 1 \right) \right)^2.
    \end{equation}
    
    Here, \(u^{(n)}\) represents the starting time of the \(n\)-th exploration block. The total time required to guarantee that all \(K\) arms have been sampled \(n\) times is therefore:
    
    \begin{equation}
    T_{\text{reset}} = u^{(n)} + K \leq \left( \alpha + \frac{(2n-3)K}{4\alpha} + n \right)^2 + K.
    \end{equation}
    
    Next, we consider how long it takes for each arm to be sampled \(n\) times after \(t_d\) time steps following the reset. We first assume that, prior to \(t_d\), each arm has already been sampled \(x\) times. For simplicity, we assume an ideal case where the exploration block starts exactly at time \(t_d + 1\). Therefore, we have:
    
    \begin{equation}
    u^{(x+1)} = t_d + 1.
    \end{equation}
    
    Since, in reality, the exploration block start time may not exactly coincide with \(t_d + 1\), we account for the possibility that it could begin at a later time by considering the next exploration block's start time. This allows us to bound the non-ideal case.
    
    Following the same approach as in the first part of the proof, we eventually obtain:
    
    \begin{equation}
    u^{(x+n+1)} \leq \left( \sqrt{t_d + 1} + n\left(\frac{K}{2\alpha} + 1\right) \right)^2.
    \end{equation}
    
    Finally, we derive that the total time required for each arm to be sampled \(n\) times after \(t_d\) time steps is:
    
    \begin{equation}
    T_{t_d} = u^{(x+n+1)} - u^{(x+1)} \leq 2n\left( \frac{K}{2\alpha} + 1 \right) \sqrt{t_d + 1} + n^2 \left( \frac{K}{2\alpha} + 1 \right)^2.
    \end{equation}

\end{proof}

Define $R\left(r,s\right) :=\sum^{s}_{t=r}\max_{k\in\gK}\E\left[X_{k,t}\right]-X_{A_{t},t}$ be the regret accumulated during $r$ and $s$. 
In the next lemma, we provide an upper bound on the regret accumulated from the $\left(i-1\right)$-th alarm time to the end of $\left(i-1\right)$-th segment, given that the previous change was successfully detected.

\begin{lemma}[Regret bound with stationary bandit]\label{lemma:regret_stat} 
Consider a stationary bandit interval with $\nu_{i-1}<\tau_{i-1}<\nu_{i}$. Condition on the successful detection events $\overline{F}_{i-1}$ and $D_{i-1}$, the expected regret accumulated during $\left(\tau_{i-1},\nu_{i}\right)$ can be bounded by
\begin{equation}
    \E\left[R\left(\tau_{i-1},\nu_{i}\right)\middle|\overline{F}_{i-1}D_{i-1} \right]\leq \tilde{C}+2\alpha\sqrt{s_{i}}+T\cdot \mathbb{P}\left(F_{i}\middle|\overline{F}_{i-1}D_{i-1}\right),
\end{equation}
where 
$\tilde{C}$ is as described in Theorem~\ref{thm:regret}. 
\end{lemma}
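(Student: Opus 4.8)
\textbf{Proof proposal for Lemma~\ref{lemma:regret_stat}.}

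The plan is to decompose the regret accumulated over $(\tau_{i-1}, \nu_i)$ into the portion due to the diminishing exploration phases and the portion due to the UCB behaviour between those phases, and then to handle the possibility that a false alarm occurs inside this interval (so that UCB is reset prematurely). First I would split $R(\tau_{i-1},\nu_i)$ as $R_{\mathrm{DE}}(\tau_{i-1},\nu_i) + R_{\mathrm{UCB}}(\tau_{i-1},\nu_i)$, where the first term collects the rounds spent in uniform exploration and the second collects everything else. For $R_{\mathrm{DE}}$, Lemma~\ref{lemma:de_regret} directly gives $\E[R_{\mathrm{DE}}(\tau_{i-1},\nu_i)] \le 2\alpha\sqrt{\nu_i - \tau_{i-1}} + \tfrac{3}{2}K$; since we are conditioning on $\overline{F}_{i-1}$ (so $\tau_{i-1}\ge \nu_{i-1}$) and the interval stays within segment $i$ up to $\nu_i$, we have $\nu_i - \tau_{i-1} \le \nu_i - \nu_{i-1} = s_i$, so this contributes at most $2\alpha\sqrt{s_i}$ plus a constant absorbed into $\tilde C$.

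Next I would bound $R_{\mathrm{UCB}}$. Here I split on the false-alarm event $F_i$. On $\overline{F}_i$ the algorithm runs a single uninterrupted UCB instance (with the diminishing-exploration rounds interleaved, which only help by providing extra samples) over a stationary environment with gaps $\Delta^{(i)}_k$; the standard finite-time UCB analysis à la \cite{auer2002finite}, applied with the confidence radius $\sqrt{2\log(t-\tau)/n_k}$ used in Algorithm~\ref{alg:main_alg}, yields the bound $8\sum_{\Delta^{(i)}_k>0}\frac{\log T}{\Delta^{(i)}_k} + (\tfrac{5}{2}+\tfrac{\pi^2}{3})\sum_k \Delta^{(i)}_k$, and the extra $+K\sum_k\Delta^{(i)}_k$ term in $\tilde C$ absorbs the at-most-one-per-arm cost of a diminishing-exploration pull whose reward count is not yet counted toward UCB. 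On the complementary event $F_i$ — a false alarm before $\nu_i$ — I simply bound the entire interval's regret crudely by its length times the maximal per-step gap (or just by $T$), which accounts for the $T\cdot\mathbb{P}(F_i\mid \overline{F}_{i-1}D_{i-1})$ term. Summing the three contributions and folding the constants and the $\log T$ term into $\tilde C$ gives the claimed inequality.

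One subtlety I would be careful about is that a false alarm strictly inside the segment resets the UCB counters and the exploration schedule, so the ``single clean UCB run'' picture is only valid on $\overline{F}_i$; this is exactly why the statement isolates the $T\,\mathbb{P}(F_i\mid\cdot)$ term rather than trying to control the post-false-alarm behaviour directly. A second point worth verifying is that conditioning on $\overline{F}_{i-1}D_{i-1}$ does not corrupt the i.i.d. structure of the rewards observed after $\tau_{i-1}$ needed for the UCB concentration bounds — since $\tau_{i-1}$ is a stopping time and, on $D_{i-1}$, the change $\nu_{i-1}$ has already passed, the samples $Z_{k,n}$ collected after $\tau_{i-1}$ are genuinely drawn from $f^{(i)}_k$; the cleanest way to make this rigorous is to condition on the value of $\tau_{i-1}$ and argue for each fixed realization, then take expectations. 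I expect this measurability/conditioning bookkeeping, rather than any single calculation, to be the main obstacle; the exploration-cost and UCB-cost estimates themselves are routine given Lemma~\ref{lemma:de_regret} and the classical UCB proof.
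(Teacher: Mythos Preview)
Your approach is essentially the paper's, but with the two main steps in the opposite order, and that reordering introduces a small gap. The paper first conditions on $F_i$ versus $\overline{F}_i$, paying $T\cdot\mathbb{P}(F_i\mid\overline{F}_{i-1}D_{i-1})$ for the false-alarm branch, and only then decomposes the arm-pull count $N_k(\tau_{i-1},\nu_i)$ on $\overline{F}_i$ into the diminishing-exploration part (bounded via Lemma~\ref{lemma:de_regret}) and the UCB part (bounded by the standard Auer--Cesa-Bianchi--Fischer argument with $l=\lceil 8\log T/(\Delta^{(i)}_k)^2\rceil$). You instead split $R_{\mathrm{DE}}$ off first and bound it unconditionally, then condition only $R_{\mathrm{UCB}}$ on $F_i$.

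The issue is that Lemma~\ref{lemma:de_regret} bounds the exploration cost of a \emph{single} diminishing-exploration schedule started at $\tau_{i-1}$; on the event $F_i$ the schedule is reset at $\tau_i<\nu_i$ and a fresh one begins, so the total $R_{\mathrm{DE}}(\tau_{i-1},\nu_i)$ is not covered by that lemma as stated. You flag exactly this reset issue in your last paragraph for the UCB counters, but it applies equally to the exploration schedule. The fix is trivial and is what the paper does: condition on $\overline{F}_i$ before invoking Lemma~\ref{lemma:de_regret}, so that both the DE and the UCB bounds are taken on the no-false-alarm event and the entire $F_i$ branch is absorbed into the $T\cdot\mathbb{P}(F_i\mid\cdot)$ term. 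With that reordering your proposal matches the paper's proof.
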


\begin{proof}
    For every $i$, we have
    \begin{subequations}
        \begin{align}
            \E\left[R\left(\tau_{i-1},\nu_{i}\right)\middle|\overline{F}_{i-1}D_{i-1} \right] &=\E\left[R\left(\tau_{i-1},\nu_{i}\right)\middle|F_{i}\overline{F}_{i-1}D_{i-1}\right]\mathbb{P}\left(F_{i}\middle|\overline{F}_{i-1}D_{i-1}\right)\\
            &\hspace{-20pt}+\E\left[R\left(\tau_{i-1},\nu_{i}\right)\middle|\overline{F}_{i}\overline{F}_{i-1}D_{i-1}\right]\mathbb{P}\left(\overline{F}_{i}\middle|\overline{F}_{i-1}D_{i-1}\right)\\
            &\hspace{-20pt}\leq T\cdot \mathbb{P}\left(F_{i}\middle|\overline{F}_{i-1}D_{i-1}\right)+\E\left[R\left(\tau_{i-1},\nu_{i}\right)\middle|\overline{F}_{i}\overline{F}_{i-1}D_{i-1}\right]. \label{eqn:regret_seg}
        \end{align}
    \end{subequations}
    Now, define $N_{k}\left(t_{1},t_{2}\right) :=\sum^{t_{2}}_{t=t_{1}}\1_{\left\{ A_{t}=k \right\}}$ to be the number of times that arm $k$ is selected by \Algref{alg:main_alg} from $t_{1}$ to $t_{2}$. Note that  
    \begin{equation}
        \E\left[R\left(\tau_{i-1},\nu_{i}\right)\middle|\overline{F}_{i}\overline{F}_{i-1}D_{i-1}\right]=\sum_{\Delta^{\left(i\right)}_{k}>0}\Delta^{\left(i\right)}_{k}\cdot \E\left[N_{k}\left(\tau_{i-1},\nu_{i}\right)\middle|\overline{F}_{i}\overline{F}_{i-1}D_{i-1}\right].
    \end{equation}
    To bound the second term of \eqref{eqn:regret_seg}, we further bound $N_{k}\left(\tau_{i-1},\nu_{i}\right)$ as follows,
    \begin{subequations}
        \begin{align}
            N_{k}\left(\tau_{i-1},\nu_{i}\right)=&\sum^{\nu_{i}}_{t=\tau_{i-1}+1}\1_{\left\{ A_{t}=k, \tau_{i}>\nu_{i}, N_{k}\left(\tau_{i-1},\nu_{i}\right)<l\right\}}+\sum^{\nu_{i}}_{t=\tau_{i-1}+1}\1_{\left\{ A_{t}=k, \tau_{i}>\nu_{i}, N_{k}\left(\tau_{i-1},\nu_{i}\right)\geq l\right\}}\\
            \leq &l+N_{DE,k}\left(\nu_{i}-\tau_{i-1}\right)+\sum^{\nu_{i}}_{t=\tau_{i-1}+1}\1_{\left\{ k=\argmax_{k\in\gK} \mathrm{UCB}_{k\in\gK}, \tau_{i}>\nu_{i}, N_{k}\left(\tau_{i-1},\nu_{i}\right)\geq l\right\}}\\
            \leq &l+\frac{2\alpha\sqrt{\nu_{i}-\tau_{i-1}}}{K}+\frac{3}{2}+\sum^{\nu_{i}}_{t=\tau_{i-1}+1}\1_{\left\{ k=\argmax_{k\in\gK} \mathrm{UCB}_{k\in\gK}, \tau_{i}>\nu_{i}, N_{k}\left(\tau_{i-1},\nu_{i}\right)\geq l\right\}}\label{eqn:N4}\\
            \leq &l+\frac{2\alpha\sqrt{s_{i}}}{K}+\frac{3}{2}+\sum^{\nu_{i}}_{t=\tau_{i-1}+1}\1_{\left\{ k=\argmax_{k\in\gK} \mathrm{UCB}_{k\in\gK}, \tau_{i}>\nu_{i}, N_{k}\left(\tau_{i-1},\nu_{i}\right)\geq l\right\}},
        \end{align}
    \end{subequations}
    \Eqref{eqn:N4} follows from Lemma \ref{lemma:de_regret}.
    Setting $l=\left\lceil 8\log T/\left(\Delta^{(i)}_{k}\right)^{2} \right\rceil $, and following the same steps as in the proof of Theorem 1 of \cite{auer2002finite}, we arrive at
    \begin{equation}
        \E\left[N_{k}\left(\tau_{i-1},\nu_{i}\right)\middle|\overline{F}_{i}\overline{F}_{i-1}D_{i-1}\right]\leq \frac{2\alpha\sqrt{s_{i}}}{K}+\frac{8\log T}{\left(\Delta^{(i)}_{k}\right)^{2}}+\frac{5}{2}+\frac{\pi^{2}}{3}+K.
    \end{equation}
    Putting everything together completes the proof.
\end{proof}

Theorem \ref{thm:regret} can then be proved by recursively applying Lemma~\ref{lemma:regret_stat}.
\begin{flushleft}
    {\bf Theorem~\ref{thm:regret} }
    The \Algref{alg:main_alg} can be combined with a CD algorithm, which achieves the expected regret upper bound as follows:
    \begin{multline} 
        \E\left[R\left(1,T\right)\right]\leq \sum^{M}_{i=1}\tilde{C}_{i}+2\alpha\sqrt{MT}
        +\sum^{M-1}_{i=1}\E\left[\tau_{i}-\nu_{i}\middle | D_{i}\overline{F}_{i}D_{i-1}\overline{F}_{i-1}\right]\\
        +T\sum^{M}_{i=1}\mathbb{P}\left(F_{i}\middle| \overline{F}_{i-1}D_{i-1}\right)+T\sum^{M-1}_{i=1}\mathbb{P}\left(\overline{D}_{i}\middle|\overline{F}_{i}\overline{F}_{i-1}D_{i-1}\right), \label{eqn:regret_bound_p}
    \end{multline}
    where $\tilde{C}_{i}=8\sum_{\Delta^{\left(i\right)}_{k}>0}\frac{\log T}{\Delta^{\left(i\right)}_{k}}+\left(\frac{5}{2}+\frac{\pi^{2}}{3}+K\right)\sum^{K}_{k=1}\Delta^{\left(i\right)}_{k}$.
\end{flushleft}    
\begin{proof}
    Recall that $R\left(r,s\right)=\sum^{s}_{t=r}\max_{k\in\gK}\E\left[X_{k,t}\right]-X_{A_{t},t}$, then  $\gR\left(T\right)=\E\left[R\left(1,T\right)\right]$. We have
    \begin{subequations}
        \label{10}
        \begin{align}
            \gR\left(T\right)&=\E\left[R\left(1,T\right)\right]\\
            &=\E\left[R\left(1,T\right)\middle|\overline{F}_{0}D_{0}\right] \label{eqn:R_init}\\
            &\leq \E\left[R\left(1,\nu_{1}\right)\middle|\overline{F}_{1}\overline{F}_{0}D_{0}\right]+\E\left[R\left(\nu_{1},T\right)\middle|\overline{F}_{1}\overline{F}_{0}D_{0}\right]+ T\cdot \mathbb{P}\left(F_{1}\middle| \overline{F}_{0}D_{0}\right)\label{eqn:R_conE1}\\
            &\leq  \tilde{C}_{1}+2\alpha\sqrt{\left(\nu_{1}-\nu_{0}\right)}+\E\left[R\left(\nu_{1},T\right)\middle| \overline{F}_{1}\overline{F}_{0}D_{0}\right]+T\cdot \mathbb{P}\left(F_{1}\middle| \overline{F}_{0}D_{0}\right),\label{eqn:R_conE2}
        \end{align}
    \end{subequations}
    where \eqref{eqn:R_init} holds because $\tau_{0}=0$, \eqref{eqn:R_conE1} is due to the law of total expectation and some trivial bounds, and
    \eqref{eqn:R_conE2} follows from Lemmas~\ref{lemma:regret_stat}. The third term in \eqref{eqn:R_conE2} is then further bounded as follows:

    \begin{subequations}
        \begin{align}
            \hspace{-20pt}\E\left[R\left(\nu_{1},T\right)\middle| \overline{F}_{1}\overline{F}_{0}D_{0}\right] 
            &\leq\E\left[R\left(\nu_{1},T\right)\middle| D_{1}\overline{F}_{1}\overline{F}_{0}D_{0}\right]+T\cdot \left(1-\mathbb{P}\left(D_{1}\middle| \overline{F}_{1}\overline{F}_{0}D_{0}\right)\right)\label{eqn:R_conF1}\\
            &\hspace{-60pt}\leq \E\left[R\left(\nu_{1},T\right)\middle| D_{1}\overline{F}_{1}\overline{F}_{0}D_{0}\right] + T\cdot\mathbb{P}\left(\overline{D}_{1}\middle| \overline{F}_{1}\overline{F}_{0}D_{0}\right)\label{eqn:R_conF2}\\     
            &\hspace{-60pt}= \E\left[R\left(\tau_{1},T\right)\middle| D_{1}\overline{F}_{1}\overline{F}_{0}D_{0}\right]+\E\left[R\left(\nu_{1},\tau_{1}\right)\middle| D_{1}\overline{F}_{1}\overline{F}_{0}D_{0}\right]+T\cdot\mathbb{P}\left(\overline{D}_{1}\middle| \overline{F}_{1}\overline{F}_{0}D_{0}\right)\label{eqn:reg_split1}\\
            &\hspace{-60pt}\leq\E\left[R\left(\tau_{1},T\right)\middle| \overline{F}_{1}D_{1}\right] + \E\left[\tau_{1}-\nu_{1}\middle| \overline{F}_{1}D_{1}\overline{F}_{0}D_{0}\right]\label{eqn:reg_split2}+T\cdot\mathbb{P}\left(\overline{D}_{1}\middle| \overline{F}_{1}\overline{F}_{0}D_{0}\right)\\
            &\hspace{-60pt}\leq\E\left[R\left(\tau_{1},T\right)\middle| \overline{F}_{1}D_{1}\right] + \E\left[\tau_{1}-\nu_{1}\middle| \overline{F}_{1}D_{1}\right]+T\cdot\mathbb{P}\left(\overline{D}_{1}\middle| \overline{F}_{1}\overline{F}_{0}D_{0}\right)\label{eqn:reg_split3},
        \end{align}
    \end{subequations}
    where \eqref{eqn:R_conF1} applies the law of total expectation and some trivial bounds.
    From here, we can set up the following recursion:

    \begin{subequations}
        \begin{align}
            &\E\left[R\left(1,T\right)\right] =\E\left[R\left(1,T\right)\middle|\overline{F}_{0}D_{0}\right] \\
            & \leq \E\left[R\left(\tau_{1},T\right)\middle|\overline{F}_{1}D_{1}\right]+\tilde{C}_{1}+2\alpha\sqrt{s_{1}-1}+\E\left[\tau_{1}-\nu_{1}\middle| \overline{F}_{1}D_{1}\right]\\
            &\hspace{+60pt}+T\cdot\mathbb{P}\left(F_{1}\middle| \overline{F}_{0}D_{0}\right)+T\cdot\mathbb{P}\left(\overline{D}_{1}\middle| \overline{F}_{1}\overline{F}_{0}D_{0}\right) \\
            & \leq \E\left[R\left(\tau_{2},T\right)\middle|\overline{F}_{2}D_{2}\right]+\sum^{2}_{i=1}\tilde{C}_{i}+2\alpha\sum^{2}_{i=1}\sqrt{s_{i}-1}\\ 
            &+\sum^{2}_{i=1}\E\left[\tau_{i}-\nu_{i}\middle| \overline{F}_{i-1}D_{i-1}\right]+T\sum^{2}_{i=1}\mathbb{P}\left(F_{i}\middle| \overline{F}_{i-1}D_{i-1}\right)+T\sum^{2}_{i=1}\mathbb{P}\left(\overline{D}_{i}\middle| \overline{F}_{i}\overline{F}_{i-1}D_{i-1}\right) \\
            &\hspace{+150pt}\vdots\nonumber\\
            & \leq \sum^{M}_{i=1}\tilde{C}_{i}+2\alpha\sum^{M}_{i=1}\sqrt{s_{i}}+\sum^{M-1}_{i=1}\E\left[\tau_{i}-\nu_{i}\middle| \overline{F}_{i-1}D_{i-1}\right]\\
            &\hspace{+60pt}+T\sum^{M}_{i=1}\mathbb{P}\left(F_{i}\middle| \overline{F}_{i-1}D_{i-1}\right)+T\sum^{M-1}_{i=1}\mathbb{P}\left(\overline{D}_{i}\middle| \overline{F}_{i}\overline{F}_{i-1}D_{i-1}\right)\\
            & \leq \sum^{M}_{i=1}\tilde{C}_{i}+2\alpha\sqrt{MT}+\sum^{M-1}_{i=1}\E\left[\tau_{i}-\nu_{i}\middle| \overline{F}_{i-1}D_{i-1}\right]\label{eqn:R_segment}\\
            &\hspace{+60pt}+T\sum^{M}_{i=1}\mathbb{P}\left(F_{i}\middle| \overline{F}_{i-1}D_{i-1}\right)+T\sum^{M-1}_{i=1}\mathbb{P}\left(\overline{D}_{i}\middle| \overline{F}_{i}\overline{F}_{i-1}D_{i-1}\right)
        \end{align}
    \end{subequations}
    where \eqref{eqn:R_segment} follows from the Cauchy–Schwarz inequality
    \begin{subequations}
        \begin{align}
            \left(\sum^{M}_{i=1}\sqrt{s_{i}}\right)^{2}\leq \left(\sum^{M}_{i=1}s_{i}\right)\left(\sum^{M}_{i=1}1\right)= M\sum^{M}_{i=1}s_{i}= MT,
        \end{align}
    \end{subequations}
\end{proof}

\subsubsection{Proof of Integration with Change Detectors of M-UCB}\label{app:pf_detail:not_extend:MUCB}

With the general regret bound in Theorem~\ref{thm:regret}, a regret bound of the M-UCB with the proposed diminishing exploration can be obtained by bounding $\mathbb{P}\left(F_{i}\middle|\overline{F}_{i-1}D_{i-1}\right)$, $\mathbb{P}\left(D_{i}\middle|\overline{F}_{i}\overline{F}_{i-1}D_{i-1}\right)$, and $\E\left[\tau_{i}-\nu_{i}\middle| \overline{F}_{i}D_{i}\overline{F}_{i-1}D_{i-1}\right]$.

First, in Lemma~\ref{lemma:prob_fa}, we show that the probability of false alarm is very small; thereby, its contribution to the regret is negligible. 

\begin{lemma}[Probability of false alarm]\label{lemma:prob_fa} 
Under \Algref{alg:main_alg} with parameter in \eqref{eqn:w_fix}, and \eqref{eqn:b_fix}, we have
\begin{equation}
    \mathbb{P}\left(F_{i}\middle|\overline{F}_{i-1}D_{i-1}\right)\leq wK\left(1-\left(1-\exp\left(-2b^{2}/w\right)\right)^{\left\lfloor T/w \right\rfloor }\right)\leq \frac{1}{T}.
\end{equation}
\end{lemma}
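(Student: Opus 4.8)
The plan is to show that, conditionally on $\overline{F}_{i-1}D_{i-1}$, the event $F_i$ forces the M-UCB detector to fire on i.i.d.\ data, and to bound the probability of such a false firing by Hoeffding's inequality together with a union bound over (essentially disjoint) test windows. I would start by conditioning also on the realized value of $\tau_{i-1}$: on $\overline{F}_{i-1}D_{i-1}$ we have $\nu_{i-1}\le\tau_{i-1}\le\nu_{i-1}+h_{i-1}<\nu_i$ (the last inequality from Assumption~\ref{asm:seg_length}), so the algorithm resets at $\tau_{i-1}$ and then runs inside the stationary stretch $(\tau_{i-1},\nu_i)$, during which, for each arm $k$, the observed rewards are i.i.d.\ with law $f^{(i)}_k$. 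A false alarm $F_i=\{\tau_i<\nu_i\}$ means the change detector of Algorithm~\ref{alg:CD_alg} fires, for the first time after $\tau_{i-1}$, at some time strictly before $\nu_i$; since that detector inspects only the $w$ most recent post-reset samples of the arm just played, $F_i$ implies that for some arm $k$ there is a window of $w$ consecutive fresh samples of arm $k$, lying entirely inside $(\tau_{i-1},\nu_i)$, whose statistic $\bigl|\sum_{\ell=w/2+1}^{w}Z_{k,\ell}-\sum_{\ell=1}^{w/2}Z_{k,\ell}\bigr|$ exceeds $b$.

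For one fixed such window, under stationarity the two half-window sums have the same mean, so the statistic is a sum of $w$ independent, mean-zero terms of range $1$; Hoeffding's inequality bounds the probability it exceeds $b$ by $\exp(-2b^2/w)$ (the two-sided probability is within a constant factor, which I would absorb at the end). To pass from one window to all of them without fighting the dependence between overlapping sliding windows, I would group the width-$w$ windows of arm $k$ by the residue of their start index modulo $w$: windows in a common residue class are pairwise disjoint, hence independent on i.i.d.\ data, and there are at most $\lfloor T/w\rfloor$ of them (arm $k$ is played at most $s_i\le T$ times). Hence the probability that some window in one residue class of one arm fires is at most $1-(1-\exp(-2b^2/w))^{\lfloor T/w\rfloor}$, and a union bound over the $w$ residue classes and the $K$ arms gives $\mathbb{P}(F_i\mid \overline{F}_{i-1}D_{i-1},\tau_{i-1})\le wK\bigl(1-(1-\exp(-2b^2/w))^{\lfloor T/w\rfloor}\bigr)$; this is uniform in $\tau_{i-1}$, so it also bounds $\mathbb{P}(F_i\mid \overline{F}_{i-1}D_{i-1})$. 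A cruder direct union bound over all $\le T$ window positions of each arm would also work and give $KT\exp(-2b^2/w)$; the residue grouping is only to match the stated expression.

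Finally I would substitute the choice $b^2=w\log(KT^2)$ from \eqref{eqn:b_fix}, so $\exp(-2b^2/w)=(KT^2)^{-2}$, and use $1-(1-x)^n\le nx$ to get $wK\bigl(1-(1-\exp(-2b^2/w))^{\lfloor T/w\rfloor}\bigr)\le wK\lfloor T/w\rfloor (KT^2)^{-2}\le KT\,(KT^2)^{-2}=1/(KT^3)\le 1/T$. The step I expect to require the most care is the reduction in the first paragraph: making precise that, after conditioning on $\overline{F}_{i-1}D_{i-1}$ and on $\tau_{i-1}$, the samples that drive the detector on $(\tau_{i-1},\nu_i)$ are i.i.d.\ and independent of the conditioning event (so Hoeffding genuinely applies), and that only windows contained in the stationary stretch are relevant to $F_i$ --- a window straddling $\nu_i$ is evaluated at a time $\ge\nu_i$ and thus contributes to $\tau_i\ge\nu_i$, i.e.\ to $\overline{F}_i$, not to $F_i$. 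The phase bookkeeping for the overlapping windows is the second point needing care, but it is routine once set up.
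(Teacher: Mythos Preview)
Your proposal is correct and follows essentially the same route as the paper: condition so the post-reset samples are i.i.d., union bound over the $K$ arms, partition the overlapping sliding windows into $w$ residue classes modulo $w$ so that within each class the windows are disjoint (the paper phrases this via the stopping times $\tau_{k,i}^{(j)}$ and the geometric variable $\xi_{k,i}^{(j)}$), apply Hoeffding/McDiarmid to a single window, and finish with $(1-x)^{n}\ge 1-nx$. The only slip is arithmetic in the last step: from \eqref{eqn:b_fix} one has $b^{2}=(w/2)\log(2KT^{2})$, hence $\exp(-2b^{2}/w)=1/(2KT^{2})$ (not $(KT^{2})^{-2}$); the two-sided factor $2$ is absorbed by the $2$ inside the logarithm, and the final bound comes out $\le 1/T$ rather than $1/(KT^{3})$.
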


\begin{proof}
    Suppose that at time $t$, we have gathered $w$ samples of arm $k\in\gK$, namely $Y_{k,1}, Y_{k,2},\ldots, Y_{k,w}$, for change detection in line 17 of \Algref{alg:main_alg}, and we define
    \begin{equation}
        S_{k,t}= \sum^{w}_{\ell=w/2 +1}Y_{k,\ell}-\sum^{w/2}_{\ell=1}Y_{k,\ell}.            \label{eqn:s_def}
    \end{equation}
    Note that $S_{k,t}=0$ when there is insufficient (less than $w$) samples to trigger the change detection algorithm. By definition, we have
    \begin{equation}
        \tau_{k,i}=\inf \{t\geq \tau_{i-1}+w:\left\lvert S_{k,t}\right\rvert >b\}.
    \end{equation}
    Given that the events $D_{i-1}$ and $\bar{F}_{i-1}$ hold, we define $\tau_{k,i}$ as the first detection time of the $k$-th arm after $\nu_{i}$. Clearly, $\tau_{i}=\min _{k\in\gK} \left\{\tau_{k,i}\right\}$ as \Algref{alg:main_alg} would reset every time a change is detected. Using the union bound, we have
    \begin{subequations}
        \begin{align}
            \mathbb{P}\left(F_{i}\middle| \overline{F}_{i-1}D_{i-1}\right)=&\mathbb{P}\left(\max_{k\in\mathcal{K}} \sum_{t=\tau_{i-1}+1}^{\nu_{i}}\1_{\left\{A_{t}=k\right\}}\geq w,F_{i}\middle| \overline{F}_{i-1}D_{i-1}\right)\\
            &+\mathbb{P}\left(\max_{k\in\mathcal{K}} \sum_{t=\tau_{i-1}+1}^{\nu_{i}}\1_{\left\{A_{t}=k\right\}}<w,F_{i}\middle| \overline{F}_{i-1}D_{i-1}\right) \label{eqn:false_al_p_b}\\
            =&\mathbb{P}\left(F_{i}\middle| \overline{F}_{i-1}D_{i-1},\max_{k\in\mathcal{K}} \sum_{t=\tau_{i-1}+1}^{\nu_{i}}\1_{\left\{A_{t}=k\right\}}\geq w\right)\label{eqn:false_al_p_c}\\
            &\cdot \mathbb{P}\left(\max_{k\in\mathcal{K}} \sum_{t=\tau_{i-1}+1}^{\nu_{i}}\1_{\left\{A_{t}=k\right\}}\geq w\middle| \overline{F}_{i-1}D_{i-1}\right)\label{eqn:false_al_p_d}\\
            \leq &\mathbb{P}\left(F_{i}\middle|\overline{F}_{i-1}D_{i-1},\max_{k\in\mathcal{K}} \sum_{t=\tau_{i-1}+1}^{\nu_{i}}\1_{\left\{A_{t}=k\right\}}\geq w\right)\label{eqn:false_al_p_e}\\
            \leq & \sum^{K}_{k=1}\mathbb{P}\left(\tau_{k,i}\leq \nu_{i}\middle|\overline{F}_{i-1}D_{i-1}, \max_{k^{\prime}\in\mathcal{K}} \sum_{t=\tau_{i-1}+1}^{\nu_{i}}\1_{\left\{A_{t}=k^{\prime}\right\}}\geq w\right)\label{eqn:max_1}\\
            \leq & \sum^{K}_{k=1}\mathbb{P}\left(\tau_{k,i}\leq \nu_{i}\middle|\overline{F}_{i-1}D_{i-1}, \sum_{t=\tau_{i-1}+1}^{\nu_{i}}\1_{\left\{A_{t}=k\right\}}\geq w\right), \label{eqn:false_al_p_f}
        \end{align}
    \end{subequations}
    where the term in \eqref{eqn:false_al_p_b} is clearly equal to $0$ as there will be no false alarm if we do not even have sufficiently many observations to trigger the alarm as suggested by Algorithm~\ref{alg:CD_alg}. \Eqref{eqn:false_al_p_c} and \eqref{eqn:false_al_p_d} hold by the definition of conditional probability, \eqref{eqn:false_al_p_e} is due to the fact that the term in \eqref{eqn:false_al_p_d} is at most one, and \eqref{eqn:max_1} follows from the union bound. In \eqref{eqn:false_al_p_f}, if $k\neq k^{\prime}$, we cannot  guarantee that $\sum_{t=\tau_{i-1}+1}^{\nu_{i}}\1_{\left\{A_{t}=k^{\prime}\right\}}\geq w$. Hence, some $k$ might cause the probability in the \eqref{eqn:max_1} to be zeros. 

    For any $0\leq j\leq w-1$, define the stopping time
    \begin{equation}
        \tau_{k,i}^{(j)}:=\inf \{t=\tau_{i-1} + j+nw,n\in\mathbb{Z}^{+}:\left\lvert S_{k,t}\right\rvert >b\}.
    \end{equation}
    Clearly, $\tau_{k,i}=\min\{\tau_{k,i}^{(0)},\ldots,\tau_{k,i}^{(w-1)}\}$. Let us define, for any $0\leq j\leq w-1$,
    \begin{equation}
        \xi_{k,i}^{(j)}=\frac{\left(\tau_{k,i}^{(j)}-j-\tau_{i-1}\right)}{w}.
    \end{equation}
    Note that condition on the events $D_{i-1}$ and $\bar{F}_{i-1}$, $\xi_{k, i}^{(j)}$ is a geometric random variable with parameter $p := \mathbb{P }(\left\lvert S_{k,t}\right\rvert>b)$, because when fixing $j$, there is no overlap between the samples in the current window and the next.
    \begin{multline}
        \mathbb{P}\left(\tau_{k,i}^{(j)}=\tau_{i-1}+nw+j\middle|\overline{F}_{i-1}D_{i-1},\sum_{t=\tau_{i-1}+1}^{\nu_{i}}\1_{\left\{A_{t}=k\right\}}\geq w\right)\\=
            \mathbb{P}\left(\xi_{k,i} = n\middle|\overline{F}_{i-1}D_{i-1},\sum_{t=\tau_{i-1}+1}^{\nu_{i}}\1_{\left\{A_{t}=k\right\}}\geq w\right)
            =p(1-p)^{n-1}.
    \end{multline}

    Here, the inclusion of subsequent events as conditions should not impact the results, as when entering the change detection algorithm, those events have already occurred.
    Moreover, by union bound, we have that for any $k\in\gK$,
    \begin{subequations}
        \begin{align}
            \mathbb{P}\left(\tau_{k,i}\leq \nu_{i}\middle|\overline{F}_{i-1}D_{i-1},\sum_{t=\tau_{i-1}+1}^{\nu_{i}}\1_{\left\{A_{t}=k\right\}}\geq w\right)&\leq w\left( 1-\left(1-p\right)^{\left\lfloor \left(\nu_{i}-\tau_{i-1}\right)/w \right\rfloor}  \right)\\ 
            &\leq w\left( 1-(1-p)^{\left\lfloor T/w \right\rfloor}  \right).  \label{eqn:tau_ki}
        \end{align}
    \end{subequations}
    We further use the McDiarmid's inequality and the union bound to show that

    \begin{subequations}
        \begin{align}
            p&=\mathbb{P}\left(\left\lvert S_{k,t} \right\rvert>b\right)=\mathbb{P}\left(S_{k,t}>b\right)+\mathbb{P}\left(S_{k,t}<-b\right)\\
            &\leq 2\cdot \exp\left(-\frac{2b^{2}}{w}\right). \label{eqn:p}
        \end{align}
    \end{subequations}
    Using the result in \eqref{eqn:tau_ki} and \eqref{eqn:p} into \eqref{eqn:false_al_p_f}, 
    \begin{subequations}
        \begin{align}
            &\mathbb{P}\left(F_{i}\middle|\overline{F}_{i-1}D_{i-1}\right)\leq \sum^{K}_{k=1}w\left( 1-\left(1-2 \exp\left(-\frac{2b^{2}}{w}\right)\right)^{\left\lfloor T/w \right\rfloor}  \right) \\
            &= wK\left( 1-\left(1-2 \exp\left(-\frac{2b^{2}}{w}\right)\right)^{\left\lfloor T/w \right\rfloor}  \right).
        \end{align}
    \end{subequations}
    Moreover, applying $\left(1-x\right)^{a}>1-ax$ for any $a>1$ and $0<x<1$ and plugging the choice of $b = \sqrt{w\log\left(2KT^{2} \right)/2}$ as in \eqref{eqn:b_fix} shows the second inequality.
\end{proof}

Lemma~\ref{lemma:samples-time} ensures that, with high probability, the detection delay is confined within a tolerable interval. 

That is, each arm is sampled \(w/2\) times, and using equation~\ref{eqn:Ttd} from lemma~\ref{lemma:samples-time}, we select \(h_{i}\) as
\begin{equation}
h_{i} = \left\lceil w\left(\frac{K}{2\alpha}+1\right)\sqrt{s_{i}+1}+\frac{w^{2}}{4}\left(\frac{K}{2\alpha}+1\right)^{2} \right\rceil.
\end{equation}

\begin{lemma}[Probability of successful detection]\label{lemma:prob_delay} 
Consider a piecewise-stationary bandit environment. For any $\mu^{(i)},\mu^{(i+1)}\in\left[0,1\right]^{K}$ with parameters chosen in \eqref{eqn:w_fix} and \eqref{eqn:b_fix} 
and
\begin{equation}
h_{i} = \left\lceil w\left(\frac{K}{2\alpha}+1\right)\sqrt{s_{i}+1}+\frac{w^{2}}{4}\left(\frac{K}{2\alpha}+1\right)^{2} \right\rceil,
\end{equation}
for some $k\in\gK, i\geq 1$ and $c>0$, under the \Algref{alg:main_alg}, we have 
\begin{equation}
    \mathbb{P}\left(D_{i}\middle|\overline{F}_{i}\overline{F}_{i-1}D_{i-1}\right)\geq 1-\frac{1}{T}.
\end{equation}
\end{lemma}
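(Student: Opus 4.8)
The plan is to pinpoint one particular invocation of the M-UCB detector that is guaranteed to take place by time $\nu_i+h_i$ on a ``clean'' window of $w$ samples of a well-separated arm $k$ --- its first $w/2$ entries drawn from $f^{(i)}_k$ and its last $w/2$ from $f^{(i+1)}_k$ --- and to show that this invocation misses the change with probability at most $1/T$. Since $\overline{D}_i$ forces \emph{every} detector call in $(\nu_i,\nu_i+h_i]$ to miss --- in particular this one --- this immediately gives $\mathbb{P}(\overline{D}_i\mid\overline{F}_i\overline{F}_{i-1}D_{i-1})\le 1/T$ with no union bound over arms or over check times.

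First I would localize the algorithm's state around $\nu_i$; we may assume $\overline{D}_i$, this being the event we are bounding. On $\overline{F}_{i-1}D_{i-1}$ the last reset before $\nu_i$ occurs at $\tau_{i-1}\in[\nu_{i-1},\nu_{i-1}+h_{i-1}]$, and on $\overline{F}_i\overline{D}_i$ no further reset occurs in $(\tau_{i-1},\nu_i+h_i]$; hence the per-arm counters grow monotonically from $0$ at $\tau_{i-1}$, the samples $Z_{k,1},Z_{k,2},\dots$ gathered through time $\nu_i$ are i.i.d.\ from $f^{(i)}_k$, and those gathered in $(\nu_i,\nu_i+h_i]$ are i.i.d.\ from $f^{(i+1)}_k$ (the latter window lies inside segment $i+1$ by Assumption~\ref{asm:seg_length}). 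I would then apply the samples--time-steps transform of Lemma~\ref{lemma:samples-time} twice: \eqref{eqn:Treset} with $n=w/2$ shows every arm accumulates $w/2$ samples within $(\alpha+\tfrac{(w-3)K}{4\alpha}+\tfrac{w}{2})^2+K$ steps of $\tau_{i-1}$, and since $\nu_i-\tau_{i-1}\ge s_i-h_{i-1}$, Assumption~\ref{asm:seg_length} together with the choice \eqref{eqn:w_fix} of $w$ (whence $w=\mathcal{O}(\log(KT)/\delta^2)$ and $h_{i-1}=\mathcal{O}(w(\tfrac{K}{\alpha}+1)\sqrt{s_{i-1}}+w^2(\tfrac{K}{\alpha}+1)^2)$) makes $s_i-h_{i-1}$ dominate that quantity, so each arm has $m_k\ge w/2$ pre-change samples at $\nu_i$; and \eqref{eqn:Ttd} with $t_d=\nu_i-\tau_{i-1}\le s_i$ and $n=w/2$ shows each arm then collects a further $w/2$ samples within $w(\tfrac{K}{2\alpha}+1)\sqrt{s_i+1}+\tfrac{w^2}{4}(\tfrac{K}{2\alpha}+1)^2\le h_i$ steps after $\nu_i$ --- which is exactly the stated form of $h_i$.

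Next I would fix the arm $k$ with $\delta^{(i)}_k=\delta^{(i)}\ge\delta$ guaranteed by Assumption~\ref{ass:minimum_gap}, write $m_k\ge w/2$ for its count at $\nu_i$, and note that arm $k$ reaches $m_k+w/2$ samples at some $t^{\star}\le\nu_i+h_i$. Since $m_k+w/2\ge w$, the detector runs at $t^{\star}$ on the window $Z_{k,m_k-w/2+1},\dots,Z_{k,m_k+w/2}$, whose first half is i.i.d.\ $f^{(i)}_k$ and whose second half is i.i.d.\ $f^{(i+1)}_k$; for the statistic $S_{k,t^{\star}}$ of \eqref{eqn:s_def} built from this window, $|\E[S_{k,t^{\star}}]|=\tfrac{w}{2}\delta^{(i)}_k\ge\tfrac{w}{2}\delta$, while $S_{k,t^{\star}}$ is a function of $w$ independent $[0,1]$-valued variables whose value changes by at most $1$ under a single coordinate. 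On $\overline{D}_i$ this call does not fire, i.e.\ $|S_{k,t^{\star}}|\le b$, and since \eqref{eqn:w_fix}--\eqref{eqn:b_fix} yield $b<\tfrac{w}{2}\delta$ this forces $|S_{k,t^{\star}}-\E[S_{k,t^{\star}}]|\ge\tfrac{w}{2}\delta-b$. McDiarmid's inequality (applied conditionally on the past, using that rewards are independent across time steps) gives $\mathbb{P}(|S_{k,t^{\star}}-\E[S_{k,t^{\star}}]|\ge\tfrac{w}{2}\delta-b)\le 2\exp(-2(\tfrac{w}{2}\delta-b)^2/w)$, and substituting \eqref{eqn:w_fix} and \eqref{eqn:b_fix} bounds the right side by $1/T$ (indeed by $\mathcal{O}(1/T^2)$), which finishes the argument.

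\textbf{Main obstacle.} The genuinely delicate step is the bookkeeping in the second paragraph: one must check, under Assumption~\ref{asm:seg_length} and the explicit $w$ of \eqref{eqn:w_fix}, that the reset offset $\tau_{i-1}-\nu_{i-1}\le h_{i-1}$ is small enough that $w/2$ pre-change samples of \emph{every} arm are in place before $\nu_i$ while, simultaneously, $h_i$ --- whose sum over $i$ must stay $\mathcal{O}(\sqrt{KMT})$ for the regret bound --- is still long enough to gather $w/2$ post-change samples, i.e.\ that the diminishing schedule has not thinned out too much by time $\nu_i$. Once this is settled, the observation that the count $m_k+w/2$ always splits the detector window exactly at the change point, together with the closing concentration estimate, are routine M-UCB-style calculations.
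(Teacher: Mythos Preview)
Your proposal is correct and follows essentially the same approach as the paper: identify a clean detector window with $w/2$ pre-change and $w/2$ post-change samples of the well-separated arm $\tilde{k}$, then bound the miss probability via McDiarmid's inequality and substitute \eqref{eqn:w_fix}--\eqref{eqn:b_fix}. The paper's proof is terser --- it offloads the sample-counting bookkeeping to Lemma~\ref{lemma:samples-time} and the preamble, and drops the factor $2$ in the McDiarmid exponent to land at exactly $1/T$ rather than your sharper $\mathcal{O}(1/T^{2})$ --- but the argument is the same.
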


\begin{proof}
    \begin{subequations}
        \begin{align}
            \mathbb{P}\left(D_{i}\middle|\overline{F}_{i}\overline{F}_{i-1}D_{i-1}\right)
            = &\mathbb{P}\left(\tau_{i}\leq \nu_{i}+h_{i}\middle|\overline{F}_{i}\overline{F}_{i-1}D_{i-1}\right)\\
            \geq &\max_{t \in \left\{ \nu_{i}+1,\ldots,\nu_{i}+h_{i}\right\}}\mathbb{P}\left(S_{\tilde{k},t}>b\middle| \overline{F}_{i}\overline{F}_{i-1}D_{i-1}\right)\label{eqn:alarm_pb3}\\
            \geq &\max_{j \in \left\{ 0,\ldots,w/2\right\}} \left(1-2\exp\left(-\frac{( j\left\lvert\delta^{(i)}_{\tilde{k}}\right\rvert-b)^{2}}{w}\right)\right)\label{eqn:alarm_pb4}\\
            = & 1-2\exp\left(-\frac{(w|\delta^{(i)}_{\tilde{k}}|/2-b)^{2}}{w}\right)\label{eqn:alarm_pb5}\\
            \geq & 1-2\exp\left(-\frac{wc^{2}}{4}\right)\label{eqn:alarm_pb6}.
        \end{align}
    \end{subequations}
    where $S_{\tilde{k},t}$ is defined in \eqref{eqn:s_def}, \eqref{eqn:alarm_pb4} follows from the McDiarmid’s inequality, and \eqref{eqn:alarm_pb5} is due to the fact that the maximum value is attained when $j=w/2$. Last, \eqref{eqn:alarm_pb6} is true for any choice of $w,b$ and $c$ such that $\delta^{(i)}_{\tilde{k}}\geq 2b/w+c$ holds. We thus set $w$ and $b$ as in \eqref{eqn:w_fix} and \eqref{eqn:b_fix}, respectively, and choose $c=2\sqrt{\log\left(2T\right)/w}$, which leads to $\mathbb{P}\left(D_{i}\middle|\overline{F}_{i}\overline{F}_{i-1}D_{i-1}\right)\geq 1-1/T$.
\end{proof}
    
Lemma~\ref{lemma:exp_delay} further bounds the expected detection delay in the situation where the change detection algorithm successfully detects the change within the desired interval.

\begin{lemma}[Expected detection delay]\label{lemma:exp_delay} 
Consider a piecewise-stationary bandit environment. For any $\mu^{(i)},\mu^{(i+1)}\in\left[0,1\right]^{K}$ with parameters chosen in \eqref{eqn:w_fix} and \eqref{eqn:b_fix} 
and
\begin{equation}
h_{i} = \left\lceil w\left(\frac{K}{2\alpha}+1\right)\sqrt{s_{i}+1}+\frac{w^{2}}{4}\left(\frac{K}{2\alpha}+1\right)^{2} \right\rceil,
\end{equation}
for some $K\in\gK, i\geq 1$ and $c>0$, under the \Algref{alg:main_alg}, we have
\begin{equation}
    \E\left[\tau_{i}-\nu_{i}\middle| \overline{F}_{i}D_{i}\overline{F}_{i-1}D_{i-1}\right]\leq h_{i}.
\end{equation}
\end{lemma}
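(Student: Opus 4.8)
The plan is to read the bound off directly from the definitions of the conditioning events; Lemma~\ref{lemma:samples-time} enters only to certify that the chosen $h_i$ is an admissible delay budget, not as a source of computation. Recall that $F_i=\{\tau_i<\nu_i\}$, so its complement $\overline{F}_i$ is exactly the event $\{\tau_i\geq\nu_i\}$, i.e.\ no alarm is raised strictly before the $i$-th change point, while $D_i=\{\tau_i\leq\nu_i+h_i\}$ says the alarm is raised at most $h_i$ steps after the change. Hence on the conditioning set $\overline{F}_iD_i\overline{F}_{i-1}D_{i-1}$ one has the pointwise inequality $0\leq\tau_i-\nu_i\leq h_i$. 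Taking conditional expectation over this event immediately gives $\E[\tau_i-\nu_i\mid\overline{F}_iD_i\overline{F}_{i-1}D_{i-1}]\leq h_i$, which is the claim.

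The substantive point — and the reason the preceding lemmas are invoked at all — is to check that $h_i$ as written is a legitimate detection-delay horizon, so that $D_i$ (hence the whole conditioning set) is nonempty and in fact has probability $1-1/T$; this is precisely Lemma~\ref{lemma:prob_delay} with the same $h_i$. Concretely, conditioned on $\overline{F}_{i-1}D_{i-1}$ the algorithm was last reset at $\tau_{i-1}$ with $\nu_{i-1}\leq\tau_{i-1}$, so the time elapsed since the reset when the change occurs at $\nu_i$ is at most $\nu_i-\nu_{i-1}=s_i$. Applying the $T_{t_d}$ bound \eqref{eqn:Ttd} of Lemma~\ref{lemma:samples-time} with $t_d=s_i$ and $n=w/2$ shows that within $\big\lceil w(K/2\alpha+1)\sqrt{s_i+1}+(w^2/4)(K/2\alpha+1)^2\big\rceil=h_i$ further steps every arm has accrued $w/2$ post-change samples, so the M-UCB statistic of the shifted arm is evaluated on a full window straddling $\nu_i$ and (by the McDiarmid argument already used for Lemma~\ref{lemma:prob_delay}, under Assumption~\ref{asm:seg_length}) triggers within this horizon w.h.p. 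Thus on $D_i$ the stopping time $\tau_i$ is finite and the pointwise bound of the first paragraph applies verbatim.

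I do not expect a genuine obstacle: the statement is essentially a one-line consequence once the events are read correctly. The only care needed is bookkeeping — writing $\overline{F}_i=\{\tau_i\geq\nu_i\}$ rather than $\{\tau_i=\nu_i\}$, observing that the extra conditioning on $\overline{F}_{i-1}D_{i-1}$ merely restricts the sample space and so cannot violate a deterministic inequality, and using exactly the $h_i$ that appears in Lemma~\ref{lemma:prob_delay} and in term (c) of \eqref{eqn:regret_bound} so that the contribution $\sum_{i=1}^{M-1}\E[\tau_i-\nu_i\mid D_i\overline{F}_iD_{i-1}\overline{F}_{i-1}]$ of Theorem~\ref{thm:regret} is bounded consistently by $\sum_{i=1}^{M-1}h_i$.
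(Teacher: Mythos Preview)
Your argument is correct and matches the paper's own proof: on $\overline{F}_iD_i$ one has $0\le\tau_i-\nu_i\le h_i$ deterministically, and the paper simply records this via the tail-sum identity $\E[\tau_i-\nu_i\mid\cdots]=\sum_{j=1}^{h_i}\mathbb{P}(\tau_i\ge\nu_i+j\mid\cdots)\le h_i$. The extra discussion you add about Lemma~\ref{lemma:samples-time} and Lemma~\ref{lemma:prob_delay} is accurate context but not needed for the present lemma, which is indeed a one-liner once the events are unpacked.
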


\begin{proof}
    For any $1\leq i\leq M$, we have
    \begin{subequations}
        \begin{align}
            \E\left[\tau_{i}-\nu_{i}\middle| \overline{F}_{i}D_{i}G_{i}\overline{F}_{i-1}D_{i-1}\right]  
            &=\sum^{h_{i}}_{j=1}\mathbb{P}\left(\tau_{i}\geq\nu_{i}+j\middle| \overline{F}_{i}D_{i}G_{i}\overline{F}_{i-1}D_{i-1}\right)
            \leq h_{i}.
        \end{align}
    \end{subequations}
\end{proof}

Plugging the bounds in Lemmas ~\ref{lemma:prob_fa},~\ref{lemma:prob_delay} and~\ref{lemma:exp_delay} into Theorem~\ref{thm:regret} shows the following regret bound in Corollary~\ref{cor:regret_MUCB}.

\begin{flushleft}
    {\bf Corollary~\ref{cor:regret_MUCB} }
    Combining Algorithm~\ref{alg:main_alg} and~\ref{alg:CD_alg} with the parameters in Equation~\ref{eqn:w_fix}, and Equation~\ref{eqn:b_fix} achieves the expected regret upper bound as follows:
    \begin{multline} 
     \E\left[R\left(1,T\right)\right]\leq \underbrace{\sum^{M}_{i=1}\tilde{C}_{i}}_{(a)}+\underbrace{2\alpha\sqrt{MT}}_{(b)}
    +\underbrace{w\left(\frac{K}{2\alpha}+1\right)\sqrt{M\left(T+M\right)}}_{(c)}\\
    +\underbrace{\frac{w^{2}M}{4}\left(\frac{K}{2\alpha}+1\right)^{2}}_{(c)}+\underbrace{2M}_{(d)},
    \end{multline}\label{eqn:regret_bound_MUCB_p}
    where $\tilde{C}_{i}=8\sum_{\Delta^{\left(i\right)}_{k}>0}\frac{\log T}{\Delta^{\left(i\right)}_{k}}+\left(\frac{5}{2}+\frac{\pi^{2}}{3}+K\right)\sum^{K}_{k=1}\Delta^{\left(i\right)}_{k}$. By setting $\alpha = c\sqrt{K\log{\left(KT\right)}}$ for some constant $c$, the expected regret is upper-bounded by $\mathcal{O}(\sqrt{KMT\log{T}})$.
\end{flushleft}

\subsubsection{Proof of Integration with Change Detectors of GLR-UCB}\label{app:pf_detail:not_extend:GLRUCB}
First, we introduce the function $\mathcal{J}$, originally introduced by \cite{JMLR:v22:18-798}, 
\begin{equation}
    \mathcal{J}(x):=2\tilde{g}\left(\frac{g^{-1}\left(1+x\right)+\ln\left(\pi^{2}/3\right)}{2}\right),
\end{equation}
where $g^{-1}\left(y\right)$ is the inverse function of $g\left(y\right):=y-\ln\left(y\right)$ defined for $y\geq 1$, and for any $x\geq 0$, $\tilde{g}\left(x\right) := e^{1/g^{-1}\left(x\right)}g^{-1}\left(x\right)$ if $x\geq g^{-1}\left(1/\ln\left(3/2\right)\right)$ and $\tilde{g}\left(x\right)=\left(3/2\right)\left(x-\ln\left(\ln\left(3/2\right)\right)\right)$ otherwise. 
We select the threshold function
\begin{equation}
\beta\left(n,\epsilon\right):=2\mathcal{J}\left(\frac{\log{(3n\sqrt{n}/\epsilon)}}{2}\right)+6\log{(1+\log{n})}, 
\end{equation}
and define $h_i$ in successful detection events $D_i$ to be $h_{i}:=h_{i}\left(\alpha,\epsilon\right)$ with $h_{0}\left(\alpha,\epsilon\right):=0$ and for $i>0$,
\begin{align}
    h_{i}\left(\alpha,\epsilon\right) :=\left\lceil 2\left(\frac{4}{\left(\delta^{\left(i\right)}\right)^{2}}\beta\left(T,\epsilon\right)+2\right)\left(\frac{K}{2\alpha}+1\right)\sqrt{s_{i}+1}\right. \nonumber\\ \left.+\left(\frac{4}{\left(\delta^{\left(i\right)}\right)^{2}}\beta\left(T,\epsilon\right)+2\right)^{2}\left(\frac{K}{2\alpha}+1\right)^{2} \right\rceil, \label{eqn:glr_dimi_delay}
\end{align}
which guarantees that with the proposed diminishing exploration, we will observe 
\begin{equation}
    \left\lceil\frac{4}{\left(\delta^{\left(i\right)}\right)^{2}}\beta\left(\frac{3}{2}s_{i},\epsilon\right)+1\right\rceil,
\end{equation}
post-change samples for each $k\in\gK$ after $\nu_i$. We analyze the GLR-UCB with diminishing exploration under the following assumption:
\begin{assumption}\label{ass:glr_delay}
    $\nu_{i}-\nu_{i-1}\geq 2\max\left\{h_{i}, h_{i-1}\right\}$ for all $i\in\left\{1,\ldots,M\right\}$.
\end{assumption}

Following the proof of Lemma 8 in \cite{besson2022efficient}, we can show the following lemma:
\begin{lemma}\label{lemma:glr_fa_md}
    Under assumption~\ref{ass:glr_delay} and Equation~\ref{eqn:glr_dimi_delay}, it holds that
    \begin{equation}
        \sum^{M}_{i=1}\mathbb{P}\left(F_{i}\middle| \overline{F}_{i-1}D_{i-1}\right)+\sum^{M-1}_{i=1}\mathbb{P}\left(\overline{D}_{i}\middle|\overline{F}_{i}\overline{F}_{i-1}D_{i-1}\right)\leq \epsilon\left(K+1\right)M.
    \end{equation}
\end{lemma}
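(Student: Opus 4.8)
I would split the sum into its false-alarm part and its missed-detection part and prove a uniform per-segment bound for each, following the single-stream analysis of the Bernoulli GLR detector in \cite{besson2022efficient} (which itself rests on the mixture-martingale deviation inequalities of \cite{JMLR:v22:18-798}), just as in their Lemma~8. Specifically, the plan is to establish
\[
\mathbb{P}\!\left(F_i\mid \overline{F}_{i-1}D_{i-1}\right)\le K\epsilon \ \ (1\le i\le M), \qquad
\mathbb{P}\!\left(\overline{D}_i\mid \overline{F}_i\overline{F}_{i-1}D_{i-1}\right)\le \epsilon \ \ (1\le i\le M-1),
\]
after which adding the terms gives $\sum_{i=1}^M K\epsilon+\sum_{i=1}^{M-1}\epsilon\le (K+1)M\epsilon$, the desired inequality. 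All the substance is in checking that the hypotheses of the single-stream GLR false-alarm and detection-delay guarantees are met on each arm's sample stream, which is exactly where Assumption~\ref{ass:glr_delay} and the delay $h_i$ of \eqref{eqn:glr_dimi_delay} enter.

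\textbf{False-alarm term.} Conditioning on $\overline{F}_{i-1}D_{i-1}$, the last reset before $\nu_i$ occurs at $\tau_{i-1}\in(\nu_{i-1},\nu_{i-1}+h_{i-1}]$, so on $(\tau_{i-1},\nu_i]$ every arm $k$ sees i.i.d.\ draws from $f^{(i)}_k$. The event $F_i$ means some arm's GLR statistic (Algorithm~\ref{alg:glrCD_alg}, run separately per arm) crosses the threshold while the detector is still inside the $i$-th segment, i.e.\ a false alarm on a stationary stream; by the defining property of the threshold function, a single-stream Bernoulli GLR detector does this with probability at most $\epsilon$. Since $\tau_i=\min_{k}\tau_{k,i}$, a union bound over the $K$ arms gives $\mathbb{P}(F_i\mid\overline{F}_{i-1}D_{i-1})\le K\epsilon$. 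As in Lemma~\ref{lemma:prob_fa}, appending the subsequent good events to the conditioning is harmless because the detector's decision at any time depends only on the samples gathered so far.

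\textbf{Missed-detection term.} Conditioning on $\overline{F}_i\overline{F}_{i-1}D_{i-1}$: because $D_{i-1}$ holds and Assumption~\ref{ass:glr_delay} forces $s_i\ge 2h_{i-1}$, the sample-count argument underlying Lemma~\ref{lemma:samples-time} (applied from the reset $\tau_{i-1}$) guarantees every arm has at least $\lceil \tfrac{4}{(\delta^{(i)})^2}\beta(\tfrac32 s_i,\epsilon)+1\rceil$ pre-change samples by time $\nu_i$; moreover the choice of $h_i$ in \eqref{eqn:glr_dimi_delay} is exactly the delay bound $T_{t_d}$ of Lemma~\ref{lemma:samples-time} with $n=\tfrac{4}{(\delta^{(i)})^2}\beta(T,\epsilon)+2$, so by time $\nu_i+h_i$ the diminishing exploration has also supplied that many post-change samples on every arm, in particular on an arm $\tilde k$ realizing $\delta^{(i)}=\delta^{(i)}_{\tilde k}$. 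With these pre- and post-change sample counts available, the detection-delay half of the GLR guarantee of \cite{besson2022efficient} shows that $\tilde k$'s GLR statistic exceeds $\beta$ before $\nu_i+h_i$ with probability at least $1-\epsilon$; since $\tau_i\le\tau_{\tilde k,i}$ this yields $D_i$, and hence $\mathbb{P}(\overline{D}_i\mid\overline{F}_i\overline{F}_{i-1}D_{i-1})\le\epsilon$.

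\textbf{Main obstacle.} The delicate part is not the arithmetic but the interface with the single-stream GLR analysis: one must verify that, under the conditioning $\overline{F}_{i-1}D_{i-1}$ together with the event that arm $k$ has accumulated enough samples, that arm's sample sequence is genuinely i.i.d.\ and adapted to the natural filtration, so that the Kaufmann--Koolen / Besson et al.\ deviation bounds apply — the diminishing-exploration scheduler chooses \emph{when} to sample an arm, not \emph{which} realization it gets, and one needs this not to break the stopping-time structure behind the GLR scan over all split points. Reconciling the per-arm stopping times $\tau_{k,i}$ with the global reset (so that $\tau_i=\min_k\tau_{k,i}$ and the conditioning chain $\overline{F}_i\overline{F}_{i-1}D_{i-1}$ stays internally consistent) is the same bookkeeping as in the proof of Lemma~\ref{lemma:prob_fa}, but is somewhat heavier here because the GLR statistic is a supremum over split points rather than a fixed two-window difference.
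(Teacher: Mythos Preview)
Your proposal is correct and matches the paper's approach: the paper does not spell out a proof of this lemma but simply writes ``Following the proof of Lemma 8 in \cite{besson2022efficient}, we can show the following lemma,'' and your plan---bounding each false-alarm term by $K\epsilon$ via the per-arm GLR false-alarm guarantee plus a union bound, each missed-detection term by $\epsilon$ via the GLR delay guarantee once the sample counts in \eqref{eqn:glr_dimi_delay} are secured through Lemma~\ref{lemma:samples-time} and Assumption~\ref{ass:glr_delay}, and then summing---is precisely that argument. Your discussion of the measurability/stopping-time interface is more explicit than anything in the paper, but it is the same proof.
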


Plugging \eqref{eqn:glr_dimi_delay} and Lemma~\ref{lemma:glr_fa_md} into Theorem~\ref{thm:regret} shows the following Corollary~\ref{cor:regret_glrUCB}.

{\bf Corollary~\ref{cor:regret_glrUCB}}
Combining Algorithm~\ref{alg:main_alg} and ~\ref{alg:glrCD_alg} with $\beta$ function in \eqref{eqn:beta} achieves the expected regret upper bound as follows:
\begin{multline} 
    \E\left[R\left(1,T\right)\right]\leq \underbrace{\sum^{M}_{i=1}\tilde{C}_{i}}_{(a)}+\underbrace{2\alpha\sqrt{MT}}_{(b)}
    +\underbrace{\left(\frac{4}{\left(\delta^{\left(i\right)}\right)^{2}}\beta\left(T,\epsilon\right)+2\right)^{2}\left(\frac{K}{2\alpha}+1\right)^{2}M}_{(c)}\\
    +\underbrace{2\left(\frac{4}{\left(\delta^{\left(i\right)}\right)^{2}}\beta\left(T,\epsilon\right)+2\right)\left(\frac{K}{2\alpha}+1\right)\sqrt{M\left(T+M\right)}}_{(c)}+\underbrace{\epsilon \left(K+1\right)M}_{(d)},
\end{multline}\label{eqn:regret_bound_glrUCB_p}
where $\tilde{C}_{i}=8\sum_{\Delta^{\left(i\right)}_{k}>0}\frac{\log T}{\Delta^{\left(i\right)}_{k}}+\left(\frac{5}{2}+\frac{\pi^{2}}{3}+K\right)\sum^{K}_{k=1}\Delta^{\left(i\right)}_{k}$. By setting $\alpha = c\sqrt{K\log{\left(KT\right)}}$ for some constant $c$ and $\epsilon=1/\sqrt{T}$, the expected regret is upper-bounded by $\mathcal{O}(\sqrt{KMT\log{T}})$.

\subsection{Proof of Section~\ref{sec:extension}}\label{app:pf_detail:extend}

\begin{figure}[b]
    \centering
    \includegraphics[width = 0.8\textwidth]{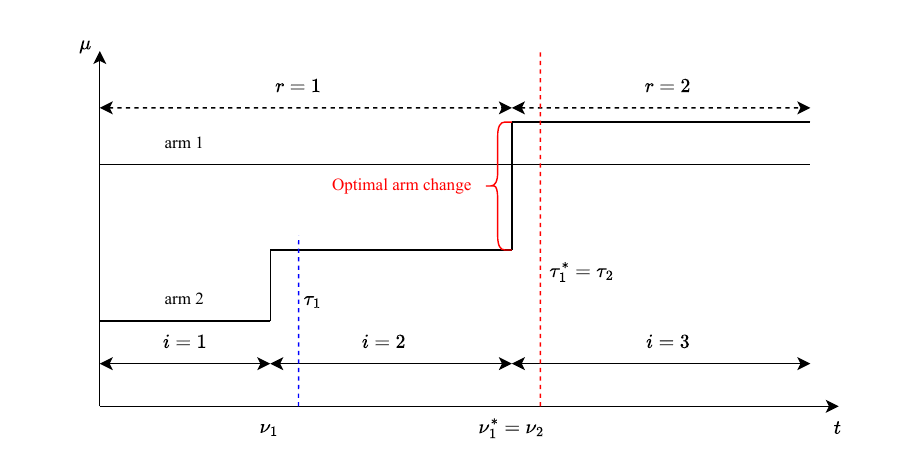}
    \vspace{-15pt}
    \caption{This figure allows us to easily compare the notation in Section~\ref{sec:analysis} and~\ref{sec:extension}.}
    \label{fig:notation}
\end{figure}

This subsection covers the proofs of the extended results discussed in Section~\ref{sec:extension}. These extensions include advanced integrations and new theoretical insights.

\begin{corollary}
    We can extend Lemma \ref{lemma:de_regret} to the case where we only care about the optimal arm changing to another one. Then, the number of times that arm $k$ is selected in the exploration phase and the diminishing exploration regret during the time interval $[\tau_{r-1}^{*}, \nu_{r}^{*})$ would be
    \begin{equation}
        N_{\mathrm{DE},k}\left(\tau_{r-1}^{*}, \nu_{r}^{*}\right)\leq \frac{2\alpha\sqrt{\nu_{r}^{*}-\tau_{r-1}^{*}}}{K}+\frac{3}{2},
    \end{equation}
    and
    \begin{equation}
        \E\left[R_{\mathrm{DE}}\left(\tau_{r-1}^{*}, \nu_{r}^{*}\right)\right]\leq 2\alpha\sqrt{ \nu_{r}^{*}-\tau_{r-1}^{*}}+\frac{3}{2}K.
    \end{equation}    
\end{corollary}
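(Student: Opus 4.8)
The plan is to run the argument of Lemma~\ref{lemma:de_regret} almost verbatim, after identifying the correct ``segment'' for the diminishing-exploration clock in the extended algorithm. In Algorithm~\ref{alg:main_alg_extend}, the schedule variable $u$ and the per-arm counters $n_k$ are reset only when the skipping subroutine (Algorithm~\ref{alg:skip}) returns False, i.e.\ exactly at the times $\tau_r^{*}$ at which a genuine change of the optimal arm is declared; an alarm that is skipped leaves $u$ untouched. Hence the super-segment $[\tau_{r-1}^{*},\nu_r^{*})$ is, for the extended algorithm, precisely what $[\tau_{i-1},\nu_i)$ was in Lemma~\ref{lemma:de_regret}: the $j$-th exploration session in this interval starts at $\tau_{r-1}^{*}+u_{r-1}^{*(j)}$, where $u_{r-1}^{*(1)}=\lceil(\alpha-K/4\alpha)^2\rceil$ and $u^{(j)}=\lceil u^{(j-1)}+\frac{K}{\alpha}\sqrt{u^{(j-1)}}+\frac{K^2}{4\alpha^2}\rceil$. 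The first step is therefore to note that the closed-form estimate $u_{r-1}^{*(n)}\ge\big(\frac{(2n-3)K}{4\alpha}+\alpha\big)^2$ is a property of this recursion alone and so carries over unchanged, together with the non-overlap property $u^{(j-1)}+K<u^{(j)}$.

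The second step bounds the exploration regret per session. A super-segment may contain several of the original change points, so the gaps $\Delta_{k,t}$ need not be constant on $[\tau_{r-1}^{*},\nu_r^{*})$; but because the optimal arm does not change there, we still have $\Delta_{k,t}\in[0,1]$ for all $k$ and all $t$ in the interval, hence one exploration session (one pull of each arm) contributes at most $K$ to the regret. Letting $u_{r-1}^{*(m)}$ be the last exploration start time before $\nu_r^{*}$, we conclude $N_{\mathrm{DE},k}(\tau_{r-1}^{*},\nu_r^{*})\le m$ and $\E[R_{\mathrm{DE}}(\tau_{r-1}^{*},\nu_r^{*})]\le mK$, exactly as in Lemma~\ref{lemma:de_regret}.

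The final step combines $\nu_r^{*}-\tau_{r-1}^{*}\ge u_{r-1}^{*(m)}\ge\big(\frac{(2m-3)K}{4\alpha}+\alpha\big)^2$ with $\E[R_{\mathrm{DE}}]\le mK$: taking square roots and solving the resulting inequality for $m$ gives $mK\le 2\alpha\sqrt{\nu_r^{*}-\tau_{r-1}^{*}}-2\alpha^2+\frac{3}{2}K$; dropping the non-positive $-2\alpha^2$ term gives the claimed regret bound, and dividing through by $K$ beforehand gives the count bound. The only non-routine point — and the place I would be most careful — is the bookkeeping of the first step: one must verify against Algorithm~\ref{alg:main_alg_extend} that a skipped alarm modifies neither $u$ nor the counters $n_k$, so that the exploration clock really runs over the whole super-segment rather than over the shorter interval between consecutive raw change-detector alarms. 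Everything after that point is the calculation already carried out in Lemma~\ref{lemma:de_regret}.
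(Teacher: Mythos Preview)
Your proposal is correct and follows exactly the approach implicit in the paper: the paper states this corollary without an explicit proof, treating it as an immediate consequence of Lemma~\ref{lemma:de_regret} once one replaces $(\tau_{i-1},\nu_i)$ by $(\tau_{r-1}^{*},\nu_r^{*})$. Your added care in verifying from Algorithm~\ref{alg:main_alg_extend} that a skipped alarm leaves $u$ and the $n_k$ untouched---so the exploration clock indeed runs over the whole super-segment---is the only point the paper leaves to the reader, and you handle it correctly; the remark that ``because the optimal arm does not change'' is not strictly needed for $\Delta_{k,t}\in[0,1]$ (this holds simply because rewards lie in $[0,1]$), but it does not affect the argument.
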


\begin{corollary}
    We can extend Lemma \ref{lemma:regret_stat} to the case where we only care about the optimal arm changing to another one. Then, the number of times that arm $k$ is selected in the exploration phase and the diminishing exploration regret during $[\tau_{r-1}^{*}, \nu_{r}^{*})$ can be bounding by 
    \begin{equation}
        \E\left[R\left(\tau_{r-1}^{*},\nu_{r}^{*}\right)\middle|\overline{F}_{r-1}^{*}D_{r-1}^{*} \right]\leq \tilde{C}+2\alpha\sqrt{s_{r}^{*}}+T\cdot \mathbb{P}\left(F_{r}^{*}\middle|\overline{F}_{r-1}^{*}D_{r-1}^{*}\right),
    \end{equation}
    
\end{corollary}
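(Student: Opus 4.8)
The plan is to mimic the proof of Lemma~\ref{lemma:regret_stat} almost verbatim, performing the substitutions $\nu_i\mapsto\nu_r^*$, $\tau_i\mapsto\tau_r^*$, $F_i\mapsto F_r^*$, $D_i\mapsto D_r^*$, $s_i\mapsto s_r^*$, and replacing the per-segment gaps $\Delta_k^{(i)}$ by the (assumed known) uniform lower bound $\Delta_{\min}$ on the positive instantaneous gaps $\Delta_{k,t}$. First I would split on whether a false optimal-arm-change alarm occurs inside the super-segment: by the law of total expectation together with the trivial bound $R(\tau_{r-1}^*,\nu_r^*)\le T$,
\[
\E\!\left[R(\tau_{r-1}^*,\nu_r^*)\,\middle|\,\overline F_{r-1}^* D_{r-1}^*\right]
\le T\,\mathbb P\!\left(F_r^*\,\middle|\,\overline F_{r-1}^* D_{r-1}^*\right)
+\E\!\left[R(\tau_{r-1}^*,\nu_r^*)\,\middle|\,\overline F_r^*\,\overline F_{r-1}^* D_{r-1}^*\right],
\]
which already produces the third term of the claimed bound, so it remains to bound the last expectation by $\tilde C+2\alpha\sqrt{s_r^*}$.

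On the good events $\overline F_r^*\,\overline F_{r-1}^* D_{r-1}^*$ the algorithm is reset exactly once, at $\tau_{r-1}^*$, and is not reset again before $\nu_r^*$; moreover $\tau_{r-1}^*\ge\nu_{r-1}^*$, so $\nu_r^*-\tau_{r-1}^*\le s_r^*$. Writing the accumulated regret as $\sum_{k}\sum_{t}\Delta_{k,t}\1\{A_t=k\}$ and bounding $\Delta_{k,t}\le 1$, I would decompose $N_k(\tau_{r-1}^*,\nu_r^*)$ for a suboptimal arm $k$ exactly as in Lemma~\ref{lemma:regret_stat}: (i) the exploration pulls, bounded by $N_{\mathrm{DE},k}(\tau_{r-1}^*,\nu_r^*)\le \tfrac{2\alpha\sqrt{\nu_r^*-\tau_{r-1}^*}}{K}+\tfrac32\le \tfrac{2\alpha\sqrt{s_r^*}}{K}+\tfrac32$ by the extension of Lemma~\ref{lemma:de_regret} just established; (ii) a burn-in term $\lceil 8\log T/\Delta_{\min}^2\rceil$; and (iii) the remaining UCB pulls, handled by the Auer--Cesa-Bianchi--Fischer argument, contributing the additive constant $\bigl(\tfrac52+\tfrac{\pi^2}{3}+K\bigr)$ per arm. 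Summing over $k$ and collecting terms gives $\tilde C+2\alpha\sqrt{s_r^*}$, with $\tilde C$ read with $\Delta_{\min}$ in place of $\Delta_k^{(i)}$, which completes the proof modulo step (iii).

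The main obstacle is precisely step (iii): unlike the interval in Lemma~\ref{lemma:regret_stat}, the interval $[\tau_{r-1}^*,\nu_r^*)$ is not stationary --- only the identity of the optimal arm is fixed, while the reward distributions may change several times inside it, and those changes are detected but skipped, so the UCB statistics are never flushed. One therefore has to replace the fixed means $\mu_{k^*},\mu_k$ in the Auer et al. argument by the running averages $\bar\mu_{k^*}(t),\bar\mu_k(t)$ of the true means over the time steps at which each arm was actually sampled, use Hoeffding's inequality for independent, non-identically-distributed bounded summands to control $|\hat\mu_j(t)-\bar\mu_j(t)|$, and then argue that the effective gap $\bar\mu_{k^*}(t)-\bar\mu_k(t)$ stays at least $\Delta_{\min}$ on the relevant high-probability event. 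The delicate point is that $k^*$ and $k$ are sampled at different time steps, so this last bound is not immediate from $\mu_{k^*,t}-\mu_{k,t}\ge\Delta_{\min}$; I would handle it by peeling over the ordinary change points inside the super-segment, controlling per sub-segment the contribution of stale samples and invoking the segment-length assumption so that within each ordinary sub-segment the fresh samples dominate. Once this UCB bound is in place, the remainder is a mechanical repetition of the computations in Lemma~\ref{lemma:regret_stat}.
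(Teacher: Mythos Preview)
The paper does not actually prove this corollary; it merely states it as a direct extension of Lemma~\ref{lemma:regret_stat} by analogy. Your outline---conditioning on $F_r^*$, bounding the exploration pulls via the super-segment version of Lemma~\ref{lemma:de_regret}, and then invoking a UCB-type count on suboptimal pulls---matches the paper's implicit reasoning, and your first two steps are correct.

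You are right, however, to flag step~(iii) as the real obstacle, and this is a point the paper glosses over entirely. The interval $[\tau_{r-1}^*,\nu_r^*)$ can contain several ordinary change points at which the means drift while the optimal arm stays fixed, and under the skipping mechanism the UCB statistics are \emph{not} reset across these points; hence the Auer--Cesa-Bianchi--Fischer analysis does not transfer verbatim. The fix you propose---peeling over the ordinary change points inside the super-segment and arguing that fresh samples dominate stale ones---will almost certainly introduce a multiplicative factor equal to the number of such change points: each ordinary sub-segment can contribute its own $O(\log T/\Delta_{\min}^2)$ pulls of arm $k$, and these contributions add. Summed over all super-segments this gives a term of order $M\log T$ rather than $S\log T$, which spoils the $\tilde{\mathcal O}(\sqrt{ST})$ conclusion precisely in the regime $M\gg S$ that the extension is designed for. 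Moreover, the only segment-length assumption available (Assumption~\ref{asm:seg_length_extend}) constrains the super-segments $s_r^*$, not the ordinary segments inside them, so the ``segment-length assumption'' you invoke is not actually at your disposal. To recover the stated $\tilde C_r^*$ you would need to bound the total number of pulls of each suboptimal arm over the \emph{entire} super-segment by a single $O\bigl(\log T/(\min_t\Delta_{k,t})^2\bigr)$ term, independent of how many ordinary changes lie inside; neither the paper nor your sketch supplies such an argument, and your observation that $\bar\mu_{k^*}(t)-\bar\mu_k(t)\ge\Delta_{\min}$ need not hold (since $k^*$ and $k$ are sampled at different times) is exactly the obstruction.
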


\begin{lemma}\label{lemma:opt_fal}
    The false alarm of the super segment can be bounded as follow:
    \begin{equation}
        \mathbb{P}\left(F_{r}^{*}\middle| \overline{F}_{r-1}^{*}D_{r-1}^{*}\right)\leq \sum^{K}_{k=1}\mathbb{P}\left(\overline{Ignore}, \textrm{ optimal arm no change }\middle| \textrm{arm $k$ alarm, } \overline{F}_{r-1}^{*}D_{r-1}^{*}\right)
    \end{equation}
\end{lemma}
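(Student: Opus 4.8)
The plan is to mirror the union-bound argument from the proof of Lemma~\ref{lemma:prob_fa}, adapted to the super-segment setting with the skipping mechanism of Algorithm~\ref{alg:skip}. First I would unfold the definition $F_r^*=\{\tau_r^*<\nu_r^*\}$: conditioned on $\overline{F}_{r-1}^*D_{r-1}^*$, a super-segment false alarm occurs precisely when Algorithm~\ref{alg:main_alg_extend} performs a reset at some time $t$ with $\tau_{r-1}^*<t<\nu_r^*$. By inspection of the reset logic of Algorithm~\ref{alg:main_alg_extend}, such a reset requires two things to happen simultaneously: (i) the underlying change detector is invoked and raises an alarm, which — since the detector is only invoked when $n_{A_t}\ge w$ — is attributable to the single arm $A_t$ currently being pulled; and (ii) the skipping subroutine returns False, i.e.\ the event $\overline{Ignore}$ holds, so the alarm is not ignored. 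Moreover, since the reset happens at $t<\nu_r^*$, the optimal arm has not yet switched at that time, i.e.\ the event ``optimal arm no change'' holds.

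Next I would apply a union bound over which arm triggers the detector. Decomposing the alarm event into the $K$ sub-events $\{\textrm{arm }k\textrm{ alarm}\}$ and using the law of total probability exactly as in the chain \eqref{eqn:false_al_p_b}--\eqref{eqn:false_al_p_f}, together with the trivial bound $\mathbb{P}(\textrm{arm }k\textrm{ alarm}\mid\overline{F}_{r-1}^*D_{r-1}^*)\le 1$, gives
\begin{equation*}
\mathbb{P}\left(F_r^*\middle|\overline{F}_{r-1}^*D_{r-1}^*\right)\le\sum^{K}_{k=1}\mathbb{P}\left(\overline{Ignore},\ \textrm{ optimal arm no change }\middle|\ \textrm{arm }k\textrm{ alarm},\ \overline{F}_{r-1}^*D_{r-1}^*\right),
\end{equation*}
which is exactly the claimed inequality. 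Conditioning on the ``downstream'' event (arm $k$'s alarm) is harmless here for the same reason noted after \eqref{eqn:tau_ki}: by the time the algorithm reaches the skipping check, the alarm has already been realized.

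The main obstacle is careful event bookkeeping rather than any deep probabilistic step. In particular, one must argue that a reset before $\nu_r^*$ can always be charged to a \emph{single} arm's alarm — which follows because the CD subroutine fires only on the arm that has just accumulated $w$ fresh samples — and one must check that both branches of the skipping logic (the $A_t=k^*$ case and the $A_t\neq k^*$ case, each with its own guard on the sample counts $n_k$ and $N_I$) are correctly subsumed by the single event $\overline{Ignore}$. Once this decomposition is in place the lemma is immediate; the substantive work of estimating $\mathbb{P}(\overline{Ignore},\ \textrm{optimal arm no change}\mid\cdots)$ for each $k$ is then deferred to the subsequent lemmas, which combine the properties of the skipping rule with the specific change detector being used.
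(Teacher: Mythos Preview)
Your proposal is correct and follows essentially the same approach as the paper: decompose $F_r^*$ (conditioned on $\overline{F}_{r-1}^*D_{r-1}^*$) as the union over $k$ of the events $\{\overline{Ignore},\ \textrm{optimal arm no change},\ \textrm{arm }k\textrm{ alarm}\}$, apply the union bound, factor via conditional probability, and bound $\mathbb{P}(\textrm{arm }k\textrm{ alarm}\mid\overline{F}_{r-1}^*D_{r-1}^*)\le 1$. Your write-up is somewhat more explicit about why the decomposition is valid (tracing the reset logic of Algorithm~\ref{alg:main_alg_extend} and the two branches of the skipping check), but the underlying argument is identical to the paper's.
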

\begin{proof}
    Conditioning on $\overline{F}_{r-1}^{*}D_{r-1}^{*}$ holds, we have the event 
    \begin{equation}
        F_{r}^{*}=\bigcup_{k=1}^{K}\left\{\overline{Ignore}, \textrm{optimal arm no change, arm $k$ alarm}\right\}.
    \end{equation}
    Using the union bound, 
    \begin{subequations}
        \begin{align}
            \mathbb{P}\left(F_{r}^{*}\middle| \overline{F}_{r-1}^{*}D_{r-1}^{*}\right)&=\mathbb{P}\left(\bigcup_{k=1}^{K}\left\{\overline{Ignore}, \textrm{optimal arm no change, arm $k$ alarm}\right\}\middle| \overline{F}_{r-1}^{*}D_{r-1}^{*}\right)\\
            &\leq \sum^{K}_{k=1}\mathbb{P}\left(\overline{Ignore}, \textrm{optimal arm no change, arm $k$ alarm}\middle| \overline{F}_{r-1}^{*}D_{r-1}^{*}\right)\\
            &= \sum^{K}_{k=1}\mathbb{P}\left(\overline{Ignore}, \textrm{optimal arm no change}\middle| \textrm{arm $k$ alarm, } \overline{F}_{r-1}^{*}D_{r-1}^{*}\right)\\
            &\times\mathbb{P}\left(\textrm{arm $k$ alarm}\middle| \overline{F}_{r-1}^{*}D_{r-1}^{*}\right)\\
            &\leq \sum^{K}_{k=1}\mathbb{P}\left(\overline{Ignore}, \textrm{optimal arm no change}\middle| \textrm{arm $k$ alarm, } \overline{F}_{r-1}^{*}D_{r-1}^{*}\right)
        \end{align}
    \end{subequations}
\end{proof}

\begin{lemma}\label{lemma:opt_md}
    The miss detection probability of the super segment can be bounded as follow:
    \begin{equation}
        \mathbb{P}\left(\overline{D}_{r}^{*}\middle| \overline{F}_{r}^{*}\overline{F}_{r-1}^{*}D_{r-1}^{*}\right)\leq \sum^{K}_{k=1}\mathbb{P}\left(\overline{D}_{r,k}\middle| \overline{F}_{r}^{*}\overline{F}_{r-1}^{*}D_{r-1}^{*}\right)\label{eqn:D_star_bar}
    \end{equation}
\end{lemma}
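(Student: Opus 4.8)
The plan is to adapt the short argument of Lemma~\ref{lemma:opt_fal} to the miss-detection side. Conditioned on $\overline{F}_r^*\overline{F}_{r-1}^*D_{r-1}^*$, the time $\tau_r^*$ is exactly the first reset of Algorithm~\ref{alg:main_alg_extend} after $\nu_r^*$, and such a reset occurs precisely when, for some arm $k$, that arm has just accumulated a fresh batch of $w$ observations (or the GLR equivalent), its change detector fires, and the skipping test (Algorithm~\ref{alg:skip}) returns \texttt{False}. For each $k\in\gK$ let $\tau_{r,k}^*$ be the first time after $\nu_r^*$ at which this three-fold condition holds for arm $k$, and set $D_{r,k}:=\{\tau_{r,k}^*\le\nu_r^*+h_r^*\}$. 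Since the algorithm resets as soon as \emph{any} arm triggers a non-skipped alarm, $\tau_r^*=\min_{k\in\gK}\tau_{r,k}^*$ on the conditioning event, so $D_r^*=\bigcup_{k=1}^{K}D_{r,k}$ and hence $\overline{D}_r^*=\bigcap_{k=1}^{K}\overline{D}_{r,k}$.

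From this identity the claim follows immediately: $\overline{D}_r^*=\bigcap_k\overline{D}_{r,k}\subseteq\bigcup_k\overline{D}_{r,k}$, so the union bound gives $\mathbb{P}(\overline{D}_r^*\mid\overline{F}_r^*\overline{F}_{r-1}^*D_{r-1}^*)\le\sum_{k=1}^{K}\mathbb{P}(\overline{D}_{r,k}\mid\overline{F}_r^*\overline{F}_{r-1}^*D_{r-1}^*)$. One could in fact retain only a single term, namely the arm that realizes the optimal-arm switch and whose gap is largest (since $\overline{D}_r^*\subseteq\overline{D}_{r,\tilde k}$ for that $\tilde k$), but the sum form is the one that feeds directly into term (d) of Theorem~\ref{thm:regret}, exactly mirroring how Lemma~\ref{lemma:opt_fal} is used in the regret decomposition.

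The main obstacle is not this union-bound step but pinning down the event identity given the skipping mechanism and the conditioning. Two points need care. First, one must verify that on $\overline{F}_r^*\overline{F}_{r-1}^*D_{r-1}^*$ there is no reset strictly between $\tau_{r-1}^*$ and $\nu_r^*$, so that ``first reset after $\nu_r^*$'' genuinely coincides with $\tau_r^*$ under the super-segment indexing of Section~\ref{sec:extension}; this is where conditioning on $\overline{F}_r^*$ is essential, since by the Remark even a value change with an unchanged optimal arm would otherwise be counted. Second, one must check that every arm actually collects enough fresh post-change samples inside $(\nu_r^*,\nu_r^*+h_r^*]$ for its detector to be \emph{able} to fire there; this is precisely what the choice of $h_r^*$ together with Lemma~\ref{lemma:samples-time} (controlling how fast diminishing exploration delivers $w$ samples to each arm after a reset, possibly after a delay) and Assumption~\ref{asm:seg_length}/\ref{ass:glr_delay_context} guarantee. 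An in-window alarm on arm $k$ that then gets skipped simply contributes to $\overline{D}_{r,k}$ rather than $D_{r,k}$, and bounding $\mathbb{P}(\overline{D}_{r,k})$ — showing the skip test returns \texttt{False} for the switched arm with high probability under the concentration implied by $w$ in \eqref{eqn:w_fix2} and knowledge of $\Delta_{\min}$ — is deferred to the subsequent lemmas adapting Lemmas~\ref{lemma:prob_delay}--\ref{lemma:exp_delay}.
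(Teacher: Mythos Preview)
Your argument is correct, and the union-bound step is perfectly clean. The route, however, differs from the paper's in one substantive way that is worth noting.

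The paper defines $D_{r,k}$ as the event that arm $k$'s change detector fires within the tolerated delay $h_r^*$ \emph{regardless of whether the alarm is subsequently ignored} by Algorithm~\ref{alg:skip}. With that definition the paper asserts the identity $\overline{D}_r^{*}=\bigcup_{k=1}^{K}\overline{D}_{r,k}$ on the conditioning event and then applies the union bound. Your $D_{r,k}$, by contrast, bakes the ``not skipped'' requirement into the event via $\tau_{r,k}^{*}$, which yields the exact identity $\overline{D}_r^{*}=\bigcap_{k}\overline{D}_{r,k}$ and hence the trivial containment $\overline{D}_r^{*}\subseteq\overline{D}_{r,\tilde k}$ for any single arm.

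Both routes deliver the stated inequality, but they trade work differently downstream. The paper's CD-only definition lets Corollary~\ref{cor:regret_bound_extend_MUCB} recycle Lemma~\ref{lemma:prob_delay} verbatim to bound $\mathbb{P}(\overline{D}_{r,k})$; the price is that the identity $\overline{D}_r^{*}=\bigcup_{k}\overline{D}_{r,k}$ is not literally true under that definition (all detectors could fire yet all be skipped, giving $\overline{D}_r^{*}$ with no $\overline{D}_{r,k}$), so an implicit appeal to the skip test behaving correctly after an optimal-arm switch is hiding there. Your definition makes the present lemma airtight and shifts that burden to where it belongs: bounding your $\mathbb{P}(\overline{D}_{r,k})$ now requires handling both detector miss and fires-but-wrongly-skipped, which you correctly flag as deferred work for the analogues of Lemmas~\ref{lemma:prob_delay}--\ref{lemma:exp_delay}.
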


\begin{proof}
    We make some modifications to event $D_{i}$ in Section~\ref{sec:analysis} and extend $D_{r,k}$ as the arm $k$ alarm in the tolerate delay (regardless of whether to ignore or not) after the $r$-th optimal arm change.
    Conditioning on $\overline{F}_{r}^{*}\overline{F}_{r-1}^{*}D_{r-1}^{*}$ holds, we have the event 
    \begin{equation}
        \overline{D}_{r}^{*}=\bigcup_{k=1}^{K}\overline{D}_{r,k}.
    \end{equation}
    Using the union bound, we can get the result as Equation~\ref{eqn:D_star_bar}.
\end{proof}

\begin{theorem}[General form of regret bound]\label{thm:regret_bound_extend}
    Insert Algorithm~\ref{alg:skip} into Algorithm~\ref{alg:main_alg} and combine with a CD algorithm, which achieves the expected regret upper bound as follows:
    \vspace{-10pt}
    \begin{multline} 
        \E\left[R\left(1,T\right)\right]\leq \underbrace{\sum^{S}_{r=1}\tilde{C}_{r}^{*}}_{(a)}+\underbrace{2\alpha\sqrt{ST}}_{(b)}
        +\underbrace{\sum^{S-1}_{r=1}\left(\E\left[\tau_{r}^{*}-\nu_{r}^{*}\middle | D_{i}^{*}\overline{F}_{i}^{*}D_{i-1}^{*}\overline{F}_{i-1}^{*}\right]+d_{I,r}\right)}_{(c)}\\
        +\underbrace{T\sum^{S-1}_{r=1}\mathbb{P}\left(F_{r}^{*}\middle| \overline{F}_{r-1}^{*}D_{r-1}^{*}\right)+\mathbb{P}\left(\overline{D}_{r}^{*}\middle|\overline{F}_{r}^{*}\overline{F}_{r-1}^{*}D_{r-1}^{*}\right)}_{(d)}, \label{eqn:regret_bound_extend}
    \end{multline}
    where $\tilde{C}_{r}^{*}=8\sum_{\min_{\nu_{r-1}^{*}\leq t\leq \nu_{r}^{*}}\Delta_{k,t}>0}\frac{\log T}{\min_{\nu_{r-1}^{*}\leq t\leq \nu_{r}^{*}}\Delta_{k,t}}+\left(\frac{5}{2}+\frac{\pi^{2}}{3}+K\right)\sum^{K}_{k=1}\max_{\nu_{r-1}^{*}\leq t\leq \nu_{r}^{*}}\Delta_{k,t}$, and {$d_{I,r}$ is the upper bound of expected duration required for sufficient samples to decide whether to ignore under the effect of diminishing exploration, and this variable varies depending on the CD algorithm.} Moreover, we can transform Equation~\ref{eqn:regret_bound_extend} into another form using lemma~\ref{lemma:opt_fal} and~\ref{lemma:opt_md} as follows:
    \begin{multline} 
        \E\left[R\left(1,T\right)\right]\leq \underbrace{\sum^{S}_{r=1}\tilde{C}_{r}^{*}}_{(a)}+\underbrace{2\alpha\sqrt{ST}}_{(b)}
        +\underbrace{\sum^{S-1}_{r=1}\left(\E\left[\tau_{r}^{*}-\nu_{r}^{*}\middle | D_{i}^{*}\overline{F}_{i}^{*}D_{i-1}^{*}\overline{F}_{i-1}^{*}\right]+d_{I,r}\right)}_{(c)}\\
        +\underbrace{T\sum^{S-1}_{r=1}\sum^{K}_{k=1}\mathbb{P}\left(\overline{Ignore},\textrm{optimal arm no change}\middle| \textrm{arm $k$ alarm, }\overline{F}_{r-1}^{*}D_{r-1}^{*}\right)+T\sum^{S-1}_{r=1}\sum^{K}_{k=1}\mathbb{P}\left(\overline{D_{r,k}}\middle|\overline{F}_{r}^{*}\overline{F}_{r-1}^{*}D_{r-1}^{*}\right)}_{(d)}, \label{eqn:regret_bound_extend2}
    \end{multline}
\end{theorem}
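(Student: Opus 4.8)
The plan is to mirror the proof of Theorem~\ref{thm:regret}, but with the stationary segments replaced by the super-segments $\{[\nu_{r-1}^{*},\nu_{r}^{*})\}_{r=1}^{S}$ and the detection events $F_i,D_i$ replaced by their starred analogues $F_r^{*},D_r^{*}$ (and, where needed, $D_{r,k}$). First I would set up the same recursive decomposition: writing $\gR(T)=\E[R(1,T)]=\E[R(1,T)\mid \overline F_0^{*}D_0^{*}]$ (since $\tau_0^{*}=0$), I split at $\nu_1^{*}$ and then at $\tau_1^{*}$, apply the law of total expectation to peel off the low-probability events $F_1^{*}$ and $\overline D_1^{*}$ with trivial bound $T$, and obtain a one-step recursion of the form $\E[R(1,T)\mid\overline F_0^{*}D_0^{*}]\le \E[R(\tau_1^{*},T)\mid\overline F_1^{*}D_1^{*}]+(\text{super-segment-1 cost})$. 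Iterating $S$ times telescopes to the right-hand side of \eqref{eqn:regret_bound_extend}.

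The per-super-segment cost has three pieces. For the UCB part, I invoke the corollary extending Lemma~\ref{lemma:regret_stat}: conditioned on $\overline F_{r-1}^{*}D_{r-1}^{*}$, the regret accumulated on $(\tau_{r-1}^{*},\nu_r^{*})$ is at most $\tilde C_r^{*}+2\alpha\sqrt{s_r^{*}}+T\,\mathbb P(F_r^{*}\mid\overline F_{r-1}^{*}D_{r-1}^{*})$, where $\tilde C_r^{*}$ now uses $\min_{\nu_{r-1}^{*}\le t\le\nu_r^{*}}\Delta_{k,t}$ in the logarithmic term and $\max_{\nu_{r-1}^{*}\le t\le\nu_r^{*}}\Delta_{k,t}$ in the additive term, since within a super-segment the per-arm gaps may still vary (the optimal arm is fixed, so the UCB analysis of \cite{auer2002finite} still applies with the worst-case gap). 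For the diminishing-exploration part, the corollary extending Lemma~\ref{lemma:de_regret} gives exploration regret at most $2\alpha\sqrt{\nu_r^{*}-\tau_{r-1}^{*}}+\tfrac32K\le 2\alpha\sqrt{s_r^{*}}+\tfrac32 K$, and the $\tfrac32K$ terms are absorbed into $\tilde C_r^{*}$. Summing $2\alpha\sum_{r=1}^{S}\sqrt{s_r^{*}}$ and applying Cauchy--Schwarz, $\big(\sum_r\sqrt{s_r^{*}}\big)^2\le S\sum_r s_r^{*}=ST$, yields term (b).

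The change-detection part is where the skipping mechanism enters and is the main obstacle. After the true optimal-arm change at $\nu_r^{*}$, the underlying CD raises a raw alarm; Algorithm~\ref{alg:skip} is then consulted, which needs $N_I=\mathcal O(\log T)$ observations of $k^{*}$ and of the relevant challenger arm before it can decide. Under diminishing exploration this sample-gathering window has length at most $d_{I,r}$, controlled by Lemma~\ref{lemma:samples-time} (inequality \eqref{eqn:Ttd} with $n=\mathcal O(\log T)$); the regret incurred there is at most $d_{I,r}$, which is exactly the extra summand in term (c), the remaining part being the usual detection-delay bound $\E[\tau_r^{*}-\nu_r^{*}\mid D_r^{*}\overline F_r^{*}D_{r-1}^{*}\overline F_{r-1}^{*}]$. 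The two failure modes—a false reset when the optimal arm did not change (handled via the skip test) and a missed detection—each contribute $T$ times their conditional probability, giving term (d). The delicate points I would need to check are: (i) that the super-segment events are defined so that conditioning on $\overline F_{r-1}^{*}D_{r-1}^{*}$ propagates cleanly through the recursion exactly as in Theorem~\ref{thm:regret}; and (ii) that spurious CD alarms occurring \emph{inside} a super-segment but correctly skipped do not each cost a fresh $d_I$ in a way that inflates the bound—these are swept into the small-probability term (d) via the per-arm union bounds, so only the one confirming skip-test per super-segment appears in (c).

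Finally, the second form \eqref{eqn:regret_bound_extend2} is obtained by substituting the bounds of Lemma~\ref{lemma:opt_fal} and Lemma~\ref{lemma:opt_md} directly into term (d) of \eqref{eqn:regret_bound_extend}, replacing $\mathbb P(F_r^{*}\mid\cdot)$ and $\mathbb P(\overline D_r^{*}\mid\cdot)$ by their respective sums over $k\in\gK$. $\qed$
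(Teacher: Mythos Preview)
Your proposal is correct and follows essentially the same approach the paper intends: the paper does not spell out a standalone proof of this theorem but instead supplies the super-segment analogues of Lemmas~\ref{lemma:de_regret} and~\ref{lemma:regret_stat} as corollaries, together with Lemmas~\ref{lemma:opt_fal} and~\ref{lemma:opt_md}, and relies on the reader to rerun the recursive decomposition of Theorem~\ref{thm:regret} with $(\nu_i,\tau_i,F_i,D_i,M)$ replaced by $(\nu_r^{*},\tau_r^{*},F_r^{*},D_r^{*},S)$ and the extra $d_{I,r}$ term inserted for the skip-decision window. Your sketch reproduces exactly that argument, including the Cauchy--Schwarz step for term~(b) and the substitution of Lemmas~\ref{lemma:opt_fal}--\ref{lemma:opt_md} for the second display; the concern you flag in~(ii) about multiple skipped alarms within a super-segment is real but is likewise not addressed explicitly in the paper.
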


\begin{lemma}\label{lemma:ignore_suff_samples}
    Suppose an arm $k\in\gK$ changes at time $\nu$ and raises an alarm at time $\tau$, but the optimal arm is the same one, and we choose a variable $N_{I}$ such that $N_{k}\left(\nu,\tau\right)+N_{I}\geq \frac{4\xi \log T}{\Delta_{\min}^{2}}$ and $\xi =1$, we have
    \begin{equation}
        \mathbb{P}\left(\overline{Ignore}, \textrm{optimal arm no change}\middle| \textrm{arm $k$ alarm }, \overline{F}_{r-1}^{*}D_{r-1}^{*}\right)\leq \frac{2}{T^{2}}
    \end{equation}
\end{lemma}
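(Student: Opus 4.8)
The plan is to recast the event $\overline{Ignore}$ as a comparison of two empirical means and then bound it by a two-sided concentration argument. With $\eta=0$, Algorithm~\ref{alg:skip} returns \emph{True} (the alarm is ignored) exactly when the sample average of its first-argument arm is strictly below that of its second-argument arm, and in both invocations inside Algorithm~\ref{alg:main_alg_extend} the second argument is the empirically best arm $k^{*}$. Hence, conditioned on ``arm $k$ raises an alarm at $\tau$'', the event $\overline{Ignore}$ equals $\{\hat\mu_{k'}\ge\hat\mu_{k^{*}}\}$, where $k'$ is the sub-optimal arm compared against $k^{*}$ inside the test ($k'=k$ when $k\neq k^{*}$, and the empirical runner-up when $k=k^{*}$), and $\hat\mu_{k'},\hat\mu_{k^{*}}$ are the averages over the two batches fed to the test. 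Conditioning further on $\overline{F}_{r-1}^{*}D_{r-1}^{*}$ forces the last reset to lie no earlier than $\nu_{r-1}^{*}$, so every stored sample belongs to the current super-segment; and on the event that the optimal arm has not changed, $k^{*}$ is genuinely optimal throughout that super-segment, so the relevant true means satisfy $\mu_{k^{*}}-\mu_{k'}\ge\Delta_{\min}$.

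Given this, I would run the standard gap-splitting argument. The hypothesis $N_{k}(\nu,\tau)+N_{I}\ge 4\xi\log T/\Delta_{\min}^{2}$ guarantees that the $k'$-batch contains at least $\lceil 4\xi\log T/\Delta_{\min}^{2}\rceil$ post-change samples, and the exploration schedule together with the guard ``$\exists k\neq k^{*}$ with $n_{k}>N_{I}$'' in Algorithm~\ref{alg:main_alg_extend} ensures the $k^{*}$-batch is at least as large. The inequality $\hat\mu_{k'}\ge\hat\mu_{k^{*}}$ forces $\hat\mu_{k'}-\mu_{k'}\ge\Delta_{\min}/2$ or $\mu_{k^{*}}-\hat\mu_{k^{*}}\ge\Delta_{\min}/2$, so since rewards lie in $[0,1]$ and (conditionally on the alarm and the good events) the batch entries are independent draws from the appropriate reward distributions, Hoeffding's inequality and a union bound yield
\[
\mathbb{P}\left(\overline{Ignore},\ \text{optimal arm unchanged} \,\middle|\, \text{arm } k \text{ alarm},\, \overline{F}_{r-1}^{*}D_{r-1}^{*}\right)\le 2\exp\!\left(-\frac{\Delta_{\min}^{2}}{2}\bigl(N_{k}(\nu,\tau)+N_{I}\bigr)\right).
\]
Plugging in the sample-size hypothesis and setting $\xi=1$ bounds the right-hand side by $2\exp(-2\log T)=2/T^{2}$, which is the claim.

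The step I expect to be the real obstacle is the cleanness/independence issue hidden in the previous paragraph: conditioning on ``arm $k$ alarmed'' conditions on the same observation stream whose averages appear in the bound, and the two batches could in principle straddle an intervening change point that leaves the optimal arm fixed but shifts some means. I would address this by treating the $N_{I}$ samples collected after the alarm as fresh randomness, bounding the conditional means of the earlier post-$\nu$ samples in the worst case (arm $k'$ is sub-optimal on both sides of $\nu$), and using the extra-delay accounting $d_{I,r}$ in Theorem~\ref{thm:regret_bound_extend} to ensure the test window does not reach beyond the next optimal-arm change; this is also why $N_{I}$ is taken to be $\Theta(\log T)$.
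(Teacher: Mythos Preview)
Your proposal is correct and follows essentially the same approach as the paper: the paper decomposes $\{\overline{Ignore},\text{ optimal arm unchanged}\}$ into the two concentration events $\{\hat\mu_k\ge\mu_k+\sqrt{\xi\log T/n}\}$ and $\{\hat\mu_{k^*}\le\mu_{k^*}-\sqrt{\xi\log T/n}\}$ (plus an impossible third event ruled out by the sample-size hypothesis) and applies Chernoff--Hoeffding to each, which is exactly your gap-splitting at $\Delta_{\min}/2$ written in confidence-radius form. Your caveats about the conditioning on the alarm and about samples possibly straddling a change are well taken, but the paper's proof does not address them either and simply applies Hoeffding directly.
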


\begin{proof}
    Condition on arm $k$ alarms, $\overline{F}_{r-1}^{*}$ and $D_{r}^{*}$ hold, the event
    \begin{subequations}
        \begin{align}
            \left\{\overline{Ignore}, \textrm{optimal arm no change}\right\}&\subset \left\{\hat{\mu}_{k}\geq \mu_{k}+\sqrt{\frac{\xi\log{T}}{N_{k}\left(\nu,\tau\right)+N_{I}}}\right\}\label{eqn:over_est}\\
            &\cup \left\{\hat{\mu}_{k^{*}}\leq \mu_{k^{*}}-\sqrt{\frac{\xi\log{T}}{N_{k^{*}}\left(\nu,\tau\right)+N_{I}}}\right\}\label{eqn:under_est}\\
            &\cup \left\{\mu_{k^{*}}-\mu_{k}<2\sqrt{\frac{\xi\log{T}}{N_{k}\left(\nu,\tau\right)+N_{I}}}, N_{k}\left(\nu,\tau\right)+N_{I}\geq \frac{4\xi \log T}{\Delta_{\min}^{2}}\right\}\label{eqn:vanish_event}.
        \end{align}
    \end{subequations}
    The third event will vanish because
    \begin{equation}
        2\sqrt{\frac{\xi\log{T}}{N_{k}\left(\nu,\tau\right)+N_{I}}}\leq 2\sqrt{\frac{\xi\log{T}\Delta_{\min}^{2}}{4\xi\log{T}}}=\Delta_{\min}\leq \mu_{k^{*}}-\mu_{k}\label{eqn:contradition}
    \end{equation}
    Equation~\ref{eqn:contradition} substitutes the latter term of event~\ref{eqn:vanish_event} into the former term. As a result of the substitution, it is determined that this event cannot occur, hence the probability is zero. Therefore, we only need to consider events~\ref{eqn:over_est} and~\ref{eqn:under_est}. Using the Chernoff-Hoeffding bound, we can obtain
    \begin{equation}
        \mathbb{P}\left(\hat{\mu}_{k}\geq \mu_{k}+\sqrt{\frac{\xi\log{T}}{N_{k}\left(\nu,\tau\right)+N_{I}}}\right)\leq T^{-2\xi}.
    \end{equation}
    \begin{equation}
        \mathbb{P}\left(\hat{\mu}_{k^{*}}\leq \mu_{k^{*}}-\sqrt{\frac{\xi\log{T}}{N_{k^{*}}\left(\nu,\tau\right)+N_{I}}}\right) \leq T^{-2\xi}
    \end{equation}
    If we choose $\xi = 1$, then
    \begin{subequations}
        \begin{align}
            \mathbb{P}\left(\hat{\mu}_{k}\geq \mu_{k}+\sqrt{\frac{\xi\log{T}}{N_{k}\left(\nu,\tau\right)+N_{I}}}\right)+\mathbb{P}\left(\hat{\mu}_{k^{*}}\leq \mu_{k^{*}}-\sqrt{\frac{\xi\log{T}}{N_{k^{*}}\left(\nu,\tau\right)+N_{I}}}\right)\leq \frac{2}{T^{2}}
        \end{align}
    \end{subequations}
\end{proof}

\subsubsection{Proof of Integration with Change Detectors of M-UCB}\label{app:pf_detail:extend:MUCB}
\begin{assumption}\label{ass:minimum_gap_extend}
    The algorithm knows a lower bound $\delta>0$ such that $\delta\leq\min_{i}\max_{k\in\mathcal{K}}\delta^{\left(i\right)}_{k}$.
\end{assumption}

\begin{assumption}\label{asm:seg_length_extend}
    $s^{*}_{r}= \Omega\left(\left(\log{KT}+\sqrt{K\log{KT}}\right)\sqrt{s^{*}_{r-1}}\right)$.
\end{assumption}
In particular, if $s_r^{*} = \Theta\left(\left(\log{KT}+\sqrt{K\log{KT}}\right)^{2(1+\epsilon)}\right)$ for every $i$, Assumption~\ref{asm:seg_length_extend} holds.

\begin{flushleft}
    {\bf Corollary~\ref{cor:regret_bound_extend_MUCB}} Combining Algorithm~\ref{alg:main_alg} and~\ref{alg:CD_alg} with the parameters in Equation~\ref{eqn:w_fix2}, and Equation~\ref{eqn:b_fix} achieves the expected regret upper bound as follows:
    \begin{multline} 
     \E\left[R\left(1,T\right)\right]\leq \underbrace{\sum^{S}_{r=1}\tilde{C}_{r}^{*}}_{(a)}+\underbrace{2\alpha\sqrt{ST}}_{(b)}
    +\underbrace{w\left(\frac{K}{2\alpha}+1\right)\sqrt{S\left(T+S\right)}}_{(c)}\\
    +\underbrace{\frac{w^{2}S}{4}\left(\frac{K}{2\alpha}+1\right)^{2}}_{(c)}+\underbrace{2S}_{(d)},
    \end{multline}\label{eqn:regret_bound_extend_MUCB_p}
    where $\tilde{C}_{r}^{*}=8\sum_{\min_{\nu_{r-1}^{*}\leq t\leq \nu_{r}^{*}}\Delta_{k,t}>0}\frac{\log T}{\min_{\nu_{r-1}^{*}\leq t\leq \nu_{r}^{*}}\Delta_{k,t}}+\left(\frac{5}{2}+\frac{\pi^{2}}{3}+K\right)\sum^{K}_{k=1}\max_{\nu_{r-1}^{*}\leq t\leq \nu_{r}^{*}}\Delta_{k,t}$. By setting $\alpha = c\sqrt{K\log{\left(KT\right)}}$ for some constant $c$, the expected regret is upper-bounded by $\mathcal{O}(\sqrt{KST\log{T}})$.
\end{flushleft}

\begin{proof}
    We can substitute lemma~\ref{lemma:exp_delay}, lemma~\ref{lemma:prob_fa}, and lemma~\ref{lemma:prob_delay} into terms (c) and (d) of Equation~\ref{eqn:regret_bound_extend2} respectively, and $d_{I,r}$ could be zero because $w$ (window size) samples are sufficient to determine whether to ignore. We don't need to make an additional decision interval to ensure that the number of samples is sufficient for ignoring. 
\end{proof}

\subsubsection{Proof of Integration with Change Detectors of GLR-UCB}\label{app:pf_detail:extend:GLRUCB}

\begin{flushleft}
    {\bf Corollary~\ref{cor:regret_bound_extend_glrUCB}} Combining Algorithm ~\ref{alg:main_alg} and ~\ref{alg:glrCD_alg} with $\beta$ function in Equation~\ref{eqn:beta} achieves the expected regret upper bound as follows:
    \begin{multline} 
     \E\left[R\left(1,T\right)\right]\leq \underbrace{\sum^{S}_{r=1}\tilde{C}_{r}^{*}}_{(a)}+\underbrace{2\alpha\sqrt{ST}}_{(b)}
    +\underbrace{\left[\left(\frac{4}{\delta^{2}}\beta\left(T,\epsilon\right)+2\right)^{2}+\frac{16\log^{2}{T}}{\Delta_{\min}^{4}}\right]\left(\frac{K}{2\alpha}+1\right)^{2}M}_{(c)}\\
    +\underbrace{4\left[\left(\frac{2}{\delta^{2}}\beta\left(T,\epsilon\right)+1\right)+\frac{8\log{T}}{\Delta_{\min}^{2}}\right]\left(\frac{K}{2\alpha}+1\right)\sqrt{M\left(T+M\right)}}_{(c)}+\underbrace{\frac{2KS}{T}+\epsilon SKT}_{(d)},
    \end{multline}\label{eqn:regret_bound_extend_glrUCB_p}
    where $\tilde{C}_{r}^{*}=8\sum_{\min_{\nu_{r-1}^{*}\leq t\leq \nu_{r}^{*}}\Delta_{k,t}>0}\frac{\log T}{\min_{\nu_{r-1}^{*}\leq t\leq \nu_{r}^{*}}\Delta_{k,t}}+\left(\frac{5}{2}+\frac{\pi^{2}}{3}+K\right)\sum^{K}_{k=1}\max_{\nu_{r-1}^{*}\leq t\leq \nu_{r}^{*}}\Delta_{k,t}$. By setting $\alpha = c\sqrt{K\log{\left(KT\right)}}$ for some constant $c$ and $\epsilon=1/\sqrt{T}$, the expected regret is upper-bounded by $\mathcal{O}(\sqrt{KST\log{T}})$.
\end{flushleft}

\begin{proof}
    We can substitute Equation~\ref{eqn:glr_dimi_delay}, and lemma~\ref{lemma:glr_fa_md} into terms (c) and (d) of Equation~\ref{eqn:regret_bound_extend2} respectively. 
    $d_{I,r}$ is the upper bound of the expected duration required for $N_{I}$ samples under the effect of decreasing exploration, and $\tau_{r}$ is the time the CD algorithm alarms after $\tau^{*}_{r-1}$ but not through the skipping mechanism. Here, we choose $N_{I}=\frac{4\log{T}}{\Delta_{\min}^{2}}$. Using lemma~\ref{lemma:samples-time}, we can obtain:
    \begin{subequations}
        \begin{align}
            d_{I,r}&\leq \frac{8\log{T}}{\Delta_{\min}^{2}}\left(\frac{K}{2\alpha}+1\right)\sqrt{\tau_{r}-\tau_{r-1}^{*}+1}+\frac{16\log^{2}{T}}{\Delta_{\min}^{4}}\left(\frac{K}{2\alpha}+1\right)^{2}\\
            &\leq \frac{8\log{T}}{\Delta_{\min}^{2}}\left(\frac{K}{2\alpha}+1\right)\sqrt{\tau_{r}-\nu_{r-1}^{*}+1}+\frac{16\log^{2}{T}}{\Delta_{\min}^{4}}\left(\frac{K}{2\alpha}+1\right)^{2}\\
            &\leq \frac{8\log{T}}{\Delta_{\min}^{2}}\left(\frac{K}{2\alpha}+1\right)\left(\sqrt{s_{r}^{*}+1}+\sqrt{s_{r+1}^{*}+1}\right)+\frac{16\log^{2}{T}}{\Delta_{\min}^{4}}\left(\frac{K}{2\alpha}+1\right)^{2}
        \end{align}
    \end{subequations}
    
\end{proof}

\section{Algorithms and Parameters Tuning}\label{app:para}
In this appendix, we provide an explanation of our parameter selection. For M-UCB, the window size $w$ is set to 200 unless otherwise specified; however, for the last data point ($M=100$) in Figure~\ref{fig:M}, we chose $w=50$ due to the limitations inherent to change detection. We compute the change detection threshold $b_{\textrm{M-UCB}} = \sqrt{w/2 \log\left(2KT^{2}\right)}$ following the original formulation in \cite{cao2019nearly}. Additionally, the uniform exploration rate $\gamma_{\textrm{M-UCB}} = \sqrt{MK \log{T}/T}$ is determined as initially stated in \cite{besson2022efficient}.Concerning CUSUM-UCB, we adhere to \cite{liu2018change} by fixing $\epsilon = 0.1$, setting the change detection threshold $b_{\textrm{CUSUM-UCB}} = \log\left(T/M-1\right)$, and establishing the uniform exploration rate $\gamma_{\textrm{CUSUM-UCB}} = \sqrt{MK \log{T}/T}$ as initially stated in \cite{besson2022efficient}. Additionally, in CUSUM-UCB, the change point detection involves averaging the first $H$ samples, where $H$ is set to 100. For GLR-UCB \cite{besson2022efficient}, we set $\gamma_{m, \textrm{GLR-UCB}} = \sqrt{mK \log{T}/T}$, where $m$ is the number of alarms. We utilize the threshold function $\beta(n, \delta) = \log{\left(n^{3/2}/\delta\right)}$ and set $\delta = 1/\sqrt{T}$. In our setup, for both the diminishing versions of M-UCB and CUSUM-UCB, we follow the parameter selection approach described earlier, except for the choice of the exploration rate. In this context, we opt for $\alpha=1$. 

\section{Additional Related Work}
\textbf{{Structured Non-Stationary Bandits}}.
Another related line of research works is non-stationary bandit with structured reward changes, which are typically motivated by the dynamic behavior of real-world applications. 
For example, \citet{heidari2016tight} proposes the \textit{Rising bandit} problem, where the rewards are assumed to be a non-decreasing and concave function of the current time index and the number of pulls.
Subsequently, this model is extended to the stochastic setting by \citep{metelli2022stochastic}.
Another related setting is the \textit{Rotting bandit} \citep{levine2017rotting}, where the expected reward is a non-increasing function of the number of pulls.
Moreover, \citet{zhou2021regime} studies the regime switching bandit, where the rewards are jointly controlled by an underlying finite-state Markov chain.
However, the algorithms tailored to the above customized formulations are not directly applicable to the general piecewise-stationary MAB problem.

\textbf{{Bandit Quickest Change Detection}.}
Since the seminal works \citep{page1954continuous,lorden1971procedures}, the quickest change detection (QCD) problem, which involves identifying the change of distribution at an unknown time with minimal delay, has been a well studied detection problem of stochastic processes \citep{veeravalli2014quickest}. 
Bandit QCD, a variant of QCD problem recently proposed by \citep{gopalan2021bandit}, adds another layer of complexity to the conventional QCD by considering bandit feedback. 
A concurrent work \citep{xu2021optimum} also studies a similar setting, namely multi-stream QCD under sampling control, and proposes a myopic sampling policy that achieves a second-order asymptotically optimal detection delay.
Despite the above bandit QCD methods focusing mainly on achieving low detection delay rather than characterizing regret bounds, these recent progress could nicely complement the studies of piecewise stationary bandits.
\section{Additional Simulations}\label{app:sim}


{\bf Regret Scaling in $K$.} We considered an environment with $T=20000$ and $M=5$ for various $K$, aiming to showcase dynamic regrets versus different $K$. In this experiment, expected rewards are generated randomly. Specifically, we randomly generated $5$ instances, averaging each instance over 50 times. Figure~\ref{fig:K} demonstrates that our method is not limited to working only in simple environments with small $K$ values but is adaptable to a broader range of scenarios.

{\bf Comepare to AdSwitch, ArmSwitch and Meta Algorithm.} These algorithms are indeed computationally quite complex, evidenced by the time complexity of $O(KT^4)$ in \cite{auer2019adaptively}, that of $O(K^2T^2)$ in \cite{abbasi2023new}, and the fact that recursive calls of the base algorithm are needed by the algorithm in \cite{suk2023tracking}. Besides, while being able to achieve near-optimal regret bound asymptotically, these elimination-based algorithms generally would not perform well when the time horizon $T$ is small, shown as Figure~\ref{fig:AxSwitch} and~\ref{fig:Meta}. Figure~\ref{fig:skip} compares our extension, which incorporates a skipping mechanism, with ArmSwitch, where the latter focuses on tracking the most significant arm switches. For our setup, we defined $\mu^{(i)}_{1}=0.8, 0.2$ for $i$ where $(i+1)\bmod 4=\left\{2,3\right\}, \left\{0,1\right\}$, and $\mu^{(i)}_{2}=0.4, 0.6$ for $i$ where $(i+2)\bmod 2=0,1$, as well as $\mu^{(i)}_{2}=0.4, 0.6$ for $i$ where $(i+3)\bmod 2=0,1$. In our parameter settings, we set $N_{I}=50$ and $\alpha=1$. The results clearly show that our performance significantly exceeds that of the ArmSwitch. The above discussion precisely constitutes the main reason we did not include these algorithms in our experimental comparison, as such algorithms with $T=20000\sim 100000$ would take too long to finish while showing results for small $T$ may look unfair for those excellent algorithms.

\begin{figure}
\centering
\begin{subfigure}{0.85\textwidth}
    \includegraphics[width=\textwidth]{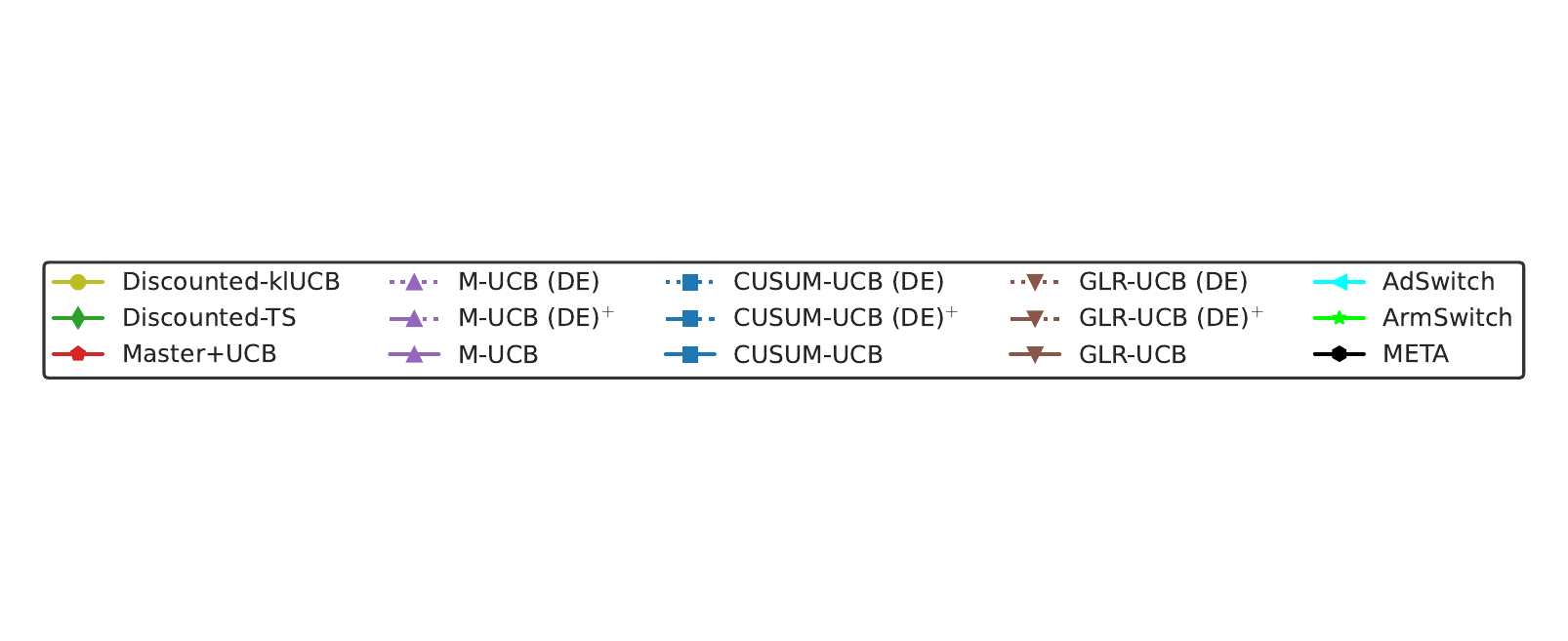}
    \vspace{-2cm}

\end{subfigure}
\begin{subfigure}{0.25\textwidth}
    \includegraphics[width=\textwidth]{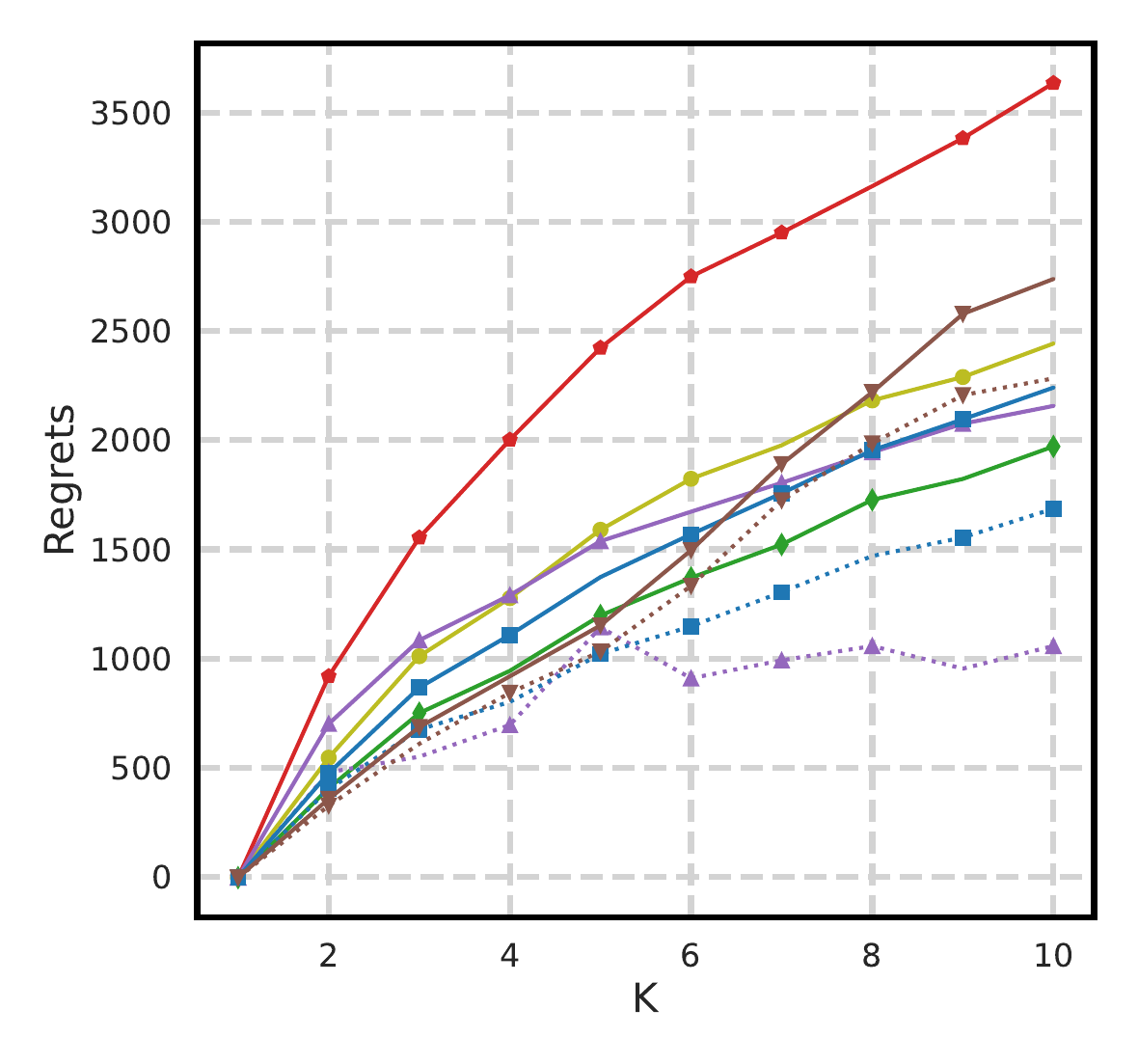}
    \vspace{-15pt}
    \caption{Scaling in $K$.}
    \label{fig:K}
\end{subfigure}
\hspace{-20pt}
\hfill
\begin{subfigure}{0.25\textwidth}
    \includegraphics[width=\textwidth]{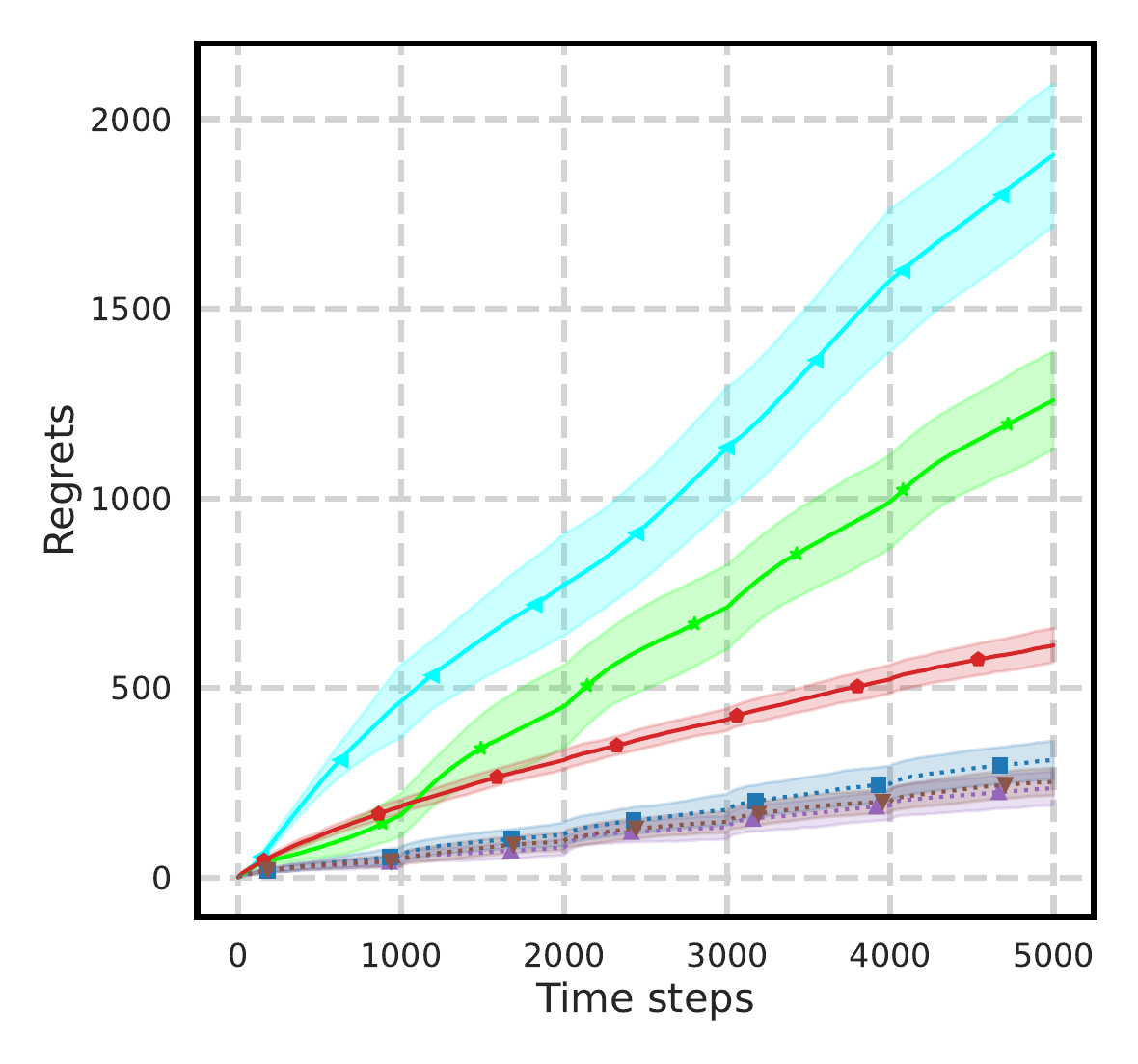}
    \vspace{-15pt}
    \caption{Compare to AdSwitch and ArmSwitch.}
    \vspace{-10pt}
    \label{fig:AxSwitch}
\end{subfigure}
\hspace{-20pt}
\hfill
\begin{subfigure}{0.25\textwidth}
    \includegraphics[width=\textwidth]{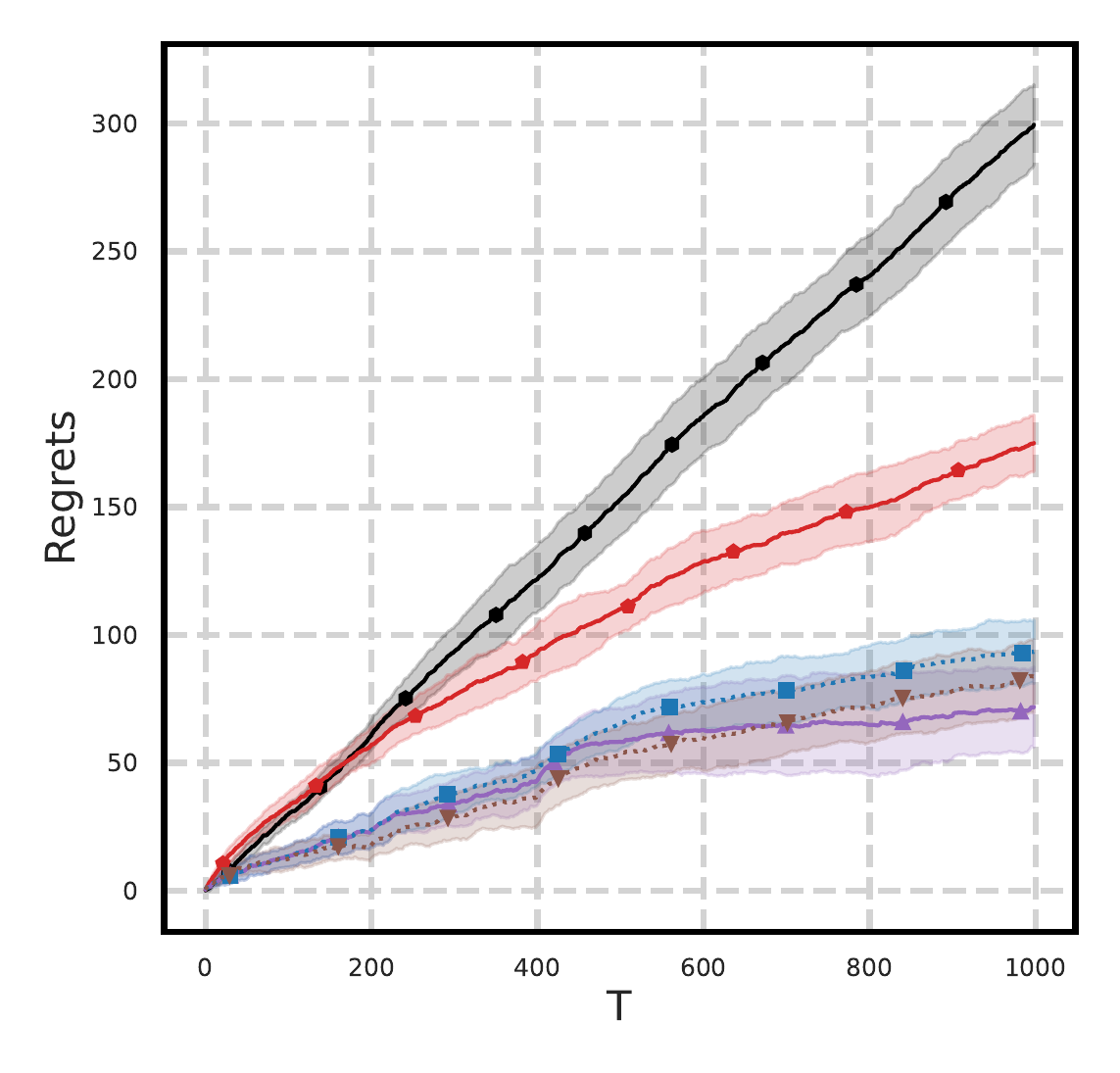}
    \vspace{-15pt}
    \caption{Compare to Meta.}
    \label{fig:Meta}
\end{subfigure}
\hspace{-20pt}
\hfill
\begin{subfigure}{0.25\textwidth}
    \includegraphics[width=\textwidth]{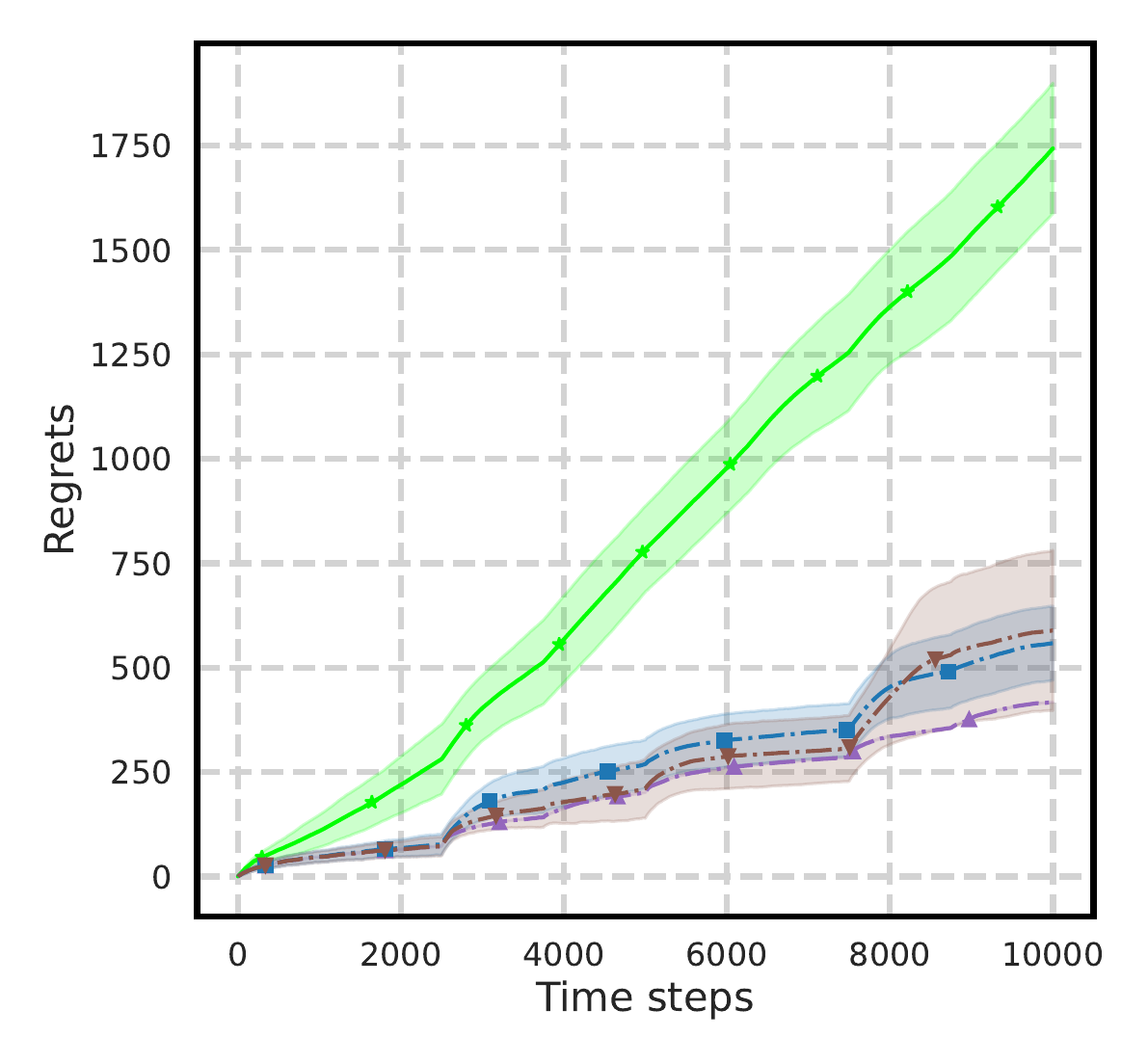}
    \vspace{-15pt}
    \caption{Add skipping mechanism.}
    \vspace{-10pt}
    \label{fig:skip}
\end{subfigure}
\hspace{-20pt}
\caption{Regret in synthetic environment and yahoo data set.}
\end{figure}



\end{document}